\documentclass{article}

\usepackage{microtype}
\usepackage{graphicx}
\usepackage{subcaption}
\usepackage{enumitem}
\usepackage{booktabs} 

\usepackage{hyperref}

\usepackage[accepted]{icml2026}

\usepackage{amsmath}
\usepackage{amssymb}
\usepackage{mathtools}
\usepackage{amsthm}
\usepackage[capitalize,noabbrev]{cleveref}
\theoremstyle{plain}
\newtheorem{theorem}{Theorem}[section]

\newtheorem{lemma}[theorem]{Lemma}

\theoremstyle{definition}

\newtheorem{assumption}[theorem]{Assumption}
\theoremstyle{remark}

\usepackage[textsize=tiny]{todonotes}
\usepackage{multirow}

\icmltitlerunning{Uncovering the Latent Potential of Deep Intermediate Representations}

\begin{document}

\twocolumn[
  \icmltitle{Uncovering the Latent Potential of Deep Intermediate Representations}

  \icmlsetsymbol{equal}{*}

  \begin{icmlauthorlist}
    \icmlauthor{Arnesh Batra}{sbilab}
    \icmlauthor{Arush Gumber}{equal,sbilab}
    \icmlauthor{Aniket Khandelwal}{equal,sbilab}
    \icmlauthor{Jashn Khemani}{sbilab}
    \icmlauthor{Anubha Gupta}{sbilab}
  \end{icmlauthorlist}

  \icmlaffiliation{sbilab}{SBILab, Indraprastha Institute of Information Technology Delhi, Delhi, India}

  \icmlcorrespondingauthor{Arnesh Batra}{arnesh23129@iiitd.ac.in}

  \icmlkeywords{representation learning, layer selection, linear probes, embedding geometry, transfer learning, deep encoders, foundation models}

  \vskip 0.3in
]



\printAffiliationsAndNotice{}  

\begin{abstract}
Foundational Models pretrained on huge amount of data learn representations that evolve across depth, forming a hierarchy of embeddings with distinct semantic content and geometric structure. Contrary to the widespread practice of using only the final layer or shallow mixtures, we show that task-relevant information is distributed non-monotonically across layers and cannot be recovered by na\"ive aggregation. Through a geometric and empirical study across multiple modalities, we show that effective transfer depends on identifying which layers encode task-discriminative structure and how their embeddings are geometrically organized. We introduce Layer-wise Optimal Embedding Selection (LOES), a constructive spectral method that identifies task-discriminative subspaces by minimizing residual error under orthogonality and isotropy constraints. To align fine-tuning with this selection principle, we further propose Geometric Regularization Loss (GeoReg), which enforces a simplicial structure on class manifolds and stabilizes representation geometry during fine-tuning. Across a wide range of architectures, depths, modalities, and data regimes, LOES consistently outperforms standard baselines, with gains that grow as model depth increases. Beyond accuracy, our method reveals how semantic factors are distributed across layers, thereby enabling cross-lingual and cross-modal interpretability analyses. Together, our results provide strong evidence that layerwise embedding geometry is not incidental but central to how deep models represent and transfer knowledge.

\end{abstract}

\begin{figure}[!t]
    \centering
    \includegraphics[scale=0.5, trim=0 15 0 0]{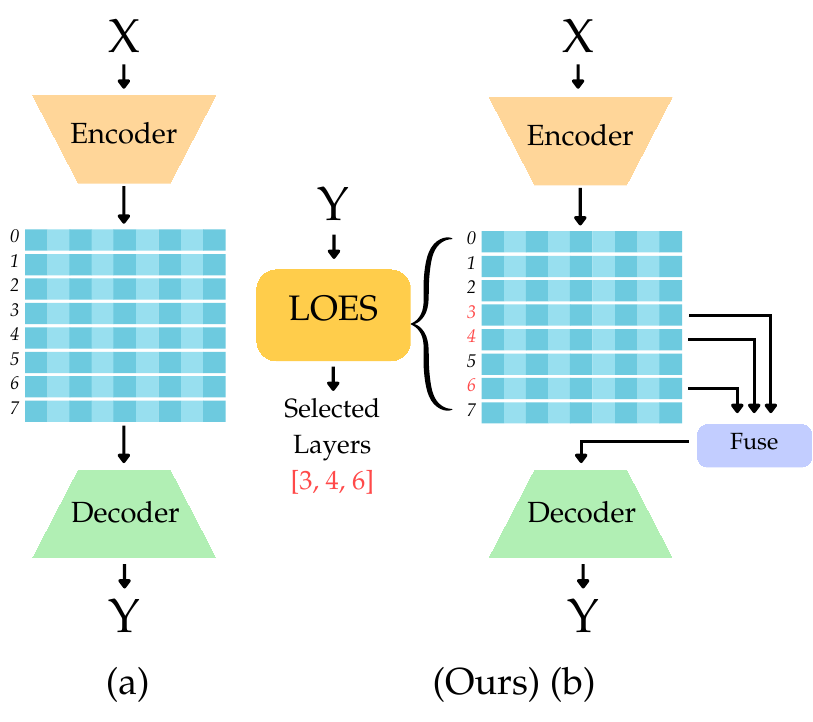}
    \caption{\small Standard vs. LOES-based transfer learning: (a) conventional transfer learning uses a single encoder layer, typically the final layer, for downstream prediction, whereas (b) LOES selects and fuses multiple task-relevant layers from the encoder hierarchy using target supervision, enabling transfer that exploits complementary information across layers.}
    \label{fig:loes_arch_diag}
    \vspace{-1em}
\end{figure}

\section{Introduction}
While foundational models \cite{baevski2020wav2vec,devlin2019bert,girdhar2023imagebind,oquab2023dinov2,radford2021learning} define state-of-the-art transfer learning, standard adaptation protocols typically rely on the final layer output under the assumption that semantic utility increases monotonically with depth. Recent empirical analyses challenge this view, demonstrating that intermediate layers \cite{skean2025layer, park2024geometry,alain2016understanding} frequently outperform the final representations by up to 10\% in downstream tasks. This phenomenon occurs because the final layers often specialize overly to pretraining objectives like next-token prediction or suffer from representation collapse. In autoregressive language models specifically, a ``compression valley" \cite{queipo2025attention, li2025tracing} emerges in mid-depth layers where the network optimally balances signal retention with noise suppression. Similar patterns are observed in vision transformers \cite{dosovitskiy2020image}, suggesting this to be a domain-general property driven by optimization dynamics rather than data modality.

To effectively recover this information, we argue that one must identify the optimal task-specific subspace formed by the top-k discriminative layers regardless of their depth. To this end, we introduce \textit{Layer-wise Optimal Embedding Selection (LOES)}, a constructive spectral framework for subspace approximation. LOES (Figure~\ref{fig:loes_arch_diag}) formulates layer selection as a ridge-residual optimization problem constrained by orthogonality and isotropy. By analyzing the eigenspectrum of layer-wise representations, our method systematically identifies orthogonal combinations of embeddings that maximize separability. To prevent feature collapse during the subsequent adaptation, we complement this selection with \textit{Geometric Regularization} (\textit{GeoReg}), which is an auxiliary loss that enforces a simplicial topology on the class manifolds. Our contributions are as follows: 
\begin{itemize}[nosep]
\setlength{\parskip}{0pt}
\item We demonstrate that the optimal representations for downstream tasks are rarely the final layer alone but rather a task-specific subspace formed by a subset of intermediate layers. 
\setlength{\itemsep}{0pt}
\item We propose \textit{LOES}, a spectral algorithm that identifies optimal layer combinations by minimizing residual error under geometric constraints. This method outperforms learnable weighting baselines by explicitly constructing subspaces with high effective rank and low anisotropy.

\item  By analyzing the layer selection preferences of \textit{LOES}, we provide a new lens for interpretability that reveals exactly where different foundational models encode specific semantic or structural attributes.

\item Our analysis demonstrates that performance gains scale with model depth, and that these effects are modality- and data-agnostic working across a variety of foundational model architectures and pretraining techniques.

\item We also introduce \textit{GeoReg}, a loss function that improves stability in fine-tuning settings, mitigating feature collapse often observed in large-scale models.

\end{itemize}

\section{Related Works}
\subsection{Suboptimality of Final Layer Embeddings}
Research across modalities suggests that final-layer embeddings are not universally optimal. In LLMs, ``Concept Depth" indicates that while shallow layers handle simpler tasks, deeper layers are required for complex abstractions \cite{jin2025exploringconceptdepthlarge}. 
Studies have shown that mid-depth layers often encode robust features, challenging the usual emphasis on final layer representations \cite{lad2024remarkable,fan2024not}.
Similarly, in vision and multi-modal encoders, final-layer features are often suboptimal for downstream tasks when the model is trained on proxy or self-supervised objectives. Studies using image colorization as a pretraining task \cite{zhang2016colorful,zheng2025advancing} found middle layers more effective for classification. This trend persists in autoregressive models like iGPT \cite{chen2020generative} and AIMv1 \cite{el2024scalable}, as well as video models like Toto \cite{rajasegaran2025empirical}, where intermediate layers excel in classification, tracking, and robotics.

\subsection{Interpretability and Representational Dynamics}
Researchers have introduced linear probes to monitor layer-wise classification suitability, noting monotonic increases in separability with depth \cite{alain2016understanding}. Techniques such as SVCCA \cite{raghu2017svcca} enable cross-layer comparisons, showing that networks typically converge bottom-up. While BERT’s \cite{devlin2019bert} hierarchy often mirrors classical NLP pipelines \cite{de2020s}. Recent spectral analyses identify three distinct geometric phases: warmup, entropy-seeking, and compression that define representational evolution during pretraining \cite{li2025tracing}. Saponati et al. \cite{saponati2025underlying} present a theoretical analysis of how different pretext tasks, such as next-token prediction and masked language modeling, influence the structure of learned representations.

\subsection{Performance Metrics for Intermediate Representations}
Metrics like RankMe \cite{garrido2023rankme} uses effective rank as an unsupervised performance indicator, while LevyScore \cite{maeslevyscore} assesses confidence via pretraining density deviations. Layer by Layer \cite{skean2025layer} presents an information-theoretic and geometric framework which explains why mid-depth representations often outperform final layers, a property leveraged by Perception Encoder \cite{Bolyaetal2025} through specialized intermediate alignment.

Motivated by closed-form optimization approaches \cite{bertinetto2018meta} and recent findings suggesting that embeddings approaching an isotropic Gaussian distribution yield lower downstream prediction risk \cite{balestriero2025lejepa}, our algorithm aims to select the most effective layers for a downstream task mainly by leveraging closed-form ridge regression to predict residuals and encouraging isotropy in the resulting representations.

\section{Methodology}
We propose a unified framework that challenges the assumption of a single task-optimal layer by explicitly modeling the encoder as a hierarchy of candidate representation subspaces. Our approach consists of two coupled components: (1) a Layer-wise Optimal Embedding Selection (LOES)  algorithm (Algorithm~\ref{alg:loes_selection}) that approximates the task-optimal subspace via spectral-ridge minimization, and (2) Geometric Regularization (GeoReg) as an auxiliary objective that aligns the training dynamics with the topological requirements of LOES while fine-tuning.

\subsection{Preliminaries and Problem Formulation}
Consider a foundational encoder $f_\theta(\cdot)$ that maps an input $x$ to a sequence of $L$ hierarchical representations $\mathcal{H} = \{h^{(1)}, h^{(2)}, \dots, h^{(L)}\}$, where $h^{(\ell)} \in \mathbb{R}^{d_\ell}$. Standard transfer learning typically utilizes a projection on the final layer $\hat{y} = W h^{(L)}$, or a learned scalar combination $\hat{y} = W (\sum_{\ell} \alpha_\ell h^{(\ell)})$ \cite{yang2025segdino, de2020s, peters-etal-2018-deep,chiu2024learnable,alain2016understanding}.
Standard transfer learning restricts adaptation to the final-layer basis, assuming that downstream risk decreases monotonically with depth. This assumption is violated when task-discriminative information is distributed non-linearly across layers \cite{raghu2017svcca,kornblith2019similarity}. We therefore define the task-optimal representation as a constructive latent subspace $S \subset \mathrm{span}(\mathcal{H})$, formed by orthogonal selection of informative features across the encoder hierarchy~\cite{saxe2013exact}. This enables selective use of final-layer representations while integrating complementary intermediate features, yielding a geometry that minimizes empirical risk and regularizes against representation collapse \cite{andriopoulos2024prevalence}.
While we have primarily focused on classification and regression tasks along with their derivatives such as semantic segmentation, the LOES framework is inherently task-agnostic. By altering the target label space Y, the selection mechanism can be seamlessly adapted for complex multimodal objectives like Visual Question Answering (VQA) or sequence-to-sequence generation, where informative signals are often sparsely distributed across the encoder depth.

\subsection{Layer-wise Optimal Embedding Selection (LOES)}
\label{sec:loes}

We propose a Layer-wise Optimal Embedding Selection (LOES) framework that works on layerwise embeddings and constructs a compact, task-discriminative subspace by iteratively selecting non-redundant layers whose embeddings explain residual task signal as well as exhibit geometric properties that promote stable probing and fine-tuning.
LOES proceeds in two distinct stages. \textbf{(i) Initialization:} the first layer is selected by fitting a ridge probe of the \emph{raw} features $\mathbf{X}_\ell$ against the original targets $\mathbf{Y}$ and minimizing a simplified score that combines fit loss with an isotropy term. \textbf{(ii) Iterative selection:} given a current selected set $S$ and cumulative prediction $\widehat{\mathbf{Y}}$, each remaining layer is orthogonalized against $\mathrm{span}(\mathbf{X}_S)$ (Eq.~\ref{eq:orthog}), scored against the current residual $\mathbf{R} = \mathbf{Y} - \widehat{\mathbf{Y}}$ using a composite objective (Eq.~\ref{eq:score}), and the best candidate is then refit on its \emph{raw} features against $\mathbf{Y}$ and accumulated into $\widehat{\mathbf{Y}}$. The procedure stops when $|S| = K$.

Let $\{(\mathbf{x}_i,y_i)\}_{i=1}^N$ be a calibration set (sampled from the training set). For an encoder with $L$ layers, $\mathbf{X}_\ell\in\mathbb{R}^{N\times d_\ell}$ denotes the  feature matrix for \textit{N} samples   extracted at layer $\ell$. For classification into $C$ classes, we use one-hot encoded target values  $\mathbf{Y}\in\mathbb{R}^{N\times C}$. All feature matrices are column-centered. LOES measures the \emph{linear accessibility} of task information in a layer via closed-form Tikhonov-regularized regression \cite{hoerl1970ridge}. Given features $\mathbf{X}\in\mathbb{R}^{N\times d}$ and targets $\mathbf{Y}\in\mathbb{R}^{N\times C}$, we consider the objective
\begin{equation}
\label{eq:ridge}
\mathbf{W}^\star = \arg\min_{\mathbf{W}\in\mathbb{R}^{d\times C}} \|\mathbf{X}\mathbf{W}-\mathbf{Y}\|_F^2 + \lambda\|\mathbf{W}\|_F^2,
\end{equation}
where $\lambda>0$ is the Tikhonov (ridge) regularizer. This yields a closed-form solution
\begin{equation}
\label{eq:ridge-solution}
\mathbf{W}^\star = (\mathbf{X}^\top\mathbf{X}+\lambda\mathbf{I}_d)^{-1}\mathbf{X}^\top\mathbf{Y}.
\vspace{-0.5em}
\end{equation}
Treating classification as multi-output regression with one-hot encoded targets allows a ridge probe to produce linear class scores, with prediction given by the maximum score across classes. This formulation directly measures the linear accessibility of class information, independent of any specific classifier or loss.

Ridge regularization is essential in deep feature spaces. Encoder embeddings are often anisotropic or effectively low-rank \cite{li2026task, godey2024anisotropy, huh2103low}, making the unregularized normal equations ill-conditioned and leading to unstable, high-variance probes. Ridge regularization stabilizes the inversion in \eqref{eq:ridge-solution}, controls the effective degrees of freedom of the fitted predictor, and yields numerically stable scores that are comparable across layers and models. When the feature dimension exceeds the calibration set size, we compute the exact solution efficiently using the Woodbury identity \cite{hager1989updating}. Implementation details are provided in the Appendix.

To avoid selecting redundant layers, LOES evaluates candidate layers after removing components already explained by the currently selected subspace. If $S$ denotes the indices of selected layers and $\mathbf{X}_S=[\mathbf{X}_{j_1}\;\mathbf{X}_{j_2}\;\cdots]\in\mathbb{R}^{N\times D_S}$ denotes their concatenation, then for a candidate layer $\ell\notin S$ we compute ridge-regularized orthogonalized features as
\begin{equation}
\label{eq:orthog}
\widetilde{\mathbf{X}}_\ell = \mathbf{X}_\ell - \mathbf{X}_S(\mathbf{X}_S^\top\mathbf{X}_S+\varepsilon\mathbf{I}_{D_S})^{-1}\mathbf{X}_S^\top\mathbf{X}_\ell.
\end{equation}
$\widetilde{\mathbf{X}}_\ell$ is the minimum-norm residual of $\mathbf{X}_\ell$ with respect to the span of $\mathbf{X}_S$ in the ridge sense and $\varepsilon>0$ is a small value for numerical stability. Orthogonalization ensures that the fit term in our composite score measures only the marginal explanatory power of layer $\ell$ beyond $S$; a separate redundancy term (introduced below) acts on the raw features to filter globally similar candidates before they enter the score.

Selection in LOES balances residual reduction with desired geometric properties that improves probe stability and downstream separability. We quantify three geometric diagnostics \cite{bihani2021low, kudrjashov2024shrink}. The first is an isotropy score computed from the empirical covariance $\mathbf{\Sigma}_\ell=\tfrac{1}{N}\mathbf{X}_\ell^\top\mathbf{X}_\ell$. Let $\{\mu_j\}_{j=1}^{d_\ell}$ be the eigenvalues of $\mathbf{\Sigma}_\ell$ and $\overline{\mu}=\tfrac{1}{d_\ell}\sum_j\mu_j$. The isotropy score is
\begin{equation}
\vspace{-0.5em}
\label{eq:isotropy}
\mathrm{Iso}(\mathbf{X}_\ell) = \frac{\overline{\mu}}{\sqrt{\operatorname{Var}(\{\mu_j\})+\delta}},
\end{equation}
with small $\delta>0$ for numerical stability. High isotropy indicates a concentrated spectrum and a well-conditioned embedding. Such embeddings yield lower-variance ridge probes, more stable regression mappings, and typically smoother downstream optimization trajectories (Figure~\ref{fig:rank}). The second diagnostic $\mathrm{Red}_\ell$ complements Eq.~\eqref{eq:orthog} rather than duplicating it: the two mechanisms act at different stages and on different objects. Eq.~\eqref{eq:orthog} produces the geometric residual $\widetilde{\mathbf{X}}_\ell$ used \emph{inside} the fit loss, whereas $\mathrm{Red}_\ell$ acts on the \emph{raw} features through a normalized Frobenius inner product between column spaces,
\begin{equation}
\label{eq:redundancy}
\mathrm{Red}_\ell = \max_{j\in S}\frac{\|\mathbf{X}_\ell^\top\mathbf{X}_j\|_F}{\|\mathbf{X}_\ell\|_F\ \|\mathbf{X}_j\|_F},
\end{equation}
measuring global alignment with each selected layer before any projection, so that high values indicate that the candidate largely re-expresses already captured directions. In short, Eq.~\eqref{eq:orthog} controls \emph{what enters} the fit loss, while Eq.~\eqref{eq:redundancy} controls \emph{which candidates are admitted}. The third term is an optional term only for classification, for regression and dense prediction we set it to 0, Here, on the orthogonalized features $\widetilde{\mathbf{X}}_\ell$, we compute class centroids and estimate average triangle area among random triplets of centroids, 
\begin{equation}
\label{eq:tri}
\mathrm{Tri}(\widetilde{\mathbf{X}}_\ell) \approx \mathbb{E}_{(a,b,c)}\bigl[\tfrac{1}{2}\sqrt{\|b-a\|^2\|c-a\|^2 - \langle b-a,c-a\rangle^2}\,\bigr].
\end{equation}
This term (\ref{eq:tri}) indicates that class centroids span a higher-volume simplex rather than lying near a low-dimensional line, which correlates with linear separability and robustness to perturbations. Let $\widehat{\mathbf{Y}}$ denote the cumulative ridge prediction from layers in $S$ (initialized at $\mathbf{0}$) and $\mathbf{R} = \mathbf{Y} - \widehat{\mathbf{Y}}$ the current residual. LOES fits a ridge probe on each orthogonalized candidate $(\widetilde{\mathbf{X}}_\ell,\mathbf{R})$ and computes the mean squared error loss, $\mathrm{Loss}_\ell$, which thus measures only the \emph{marginal} contribution of layer $\ell$ beyond $S$. Candidates are scored by the composite objective
\vspace{-0.5em}
\begin{equation}
\label{eq:score}
\mathrm{Score}(\ell) = \mathrm{Loss}_\ell + \alpha\bigl(1-\mathrm{Iso}(\mathbf{X}_\ell)\bigr) + \gamma\,\mathrm{Red}_\ell - \eta \,\mathrm{Tri}(\widetilde{\mathbf{X}}_\ell),
\end{equation}
with nonnegative trade-off parameters $\alpha,\gamma,\eta$. Note that $\mathrm{Iso}$ and $\mathrm{Red}_\ell$ are computed on the raw features $\mathbf{X}_\ell$ (intrinsic geometry, independent of $S$), while $\mathrm{Loss}_\ell$ and $\mathrm{Tri}$ are computed on the orthogonalized residual $\widetilde{\mathbf{X}}_\ell$ (marginal contribution beyond $S$).
Upon selecting the layer $\ell^\star$ that minimizes this score, we refit a ridge probe on the \emph{raw} features $\mathbf{X}_{\ell^\star}$ against the original targets $\mathbf{Y}$ (not against $\mathbf{R}$).
The resulting prediction is accumulated to the ensemble:
\[
\widehat{\mathbf{Y}} \;\leftarrow\; \widehat{\mathbf{Y}} + \mathbf{X}_{\ell^\star}\mathbf{W}_{\ell^\star},
\]
after which the residual $\mathbf{R} = \mathbf{Y} - \widehat{\mathbf{Y}}$ is recomputed, the layer is appended to the selected set $S$, and the procedure is repeated until the layer budget $K$ is reached. This scoring/refit asymmetry is deliberate: scoring uses $(\widetilde{\mathbf{X}}_\ell, \mathbf{R})$ so candidates are ranked by marginal contribution, while refitting uses $(\mathbf{X}_{\ell^\star}, \mathbf{Y})$ so the downstream probe operates on unmodified encoder features at inference.

In practice LOES is applied on a modest calibration budget $N_{\mathrm{cal}}\ll$ full dataset size. The isotropy term improves conditioning of the probe and reduces variance in the fitted predictor;, while the redundancy penalization avoids selecting layers that merely repackage existing directions.
These geometric elements consistently improve stability and generalization over residual-only selection in our ablations.

\begin{figure}
    \centering
    \includegraphics[width=0.8\linewidth]{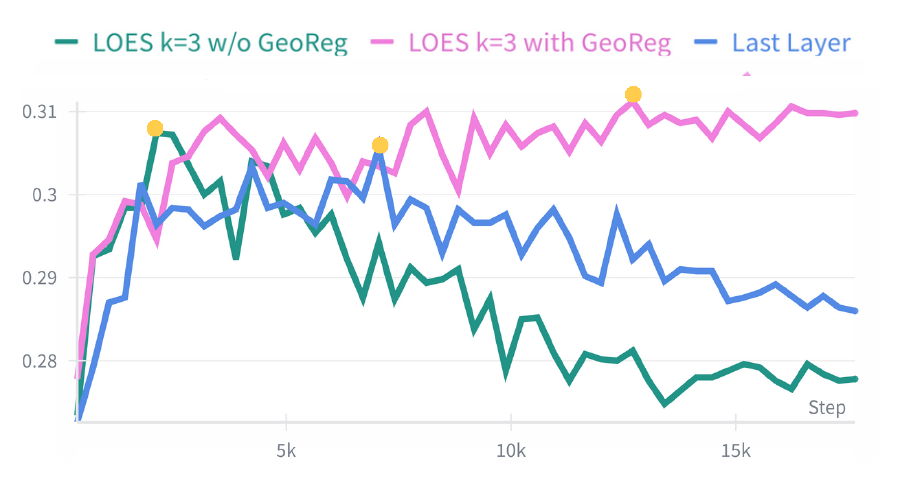}
    \caption{\textbf{GeoReg prevents representation collapse during fine-tuning.} Validation accuracy with trainable BERT Base \cite{devlin2019bert} on TweetEval - Emoji \cite{barbieri2020tweeteval, barbieri2018semeval} Dataset. Without GeoReg (green), accuracy degrades after ${\sim}10$k steps despite initial gains. GeoReg (magenta) maintains stable performance. The last-layer baseline (blue) exhibits similar collapse. Dots indicate best checkpoint.}
    \label{fig:georeg}
    \vspace{-0.5em}
\end{figure}

Furthermore, we provide a theoretical analysis (Appendix~\ref{sec:theory}) demonstrating that our isotropy maximization objective guarantees the selection of features that minimize the worst-case parameter estimation error.

\subsection{Geometric Regularization (GeoReg)}
\label{sec:georeg}

LOES selects layers whose embeddings exhibit favorable geometry: high isotropy and well-separated class centroids. However, when the encoder is fine-tuned, gradient updates can degrade these properties, causing the fused representation to collapse into a low-dimensional subspace or class centroids to converge. This phenomenon, termed representation collapse, is well documented in self-supervised learning~\cite{bardes2021vicreg, balestriero2025lejepa}, where methods such as VICReg and LeJEPA address it through variance-covariance regularization during \emph{pretraining}. Related geometric losses that enforce inter-class orthogonality have also been explored for metric learning~\cite{lezama2018ole}. GeoReg applies analogous principles at \emph{transfer time}, preserving the geometric structure that motivated layer selection in the first place. Let $\{\mathbf{z}_i\}_{i=1}^B$ denote the fused embeddings for a minibatch, obtained by concatenating adapter (\ref{sec:adaptors_and_probes}) outputs from LOES-selected layers. GeoReg regularizes two complementary aspects of this representation. The first term penalizes \emph{spectral imbalance} in the embedding covariance. Let $\mathbf{\Sigma} = \frac{1}{B}\sum_i (\mathbf{z}_i - \bar{\mathbf{z}})(\mathbf{z}_i - \bar{\mathbf{z}})^\top$ with eigenvalues $\{\lambda_j\}$. We define
\begin{equation}
\mathcal{L}_{\mathrm{iso}} = \mathrm{Var}(\{\lambda_j\}),
\end{equation}
which is minimized when eigenvalues are uniform, corresponding to isotropic utilization of the representational capacity. The second term encourages \emph{class separation} using the same geometric criterion as in LOES layer scoring (Eq.~\ref{eq:tri}): we compute class centroids $\{\boldsymbol{\mu}_c\}$ from the fused embeddings within each minibatch and penalize configurations where centroids span low volume, indicating collapse toward a degenerate subspace. The combined objective is
\begin{equation}
\mathcal{L}_{\mathrm{GeoReg}} = \lambda_{\mathrm{geo}} \left( \mathcal{L}_{\mathrm{iso}} - \log(A + \epsilon) \right),
\end{equation}
where $A$ denotes the centroid triangle area and $\epsilon$ is very small ensuring numerical stability of the solution.

Unlike VICReg and LeJEPA, which regularize encoder representations during pretraining, GeoReg operates on the \emph{fused} multi-layer embeddings during downstream adaptation. This placement is deliberate: the geometry exploited by LOES must be preserved under competing fine-tuning gradients. As shown in Figure~\ref{fig:georeg}, without GeoReg, validation accuracy degrades after initial gains when the encoder is trainable, a signature of representational collapse. GeoReg stabilizes the optimization trajectory throughout the training. When the encoder is frozen, GeoReg has no effect since the embedding geometry remains fixed, confirming that its benefit arises specifically from constraining how fine-tuning reshapes the representation manifold.



\begin{figure}
    \centering
    \includegraphics[scale=0.2]{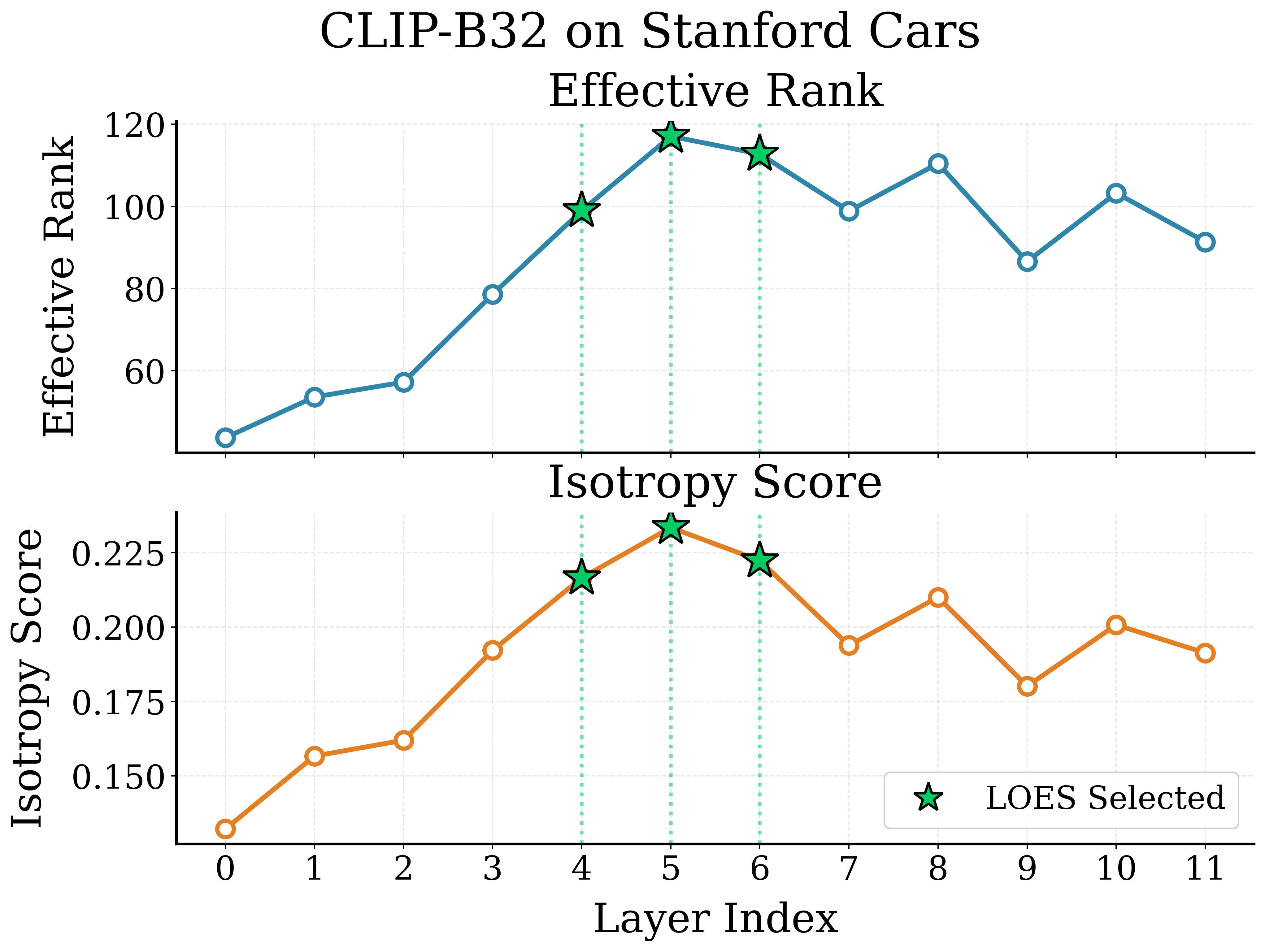}
    \caption{Layer-wise representation geometry for CLIP-B/32 on Stanford Cars. Effective rank (top; higher means more dimensions contribute) and isotropy score (bottom; higher means a flatter covariance eigenspectrum) peak in mid layers. Stars mark LOES-selected layers, which align with high-rank, near-isotropic representations.
    }
    \label{fig:rank}
    \vspace{-1em}
\end{figure}

\section{Computational Efficiency}
\label{sec:efficiency}

Table~\ref{tab:efficiency} shows that LOES introduces minimal computational overhead across both image and text benchmarks. In terms of time complexity, LOES adds a one-time calibration and layer-selection cost that scales linearly with the number of encoder layers and calibration samples and does not scale with training epochs. Exact time complexity and pseudocode are provided in the Appendix. During training, LOES differs from last-layer transfer only by fusing a small number of intermediate representations, while the dominant cost remains that of the encoder's forward pass, which is identical for both methods. The parameter overhead introduced by LOES arises solely from the linear probe operating on the fused representation. For a representative 100M-parameter frozen backbone, the probe adds fewer than 1M parameters, corresponding to an increase of under 1\% in total parameter count.
\begin{table}[!t]
\centering
\footnotesize
\setlength{\tabcolsep}{4pt}
\begin{tabular}{lcc}
\toprule
\textbf{Dataset} & \textbf{Last Layer (s)} & \textbf{LOES $k{=}3$ (s)} \\
\midrule
\multicolumn{3}{c}{\textit{Image Classification (DINOv2-S/14)}} \\
\midrule
SUN397  & 2259 & 2307 \\
Mini-ImageNet  & 1413 & 1418 \\
Stanford Cars & 459 & 473 \\
\midrule
\multicolumn{3}{c}{\textit{Text Classification ( ModernBERT)}} \\
\midrule
Toxic Conversations 50K & 7349 & 7353 \\
Emotion & 718 & 728 \\
MTOP Domain & 529 & 660 \\
\bottomrule
\end{tabular}
\caption{End-to-end wall-clock time in seconds over 15 epochs for last-layer transfer and LOES.}
\label{tab:efficiency}
    \vspace{-2.5em}
\end{table}
Empirically, this efficiency is seen in wall-clock results, with LOES remaining within a few percent of the last-layer baseline across datasets and model depths, including ModernBERT and task-adapted variants for segmentation and regression. Overall, LOES is computationally efficient and a drop-in replacement for last-layer transfer.

\begin{table*}[t]
\centering
\small
\setlength{\tabcolsep}{5pt}
\renewcommand{\arraystretch}{1.15}
\begin{tabular}{llccc}
\toprule
\textbf{Model} & \textbf{Pretraining Paradigm} & \textbf{Selected Layers} & \textbf{Layer Pattern} & \textbf{Avg. Rel. Gain (\%)} \\
\midrule
\multicolumn{5}{c}{\textit{Vision Encoders}} \\
\midrule
CLIP-B/32 & Contrastive Vision-Language & [5, 6, 4], [8, 9, 10] & Task-dependent & +7.8 \\
MAE-B/16 & Masked Image Reconstruction & [11, 10, 9] & Final layers & +44.3 \\
DINOv2-S & Self-distillation & [11, 10, 7], [11, 6, 7] & Task-dependent & +7.9 \\
DINOv3-S/16 & Self-distillation & [11, 10, 9], [11, 10, 4] & Task-dependent & +5.9 \\
DeiT-B/16 & Supervised Distillation & [8, 9, 7], [9, 10, 8] & Mid-to-late & +12.9 \\
ViT-IN21k-B/16 & Supervised Classification & [10, 9, 8], [10, 11, 9] & Late layers & +10.1 \\
\midrule
\multicolumn{5}{c}{\textit{Language Encoders}} \\
\midrule
BERT-base (12L) & Masked Language Modeling & [11, 10, 6, 7], [7, 10, 6, 9] & Late + mid & +2.5 \\
ModernBERT (22L) & Masked Language Modeling & [3, 21, 1, 5], [1, 21, 14, 5] & Task-dependent & +17.7 \\
\midrule
\multicolumn{5}{c}{\textit{Speech Encoders}} \\
\midrule
Wav2Vec 2.0 & Self-supervised Temporal & [3, 0, 1, 2], [7, 6, 8, 5] & Task-dependent & +33.3 \\
\bottomrule
\end{tabular}
\caption{
\textbf{Pretraining paradigm influences layer selection patterns and LOES gains.}
Selected layers are shown for representative tasks (vision: Stanford Cars, CUB-200; language: MTOP, Emotion; speech: ASVspoof, CREMA-D). 
Relative gain is computed as the average percentage improvement of LOES over last-layer transfer across evaluated datasets.
}
\label{tab:pretraining_summary}
\vspace{-2em}
\end{table*}
\begin{figure*}[!ht]
    \centering   \includegraphics[scale=0.35,trim=0 10 0 0]{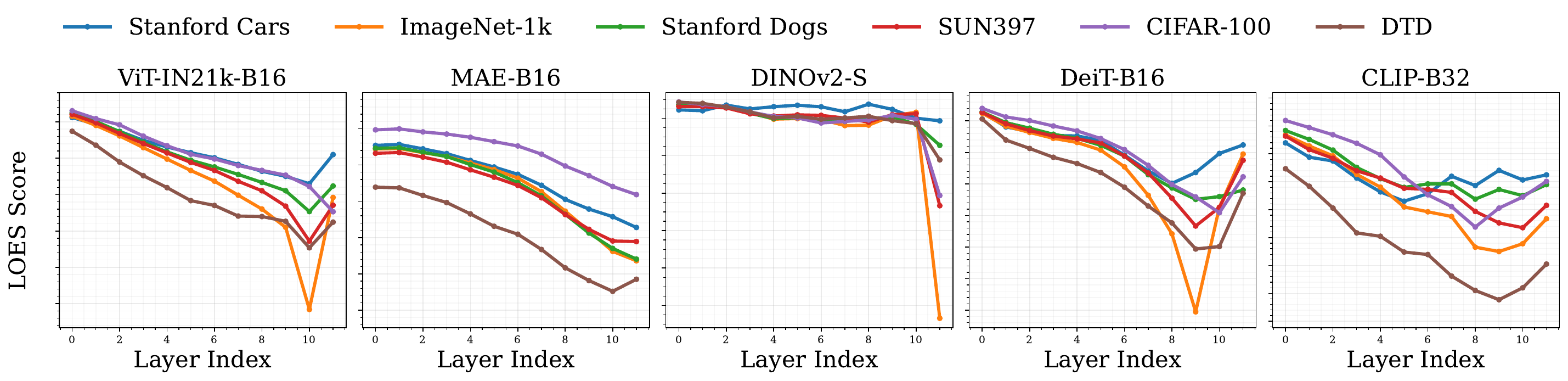}
       \vspace{-0.5em}
\caption{
\textbf{LOES score distribution across encoder depth} (lower is better).
Models pretrained exclusively on ImageNet (ViT-IN21k, MAE, DeiT) exhibit monotonically decreasing scores toward final layers, indicating task-discriminative information concentrates at depth.
CLIP, pretrained on 400M diverse image-text pairs, shows comparatively flatter profiles with competitive scores in mid-depth layers, consistent with the early-to-mid layer selections reported in Table~\ref{tab:pretraining_summary}.
These patterns suggest that pretraining data diversity influences how task-relevant information is distributed across the encoder hierarchy.
} 
\label{fig:loes}
\vspace{-0.5em}
\end{figure*}
\section{Experimental setup}
All experiments follow a single, consistent transfer pipeline unless otherwise stated. Layer-wise embeddings are extracted from pretrained encoders and used to train a linear probe. All probe and adapter training runs use 15 epochs. Optimization is performed with AdamW \cite{loshchilov2017decoupled} with weight decay $1\times10^{-4}$ and a cosine-annealing learning-rate schedule. The probe and adapters use a learning rate of $1\times10^{-4}$, while backbone fine-tuning, when enabled, uses $1\times10^{-5}$. The random seed is fixed to 0 for all experiments. Standard batch size is 256. Layer selection and scoring are performed on a calibration split comprising $20\%$ of the training data. Based on the calibration study (Appendix Table~\ref{tab:loes_with_baselines}), we fix the calibration fraction to 20\% in all subsequent experiments, as it consistently achieves near-peak performance across datasets with substantially fewer calibration samples.

\textbf{Hyperparameters.} LOES uses a small set of interpretable hyperparameters controlling isotropy ($\alpha$), redundancy ($\gamma$), class geometry ($\eta$), and the number of selected layers $k$.
In all experiments, we use $\alpha=1.0$, $\gamma=0.5$, $\eta=0.1/0.0$, and select $k\in\{3,4\}$ layers, which consistently yields optimal or near-optimal performance.
We observe that performance typically saturates after $k=3$--$4$, with negligible gains beyond this range.
A detailed ablation and sensitivity analysis validating these choices is provided in Appendix~\ref{sec:loes_hyperparameters} (Appendix Table~\ref{tab:loes_ablation_mtop}).

\textbf{Models.} We evaluate LOES on a diverse set of pretrained encoders covering multiple pretraining paradigms.
Vision models include DINOv2 \cite{oquab2023dinov2} and DINOv3 \cite{simeoni2025dinov3} (self-distillation), MAE (masked reconstruction), DeiT  \cite{touvron2021training} and ViT \cite{dosovitskiy2020image} (supervised or distillation-based pretraining), and CLIP \cite{chiu2024learnable} (contrastive vision-language alignment).
For language, we consider both classical and modern masked language models, including BERT-large \cite{devlin2019bert} and ModernBERT \cite{modernbert}, enabling analysis of depth and large-scale pretraining effects.
We additionally evaluate speech encoders such as Wav2Vec~2.0 \cite{baevski2020wav2vec}, trained via self-supervised temporal prediction.

\textbf{Tasks.} We evaluate LOES across a diverse set of downstream tasks, including classification, regression, and dense prediction.
Image classification experiments are conducted on ImageNet-1K \cite{imagenet15russakovsky}, CUB-200 \cite{wah2011caltech}, Stanford Cars \cite{krause2013collecting}, CIFAR-100 \cite{krizhevsky2009learning}, DTD \cite{cimpoi2014describing}, Mini-ImageNet \cite{imagenet15russakovsky}, and SUN397 \cite{5539970,Xiao2014SUNDE}.
Text classification is evaluated on multiple MTEB \cite{enevoldsen2025mmteb,muennighoff2023mteb} benchmarks, including Emotion \cite{emotion}, Amazon Massive Intent \cite{fitzgerald2022massive}, Amazon Massive Scenario \cite{fitzgerald2022massive}, Amazon Counterfactual \cite{Counterfactual}, MTOP Domain Classification \cite{li-etal-2021-mtop}, Banking77 \cite{banking77}, Tweet Sentiment Extraction \cite{tweet-sentiment-extraction}, and Toxic Conversations 50K \cite{jigsaw-unintended-bias-in-toxicity-classification}.
We further evaluate speech classification on ASVspoof 2019 \cite{wang2020asvspoof}, CREMA-D \cite{cao2014crema}, and Google Speech Commands \cite{speechcommandsv2}, regression/classification on multimodal and vision--language datasets such as Amazon Products 23 \cite{asaniczka_2023}, Fakeddit \cite{nakamura2020fakeddit},  FashionGen \cite{rostamzadeh2018fashion} and dense prediction on semantic segmentation benchmarks Cityscapes \cite{cordts2016cityscapes} and COCOStuff \cite{caesar2018coco}.
For classification, LOES is used in its standard form to select a task-discriminative subset of layers, whose representations are concatenated and fed to a linear classifier. For regression and dense prediction, the LOES objective is adapted by replacing classification-specific terms with residual and geometric criteria suited to continuous or pixel-level supervision (see Appendix).
Across all settings, the core principle remains unchanged: selecting complementary, non-redundant layers that maximize linearly accessible task signal.

\begin{table*}[t]
\centering
\small
\setlength{\tabcolsep}{4pt}
\renewcommand{\arraystretch}{1.1}
\begin{tabular}{llccccccc}
\toprule
\textbf{Model} & \textbf{Method} &
\textbf{CUB-200} &
\textbf{CIFAR-100} &
\textbf{DTD} &
\textbf{Mini-IN} &
\textbf{S. Dogs} &
\textbf{S. Cars} &
\textbf{SUN397} \\
\midrule

\multirow{2}{*}{CLIP-B32}
& Last
& 19.67$\pm$0.33 & 75.63$\pm$0.31 & 62.96$\pm$1.04 & 89.21$\pm$0.09 & 60.65$\pm$0.35 & 50.53$\pm$0.83 & 65.67$\pm$0.22 \\
& LOES
& \textbf{20.27$\pm$0.67} & \textbf{80.59$\pm$0.24} & \textbf{70.93$\pm$0.54} & \textbf{90.93$\pm$0.07} & \textbf{61.75$\pm$0.38} & \textbf{59.95$\pm$0.56} & \textbf{72.07$\pm$0.10} \\

\midrule

\multirow{2}{*}{DINOv2-S}
& Last
& 75.10$\pm$0.38 & 83.20$\pm$0.10 & 61.69$\pm$1.27 & 93.72$\pm$0.12 & 81.83$\pm$0.42 & 48.61$\pm$0.50 & 68.62$\pm$0.31 \\
& LOES
& \textbf{82.58$\pm$0.40} & \textbf{84.31$\pm$0.04} & \textbf{68.10$\pm$1.19} & \textbf{94.27$\pm$0.09} & \textbf{84.12$\pm$0.11} & \textbf{60.86$\pm$0.28} & \textbf{71.96$\pm$0.21} \\

\midrule

\multirow{2}{*}{DINOv3-S/16}
& Last
& 70.86$\pm$0.69 & 84.07$\pm$0.26 & 61.99$\pm$1.56 & 93.37$\pm$0.06 & 79.69$\pm$0.35 & 77.54$\pm$0.46 & 66.69$\pm$0.10 \\
& LOES
& \textbf{79.91$\pm$0.52} & \textbf{84.73$\pm$0.09} & \textbf{68.24$\pm$0.89} & \textbf{93.60$\pm$0.10} & \textbf{81.92$\pm$0.38} & \textbf{84.53$\pm$0.22} & \textbf{70.61$\pm$0.22} \\

\midrule

\multirow{2}{*}{DeiT-B/16}
& Last
& 65.45$\pm$0.89 & 82.49$\pm$0.16 & 58.18$\pm$1.23 & \textbf{98.63$\pm$0.05} & \textbf{93.72$\pm$0.14} & 38.88$\pm$0.65 & 61.25$\pm$0.29 \\
& LOES
& \textbf{76.17$\pm$0.20} & \textbf{83.75$\pm$0.22} & \textbf{66.95$\pm$0.44} & 98.44$\pm$0.02 & 93.47$\pm$0.15 & \textbf{57.98$\pm$0.56} & \textbf{66.46$\pm$0.22} \\

\midrule

\multirow{2}{*}{MAE-B/16}
& Last
& 9.78$\pm$0.41 & 62.22$\pm$0.23 & 45.00$\pm$1.25 & 81.82$\pm$0.12 & 51.95$\pm$0.52 & 7.42$\pm$0.37 & 39.25$\pm$0.16 \\
& LOES
& \textbf{20.39$\pm$0.44} & \textbf{68.36$\pm$0.24} & \textbf{60.51$\pm$0.68} & \textbf{84.22$\pm$0.23} & \textbf{59.93$\pm$0.30} & \textbf{15.45$\pm$0.28} & \textbf{51.41$\pm$0.22} \\

\midrule

\multirow{2}{*}{ViT-IN21k-B/16}
& Last
& 76.02$\pm$0.30 & 77.43$\pm$0.19 & 55.24$\pm$0.75 & \textbf{93.17$\pm$0.08} & 85.24$\pm$0.17 & 25.12$\pm$0.33 & \textbf{66.99$\pm$0.23} \\
& LOES
& \textbf{81.68$\pm$0.20} & \textbf{79.44$\pm$0.16} & \textbf{63.68$\pm$0.84} & 93.15$\pm$0.10 & \textbf{85.33$\pm$0.07} & \textbf{35.56$\pm$0.12} & 69.47$\pm$0.16 \\

\bottomrule
\end{tabular}

\caption{
\textbf{Test-set accuracy (\%) comparing the last-layer baseline and LOES with $k=3$.}
All results are reported as mean $\pm$ standard deviation over $5$ independent runs with different random seeds.
The best result for each model--dataset pair is highlighted in bold.
}

\label{tab:loes_test_last_vs_loes}
\vspace{-1em}
\end{table*}



\begin{table*}[t]
\centering
\small
\setlength{\tabcolsep}{6pt}
\begin{tabular}{llccccccc}
\toprule
\textbf{Encoder / State} & \textbf{Method ($k$)} &
\textbf{S. Cars} & \textbf{Mini-IN} & \textbf{S. Dogs} & \textbf{CUB-200} &
\textbf{DTD} & \textbf{CIFAR-100} & \textbf{SUN397} \\
\midrule

\multirow{6}{*}{\shortstack{\textbf{DINOv2-S/14}\\\textbf{(Frozen)}}}
& Last Layer (--) 
& 49.4 & 93.7 & 81.3 & 76.3 & 61.6 & 83.4 & 68.5 \\
& Learnable Weights (--) 
& 49.6 & 94.0 & 82.6 & 77.8 & 67.8 & 83.4 & 69.9 \\
& Last 3 Layers Concat (--) 
& 50.1 & 94.1 & 83.2 & 77.9 & 67.8 & 84.4 & 70.1 \\
\cmidrule(lr){2-9}
& LOES + GeoReg (2)
& 55.7 & 94.0 & 83.3 & 80.3 & 67.4 & 83.9 & 71.0 \\
& LOES + GeoReg (4)
& 60.3 & 94.1 & \textbf{84.3} & 83.1 & \textbf{69.6} & \textbf{84.2} & \textbf{72.7} \\
& LOES + GeoReg (3)
& 60.1 & \textbf{94.2} & 84.0 & \textbf{83.1} & 68.6 & \textbf{84.2} & 72.0 \\

\midrule

\multirow{6}{*}{\shortstack{\textbf{DINOv2-S/14}\\\textbf{(Trainable)}}}
& Last Layer (--) 
& 76.8 & 93.7 & 80.5 & 79.2 & 72.6 & 90.8 & 66.4 \\
& Learnable Weights (--) 
& 78.0 & 93.8 & 80.2 & 80.8 & 73.4 & 91.0 & 66.8 \\
& Last 3 Layers Concat (--)
& 79.3 & 93.3 & 80.4 & 80.8 & 72.4 & 90.4 & 66.9 \\


\cmidrule(lr){2-9}
& LOES + GeoReg (2)
& 80.6 & 93.9 & 81.0 & 81.2 & 73.6 & \textbf{91.1} & 68.3 \\
& LOES + GeoReg (4)
& 81.9 & 94.0 & 81.1 & 81.9 & 72.8 & 91.0 & 69.0 \\
& LOES (no GeoReg) (3)
& 81.9 & 94.0 & 81.2 & 82.1 & 74.7 & 91.0 & 68.9 \\
& LOES + GeoReg (3)
& \textbf{82.7} & \textbf{94.8} & \textbf{81.3} &
  \textbf{82.2} & \textbf{74.8} &
  \textbf{91.1} & \textbf{69.9} \\

\bottomrule
\end{tabular}

\caption{
\textbf{Layer selection and geometric regularization improve transfer accuracy.}
Top-1 classification accuracy (\%) across seven image benchmarks using DINOv2-S/14.
}
\label{tab:loes_image_results}
\vspace{-1.0em}
\end{table*}

\begin{table*}[t]
\centering
\small
\setlength{\tabcolsep}{6pt}
\begin{tabular}{llcccccccc}
\toprule
\textbf{Model} & \textbf{Mode} &
\textbf{Emotion} &
\textbf{AM-Intent} &
\textbf{AM-Scenario} &
\textbf{AM-CF} &
\textbf{MTOP} &
\textbf{Banking77} &
\textbf{Tweet-Sent} &
\textbf{Avg.} \\
\midrule

\multirow{6}{*}{\textbf{MBERT-B}}
& Last
& 48.39 & 11.20 & 62.84 & 83.58 & 78.07 & 35.47 & 60.23 & 54.25 \\

& Last-4
& 50.30 & 11.94 & 63.35 & 83.88 & 79.23 & 44.67 & 60.52 & 56.27 \\
\noalign{\vskip 1pt}\cline{2-10}\noalign{\vskip 1pt}

& Entropy-4
& 53.78 & 13.79 & 74.55 & 83.88 & 91.91 & 51.33 & 62.27 & 61.07 \\

& Curvature-4
& 50.60 & 11.13 & 61.43 & 83.88 & 76.68 & 44.25 & 61.83 & 55.68 \\

& Intrinsic-4
& 54.13 & 13.92 & 73.71 & 82.99 & 90.36 & 47.92 & 65.03 & 61.15 \\
\noalign{\vskip 1pt}\cline{2-10}\noalign{\vskip 1pt}

& LOES-4
& \textbf{56.85} & \textbf{14.96} & \textbf{78.51} & \textbf{85.07}
& \textbf{94.48} & \textbf{58.45} & \textbf{65.94} & \textbf{64.89} \\

\midrule

\multirow{6}{*}{\textbf{BERT-L}}
& Last
& 59.37 & 15.37 & 82.15 & 90.75 & 95.32 & 62.54 & 67.36 & 67.55 \\

& Last-4
& 62.89 & 16.57 & 85.00 & 91.04 & 96.55 & 76.65 & 68.09 & 70.11 \\
\noalign{\vskip 1pt}\cline{2-10}\noalign{\vskip 1pt}

& Entropy-4
& 63.34 & 17.28 & 85.10 & 91.34 & 96.76 & 75.74 & \textbf{69.75} & 71.33 \\

& Curvature-4
& 66.06 & 17.11 & \textbf{87.99} & 84.78 & 97.60 & 82.11 & 68.83 & 72.07 \\

& Intrinsic-4
& 66.39 & 15.60 & 85.27 & 87.46 & \textbf{98.29} & 82.11 & 68.03 & 71.88 \\
\noalign{\vskip 1pt}\cline{2-10}\noalign{\vskip 1pt}

& LOES-4
& \textbf{67.00} & \textbf{17.45} & 87.46 & \textbf{91.94}
& 97.81 & \textbf{82.50} & 69.14 & \textbf{73.33} \\

\bottomrule
\end{tabular}

\caption{\textbf{Test accuracy (\%) across MTEB classification datasets.}
Average is computed over all datasets excluding Toxic.
Horizontal rules separate \emph{Last}, \emph{Last-4}, and \emph{LOES ($k=4$)}.
Best result per model block is bolded.}
\label{tab:mteb_text}
\vspace{-2em}
\end{table*}

\section{Results and Discussion}
\subsection{Pretraining Paradigm Shapes Layer Selection}
\label{sec:pretraining_analysis}

A central finding of our experiments is that the pretraining objective influences which layers encode task-discriminative information. The final layers selected, and the extent to which LOES improves over baselines may depend on the downstream task as well. Table~\ref{tab:pretraining_summary} summarizes layer selection patterns across modalities for different pretraining paradigms, while Figure~\ref{fig:loes} visualizes the LOES score distribution across layers for a given downstream task.

\subsection{LOES Improves Existing Layer Selection Methods}

LOES-selected embeddings outperform not only last-layer transfer (Figure \ref{fig:loes_boosts_performance}, Table \ref{tab:loes_test_last_vs_loes}) but also learnable layer weights, fixed concatenation of final layers, and some other recently proposed selection methods.
\begin{figure*}[!t]
    \centering   \includegraphics[scale=0.35,trim=0 10 0 0]{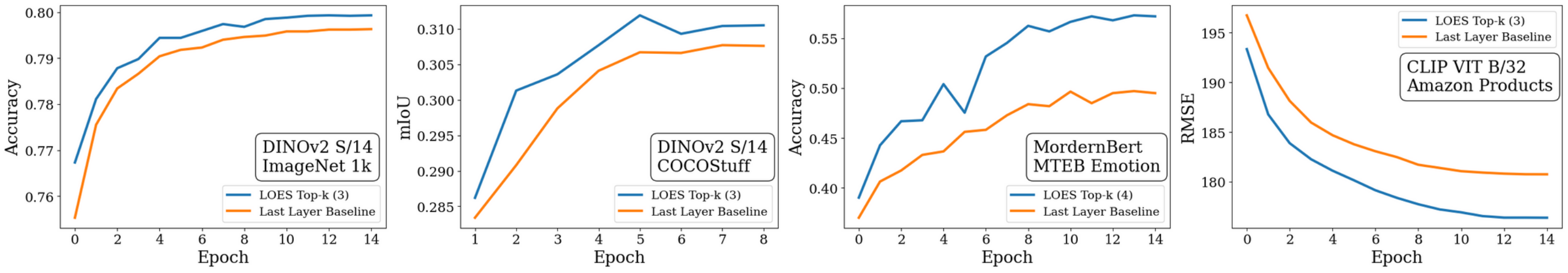}
       \vspace{-0.5em}
\caption{LOES boosts performance and leads to faster convergence across multiple downstream tasks like classification, segmentation and regression using popular foundation models like DINOv2, ModernBERT and CLIP.
} 
\label{fig:loes_boosts_performance}
\vspace{-1em}
\end{figure*}
\\ \textbf{Vision Encoders:} Layer selection patterns correlate with pretraining data diversity (Table \ref{tab:pretraining_summary}). CLIP, trained on 400 million image-text pairs, and DINOv2/DINOv3, trained on the 142-million image LVD dataset, consistently select early-to-middle layers ([5, 6, 4] for CLIP on Stanford Cars; [11, 10, 7] for DINOv2). In contrast, MAE, DeiT, and ViT-IN21k, pretrained exclusively on ImageNet variants, select predominantly final layers ([11, 10, 9] for MAE; [10, 9, 8] for ViT-IN21k). This pattern suggests that models pretrained on larger and more diverse data distribute task-relevant information more evenly across depth, whereas models trained on narrower distributions concentrate discriminative features in later layers. Consequently, explicit layer inspection via methods like LOES becomes valuable as pretraining corpora grows in scale and diversity.

On Stanford Cars with trainable DINOv2-S/14 (Table~\ref{tab:loes_image_results}), last-layer transfer achieves 76.8\%, last-3 concatenation achieves 79.3\%, learnable weights achieve 78.0\%, while LOES achieves 82.7\%, demonstrating that principled selection outperforms both single-layer and naive multi-layer baselines. LOES also extends to dense prediction tasks: on Cityscapes and COCOStuff semantic segmentation (Appendix Table~\ref{tab:segmentation_loes}), LOES consistently selects early layers alongside the final layer ([0, 9, 11] for DINOv2; [0, 3, 11] for BEiT), with BEiT showing the largest gain (+3.14 mIoU on Cityscapes). We additionally compared LOES against exhaustive search over all 220 three-layer subsets of frozen BERT-base on MTOP and Emotion (Appendix Table~\ref{tab:global_optimal}). LOES recovers a subset within 1.16 and 1.69 percentage points of the global optimum at a small fraction of the search cost, supporting its use on deeper encoders where exhaustive enumeration is infeasible.

\textbf{Language Encoders:}
Model depth amplifies the benefits of layer selection. BERT-large (24 layers) improves modestly from 95.32 \% to 97.81 \% on MTOP (Table~\ref{tab:mteb_text}), with LOES selecting predominantly late layers [9, 19, 20, 21]. ModernBERT (22 layers) improves substantially from 78.07\% to 94.48\%, with LOES selecting [3, 15, 1, 4], spanning early and later layers. 
Embeddings selected by LOES even outperform (Table~\ref{tab:mteb_text}) the use of intrinsic dimensionality of layers \cite{cheng2024emergence,razzhigaev2024shape} and a recent framework of layer selection \cite{skean2025layer} that introduced information-theoretic, geometric and augmentation invariance metrics like entropy and curvature.

LOES also outperforms a stronger learned fusion baseline that applies per-layer linear projections and concatenates across all 22 layers of MBERT-base (Appendix Table~\ref{tab:learnable_concat}): LOES-4 reaches 94.19 on MTOP and 78.51 on AM-Scenario, against 93.91 and 77.51 for the all-layer learned fusion, despite using 1.7x fewer parameters and FLOPs. This indicates the gains come from the selection criterion itself rather than from the richer concatenated representation.

\begin{figure}[!ht]
    \centering
    \includegraphics[width=\linewidth]{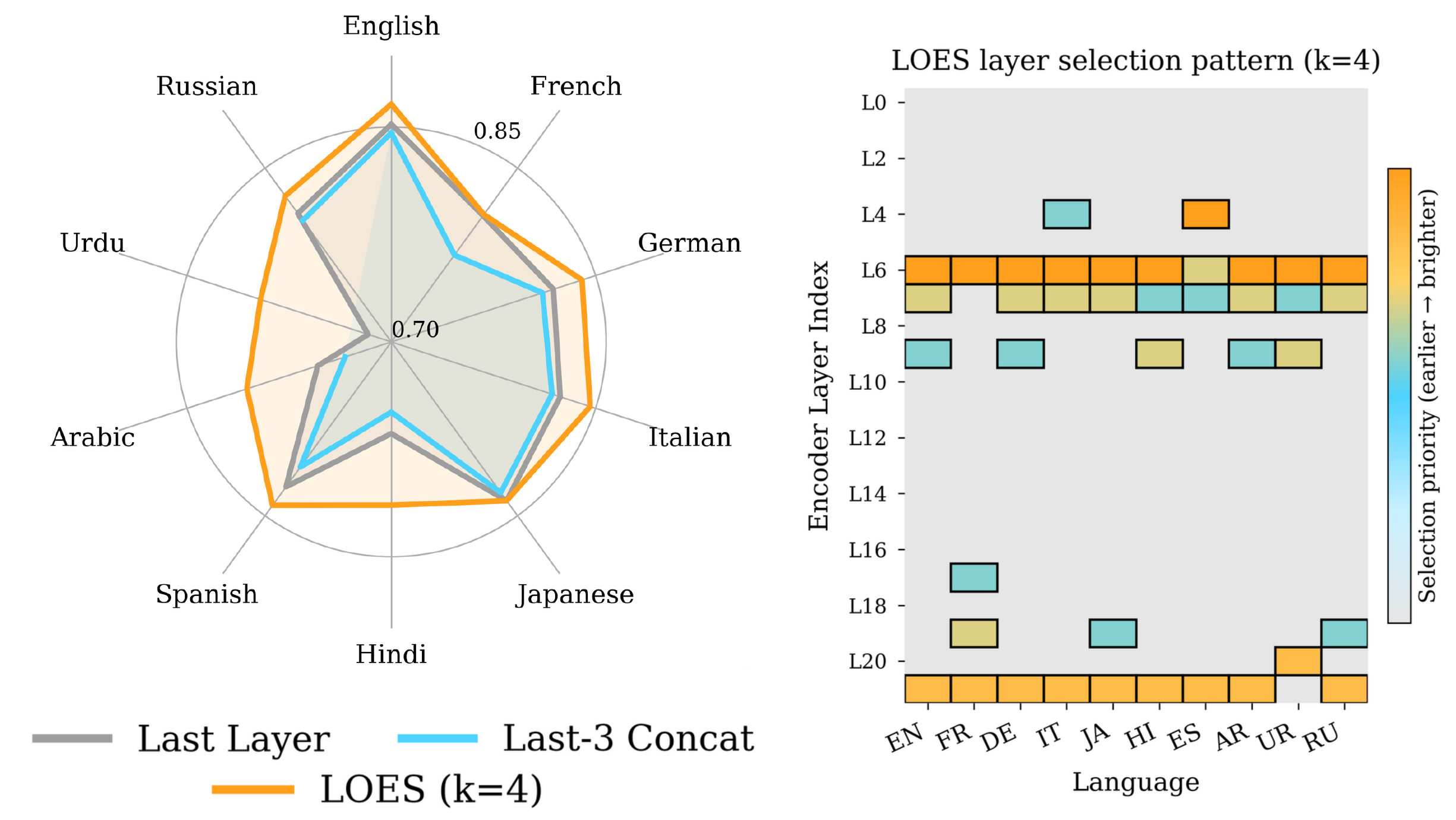}
\caption{
\textbf{Cross-lingual evaluation on Amazon Massive Scenario (mBERT-base).}
\textit{Left:} LOES ($k$=4) outperforms baselines, with larger gains on underrepresented languages (Hindi +6.5\%, Arabic +7.2\%, Urdu +10.9\% over last-3).
\textit{Right:} LOES consistently selects mid-depth layer 6 alongside the final layer across languages, indicating cross-lingually transferable structure at intermediate depths.
}
\label{fig:langauge_results}
\vspace{-1em}
\end{figure}
\vspace{-0.5em}
Cross-lingual results on Amazon MASSIVE with mBERT-base (22 layers) \cite{marone2025mmbert}. (Figure~\ref{fig:langauge_results}, Table~\ref{tab:multilingual_loes}) reveals that LOES benefits vary substantially across languages. High-resource languages with Latin scripts (English, German, French) show moderate gains over the last-3 baseline (+2--3\%). However, languages underrepresented in typical pretraining corpora show markedly larger improvements: Hindi (+6.5\%), Arabic (+7.2\%), and Urdu (+10.9\%). Despite these differences in magnitude, LOES consistently selects layer 6 alongside the final layer across all languages, suggesting that mid-depth representations encode cross-lingually transferable structure that complements language-specific features in later layers. These results indicate that LOES may be particularly beneficial for transfer to underrepresented languages, where the final layer alone inadequately captures task-relevant information. Full cross-lingual results are provided in Appendix Table~\ref{tab:multilingual_loes}.

\textbf{Speech Encoders:}
Wav2Vec 2.0 (Table~\ref{tab:wav2vec2_loes_speech_layers}) exhibits task-dependent layer selection. For spoofed speech detection (ASVspoof 2019), LOES selects early layers [3, 0, 1, 2], improving from 90.89\% to 98.80\%. For emotion recognition (CREMA-D), mid-depth layers [7, 6, 8, 5] are selected, improving from 37.84\% to 69.06\%. This heterogeneity suggests that acoustic artifacts relevant to spoofing detection manifest in early layers, while paralinguistic features for emotion recognition emerge at intermediate depths.

\textbf{Multimodal Regression Tasks:}
LOES extends beyond classification to regression and multimodal settings (Appendix Table~\ref{tab:clip_vitb_tasks}). On Amazon Products 23, LOES reduces RMSE from 161.80 to 154.44; on FashionGen, from 527.19 to 460.46. In both cases, LOES selects mid-to-late layers ([5, 8, 10, 11] and [5, 9, 10, 11] respectively), indicating that the framework generalizes across output modalities while maintaining interpretable layer selection patterns.

\subsection{Loes Outperforms Greedy and Random Layer Selection}
We include Random baselines and a Greedy selection (Table \ref{tab:Random_Greedy}) that picks the top 4 layers by probe accuracy (for fair comparison with LOES-4), evaluated on MBERT-B across two datasets. For Greedy we did training a probe per layer for 2 settings: 1 or 5 epochs, incurring notable overhead (~2.8 min for Greedy1, ~14 min for Greedy5).
LOES-4 outperforms Random (2-5), Greedy1 and matches Greedy5, while giving 32x and 160x faster layer selection than Greedy1 and Greedy5 respectively.

\begin{table}[t]
\centering
\small
\setlength{\tabcolsep}{4pt}
\begin{tabular}{llcccc}
\toprule
Model & Mode & $k$ & AM-Scenario & MTOP & Sel. Time (s) \\
\midrule
MBERT-B & Last  & 1 & 62.84 & 78.07 & -- \\
MBERT-B & Random  & 2 & 69.26 & 89.07 & -- \\
MBERT-B & Random  & 3 & 66.78 & 88.57 & -- \\
MBERT-B & Random  & 4 & 77.47 & 93.88 & -- \\
MBERT-B & Random  & 5 & 76.80 & 93.22 & -- \\
MBERT-B & Greedy1 & 4 & 76.76 & 91.06 & 168 \\
MBERT-B & Greedy5 & 4 & \textbf{78.81} & \textbf{94.55} & 840 \\
MBERT-B & LOES    & 4 & 78.51 & 94.48 & 5.22 \\
\bottomrule
\end{tabular}
\caption{Comparison of LOES with Random-$k$ and Greedy layer selection on MBERT-B. LOES-4 outperforms Random and Greedy1, while Greedy5 achieves comparable performance at significantly higher computational cost.}
\label{tab:Random_Greedy}
\end{table}



\vspace{-0.5em}
\section{Conclusion}

We presented LOES, a spectral framework for identifying task-discriminative layers in pretrained encoders. Our experiments reveal that task-relevant information is distributed non-monotonically across depth, challenging the widespread reliance on final-layer representations. The distribution of useful features depends systematically on pretraining: models trained on large, diverse corpora encode transferable structure in earlier layers, while those trained on narrower distributions concentrate information at depth. This finding suggests that as foundation models scale and diversify, principled layer selection becomes increasingly valuable.
LOES consistently outperforms last-layer transfer, learnable weighting, and prior selection methods across vision, language, speech, and multimodal benchmarks. Cross-lingual evaluation reveals that underrepresented languages benefit disproportionately, suggesting practical value for low-resource transfer.
Beyond accuracy improvements, LOES provides interpretable insights into how pretrained models organize knowledge. The layer selection patterns expose which depths encode task-relevant structure, offering a complementary view to existing probing methodologies. We hope this work encourages further investigation into the geometry of intermediate representations and their role in effective transfer learning.
\vspace{-0.5em}
\section*{Acknowledgements}
The authors thank the Infosys Centre for AI and Indraprastha Institute of Information Technology Delhi (IIIT-Delhi) for financial support and for providing computational infrastructure used in this work. We also thank the members of SBILab for helpful discussions and feedback. 

\section*{Software and Data Availability}
Code and configuration files for reproducing the experiments are available at \url{https://github.com/arnesh2212/loes}.
\vspace{-0.5em}
\section*{Impact Statement}
This paper presents work whose goal is to improve transfer learning and interpretability for pretrained foundation models. By identifying task-relevant intermediate layers, LOES can reduce reliance on full-model adaptation and provide insight into how useful information is distributed across model depth. This may make downstream adaptation more efficient and more transparent in some settings. We do not introduce new datasets involving sensitive personal information, and all experiments are conducted on established public benchmarks.
\vspace{-0.5em}
\section*{Accessibility }
We have aimed to make the paper accessible by using standard LaTeX typesetting, vector-based figures where possible, descriptive captions, and visualizations that do not rely solely on color for interpretation. 

\footnotetext{Average relative gain is computed as the mean of per-dataset percentage improvements: $\frac{1}{|D|}\sum_{d \in D} \frac{\text{LOES}_d - \text{Last}_d}{\text{Last}_d} \times 100$.}

\bibliography{example_paper}

@inproceedings{girdhar2023imagebind,
  title={Imagebind: One embedding space to bind them all},
  author={Girdhar, Rohit and El-Nouby, Alaaeldin and Liu, Zhuang and Singh, Mannat and Alwala, Kalyan Vasudev and Joulin, Armand and Misra, Ishan},
  booktitle={Proceedings of the IEEE/CVF conference on computer vision and pattern recognition},
  pages={15180--15190},
  year={2023}
}

@inproceedings{devlin2019bert,
  title={Bert: Pre-training of deep bidirectional transformers for language understanding},
  author={Devlin, Jacob and Chang, Ming-Wei and Lee, Kenton and Toutanova, Kristina},
  booktitle={Proceedings of the 2019 conference of the North American chapter of the association for computational linguistics: human language technologies, volume 1 (long and short papers)},
  pages={4171--4186},
  year={2019}
}

@article{skean2025layer,
  title={Layer by layer: Uncovering hidden representations in language models},
  author={Skean, Oscar and Arefin, Md Rifat and Zhao, Dan and Patel, Niket and Naghiyev, Jalal and LeCun, Yann and Shwartz-Ziv, Ravid},
  journal={arXiv preprint arXiv:2502.02013},
  year={2025}
}

@article{park2024geometry,
  title={The geometry of categorical and hierarchical concepts in large language models},
  author={Park, Kiho and Choe, Yo Joong and Jiang, Yibo and Veitch, Victor},
  journal={arXiv preprint arXiv:2406.01506},
  year={2024}
}

@article{queipo2025attention,
  title={Attention sinks and compression valleys in llms are two sides of the same coin},
  author={Queipo-de-Llano, Enrique and Arroyo, {\'A}lvaro and Barbero, Federico and Dong, Xiaowen and Bronstein, Michael and LeCun, Yann and Shwartz-Ziv, Ravid},
  journal={arXiv preprint arXiv:2510.06477},
  year={2025}
}

@article{li2025tracing,
  title={Tracing the representation geometry of language models from pretraining to post-training},
  author={Li, Melody Zixuan and Agrawal, Kumar Krishna and Ghosh, Arna and Teru, Komal Kumar and Santoro, Adam and Lajoie, Guillaume and Richards, Blake A},
  journal={arXiv preprint arXiv:2509.23024},
  year={2025}
}

@article{dosovitskiy2020image,
  title={An image is worth 16x16 words: Transformers for image recognition at scale},
  author={Dosovitskiy, Alexey},
  journal={arXiv preprint arXiv:2010.11929},
  year={2020}
}

@article{yang2025segdino,
  title={Segdino: An efficient design for medical and natural image segmentation with dino-v3},
  author={Yang, Sicheng and Wang, Hongqiu and Xing, Zhaohu and Chen, Sixiang and Zhu, Lei},
  journal={arXiv preprint arXiv:2509.00833},
  year={2025}
}

@article{de2020s,
  title={What's so special about BERT's layers? A closer look at the NLP pipeline in monolingual and multilingual models},
  author={De Vries, Wietse and Van Cranenburgh, Andreas and Nissim, Malvina},
  journal={arXiv preprint arXiv:2004.06499},
  year={2020}
}

@inproceedings{peters-etal-2018-deep,
    title = "Deep Contextualized Word Representations",
    author = "Peters, Matthew E.  and
      Neumann, Mark  and
      Iyyer, Mohit  and
      Gardner, Matt  and
      Clark, Christopher  and
      Lee, Kenton  and
      Zettlemoyer, Luke",
    editor = "Walker, Marilyn  and
      Ji, Heng  and
      Stent, Amanda",
    booktitle = "Proceedings of the 2018 Conference of the North {A}merican Chapter of the Association for Computational Linguistics: Human Language Technologies, Volume 1 (Long Papers)",
    month = jun,
    year = "2018",
    address = "New Orleans, Louisiana",
    publisher = "Association for Computational Linguistics",
    url = "https://aclanthology.org/N18-1202/",
    doi = "10.18653/v1/N18-1202",
    pages = "2227--2237"
}

@inproceedings{chiu2024learnable,
  title={Learnable Layer Selection and Model Fusion for Speech Self-Supervised Learning Models},
  author={Chiu, Sheng-Chieh and Wu, Chia-Hua and Hsieh, Jih-Kang and Tsao, Yu and Wang, Hsin-Min},
  booktitle={Proc. Interspeech 2024},
  pages={3914--3918},
  year={2024}

}

@article{alain2016understanding,
  title={Understanding intermediate layers using linear classifier probes},
  author={Alain, Guillaume and Bengio, Yoshua},
  journal={arXiv preprint arXiv:1610.01644},
  year={2016}
}

@article{raghu2017svcca,
  title={Svcca: Singular vector canonical correlation analysis for deep learning dynamics and interpretability},
  author={Raghu, Maithra and Gilmer, Justin and Yosinski, Jason and Sohl-Dickstein, Jascha},
  journal={Advances in neural information processing systems},
  volume={30},
  year={2017}
}

@inproceedings{kornblith2019similarity,
  title={Similarity of neural network representations revisited},
  author={Kornblith, Simon and Norouzi, Mohammad and Lee, Honglak and Hinton, Geoffrey},
  booktitle={International conference on machine learning},
  pages={3519--3529},
  year={2019},
  organization={PMlR}
}

@article{saxe2013exact,
  title={Exact solutions to the nonlinear dynamics of learning in deep linear neural networks},
  author={Saxe, Andrew M and McClelland, James L and Ganguli, Surya},
  journal={arXiv preprint arXiv:1312.6120},
  year={2013}
}

@article{andriopoulos2024prevalence,
  title={The prevalence of neural collapse in neural multivariate regression},
  author={Andriopoulos, George and Dong, Zixuan and Guo, Li and Zhao, Zifan and Ross, Keith},
  journal={Advances in Neural Information Processing Systems},
  volume={37},
  pages={126417--126451},
  year={2024}
}

@article{hoerl1970ridge,
  title={Ridge regression: Biased estimation for nonorthogonal problems},
  author={Hoerl, Arthur E and Kennard, Robert W},
  journal={Technometrics},
  volume={12},
  number={1},
  pages={55--67},
  year={1970},
  publisher={Taylor \& Francis}
}

@article{li2026task,
  title={Task-Driven Kernel Flows: Label Rank Compression and Laplacian Spectral Filtering},
  author={Li, Hongxi and Huang, Chunlin},
  journal={arXiv preprint arXiv:2601.00276},
  year={2026}
}

@inproceedings{godey2024anisotropy,
  title={Anisotropy is inherent to self-attention in transformers},
  author={Godey, Nathan and Clergerie, {\'E}ric and Sagot, Beno{\^\i}t},
  booktitle={Proceedings of the 18th Conference of the European Chapter of the Association for Computational Linguistics (Volume 1: Long Papers)},
  pages={35--48},
  year={2024}
}

@article{huh2103low,
  title={The Low-Rank Simplicity Bias in Deep Networks, March 2023},
  author={Huh, Minyoung and Mobahi, Hossein and Zhang, Richard and Cheung, Brian and Agrawal, Pulkit and Isola, Phillip},
  journal={URL http://arxiv. org/abs/2103.10427}
}

@article{hager1989updating,
  title={Updating the inverse of a matrix},
  author={Hager, William W},
  journal={SIAM review},
  volume={31},
  number={2},
  pages={221--239},
  year={1989},
  publisher={SIAM}
}

@inproceedings{bihani2021low,
  title={Low anisotropy sense retrofitting (laser): Towards isotropic and sense enriched representations},
  author={Bihani, Geetanjali and Rayz, Julia},
  booktitle={Proceedings of deep learning inside out (DeeLIO): The 2nd workshop on knowledge extraction and integration for deep learning architectures},
  pages={81--95},
  year={2021}
}

@inproceedings{kudrjashov2024shrink,
  title={Shrink the longest: improving latent space isotropy with simplicial geometry},
  author={Kudrjashov, Sergej and Karpik, Olesya and Klyshinsky, Eduard},
  booktitle={International Conference on Analysis of Images, Social Networks and Texts},
  pages={120--130},
  year={2024},
  organization={Springer}
}

@article{bardes2021vicreg,
  title={Vicreg: Variance-invariance-covariance regularization for self-supervised learning},
  author={Bardes, Adrien and Ponce, Jean and LeCun, Yann},
  journal={arXiv preprint arXiv:2105.04906},
  year={2021}
}

@article{balestriero2025lejepa,
  title={Lejepa: Provable and scalable self-supervised learning without the heuristics},
  author={Balestriero, Randall and LeCun, Yann},
  journal={arXiv preprint arXiv:2511.08544},
  year={2025}
}

@article{Bolyaetal2025,
  author  = {Bolya, Daniel and Huang, Po-Yao and Sun, Peize and Cho, Jang Hyun and Madotto, Andrea and Wei, Chen and Ma, Tengyu and Zhi, Jiale and Rajasegaran, Jathushan and Rasheed, Hanoona and Wang, Junke and Monteiro, Marco and Xu, Hu and Dong, Shiyu and Ravi, Nikhila and Li, Daniel and Dollár, Piotr and Feichtenhofer, Christoph},
  title   = {Perception Encoder: The best visual embeddings are not at the output of the network},
  journal = {arXiv preprint arXiv:2504.13181},
  year    = {2025},
  doi     = {10.48550/arXiv.2504.13181}
}

@misc{jin2025exploringconceptdepthlarge,
      title={Exploring Concept Depth: How Large Language Models Acquire Knowledge and Concept at Different Layers?}, 
      author={Mingyu Jin and Qinkai Yu and Jingyuan Huang and Qingcheng Zeng and Zhenting Wang and Wenyue Hua and Haiyan Zhao and Kai Mei and Yanda Meng and Kaize Ding and Fan Yang and Mengnan Du and Yongfeng Zhang},
      year={2025},
      eprint={2404.07066},
      archivePrefix={arXiv},
      primaryClass={cs.CL},
      url={https://arxiv.org/abs/2404.07066}, 
}

@article{lad2024remarkable,
  title={The remarkable robustness of llms: Stages of inference?},
  author={Lad, Vedang and Lee, Jin Hwa and Gurnee, Wes and Tegmark, Max},
  journal={arXiv preprint arXiv:2406.19384},
  year={2024}
}

@article{fan2024not,
  title={Not all layers of llms are necessary during inference},
  author={Fan, Siqi and Jiang, Xin and Li, Xiang and Meng, Xuying and Han, Peng and Shang, Shuo and Sun, Aixin and Wang, Yequan and Wang, Zhongyuan},
  journal={arXiv preprint arXiv:2403.02181},
  year={2024}
}

@inproceedings{zhang2016colorful,
  title={Colorful image colorization},
  author={Zhang, Richard and Isola, Phillip and Efros, Alexei A},
  booktitle={European conference on computer vision},
  pages={649--666},
  year={2016},
  organization={Springer}
}

@inproceedings{zheng2025advancing,
  title={Advancing chart question answering with robust chart component recognition},
  author={Zheng, Hanwen and Wang, Sijia and Thomas, Chris and Huang, Lifu},
  booktitle={2025 IEEE/CVF Winter Conference on Applications of Computer Vision (WACV)},
  pages={5741--5750},
  year={2025},
  organization={IEEE}
}

@article{el2024scalable,
  title={Scalable pre-training of large autoregressive image models},
  author={El-Nouby, Alaaeldin and Klein, Michal and Zhai, Shuangfei and Bautista, Miguel Angel and Toshev, Alexander and Shankar, Vaishaal and Susskind, Joshua M and Joulin, Armand},
  journal={arXiv preprint arXiv:2401.08541},
  year={2024}
}

@inproceedings{chen2020generative,
  title={Generative pretraining from pixels},
  author={Chen, Mark and Radford, Alec and Child, Rewon and Wu, Jeffrey and Jun, Heewoo and Luan, David and Sutskever, Ilya},
  booktitle={International conference on machine learning},
  pages={1691--1703},
  year={2020},
  organization={PMLR}
}

@article{rajasegaran2025empirical,
  title={An empirical study of autoregressive pre-training from videos},
  author={Rajasegaran, Jathushan and Radosavovic, Ilija and Ravishankar, Rahul and Gandelsman, Yossi and Feichtenhofer, Christoph and Malik, Jitendra},
  journal={arXiv preprint arXiv:2501.05453},
  year={2025}
}

@article{saponati2025underlying,
  title={The underlying structures of self-attention: symmetry, directionality, and emergent dynamics in Transformer training},
  author={Saponati, Matteo and Sager, Pascal and Aceituno, Pau Vilimelis and Stadelmann, Thilo and Grewe, Benjamin},
  journal={arXiv preprint arXiv:2502.10927},
  year={2025}
}

@inproceedings{garrido2023rankme,
  title={Rankme: Assessing the downstream performance of pretrained self-supervised representations by their rank},
  author={Garrido, Quentin and Balestriero, Randall and Najman, Laurent and Lecun, Yann},
  booktitle={International conference on machine learning},
  pages={10929--10974},
  year={2023},
  organization={PMLR}
}

@inproceedings{maeslevyscore,
  title={LevyScore: A Fast Sample-Wise Confidence Score of Pretrained Joint Embedding Model},
  author={Maes, Lucas and Scieur, Damien and Balestriero, Randall},
  booktitle={UniReps: 3rd Edition of the Workshop on Unifying Representations in Neural Models}
}

@article{bertinetto2018meta,
  title={Meta-learning with differentiable closed-form solvers},
  author={Bertinetto, Luca and Henriques, Joao F and Torr, Philip HS and Vedaldi, Andrea},
  journal={arXiv preprint arXiv:1805.08136},
  year={2018}
}

@article{cheng2024emergence,
  title={Emergence of a high-dimensional abstraction phase in language transformers},
  author={Cheng, Emily and Doimo, Diego and Kervadec, Corentin and Macocco, Iuri and Yu, Jade and Laio, Alessandro and Baroni, Marco},
  journal={arXiv preprint arXiv:2405.15471},
  year={2024}
}

@inproceedings{razzhigaev2024shape,
  title={The shape of learning: Anisotropy and intrinsic dimensions in transformer-based models},
  author={Razzhigaev, Anton and Mikhalchuk, Matvey and Goncharova, Elizaveta and Oseledets, Ivan and Dimitrov, Denis and Kuznetsov, Andrey},
  booktitle={Findings of the Association for Computational Linguistics: EACL 2024},
  pages={868--874},
  year={2024}
}

@inproceedings{5539970,
    title     = {SUN database: Large-scale scene recognition from abbey to zoo},
    author    = {Xiao, Jianxiong and Hays, James and Ehinger, Krista A. and Oliva, Aude and Torralba, Antonio},
    year      = 2010,
    booktitle = {2010 IEEE Computer Society Conference on Computer Vision and Pattern Recognition},
    volume    = {},
    number    = {},
    pages     = {3485--3492},
    doi       = {10.1109/CVPR.2010.5539970}
}

@article{Xiao2014SUNDE,
    title     = {SUN Database: Exploring a Large Collection of Scene Categories},
    author    = {Jianxiong Xiao and Krista A. Ehinger and James Hays and Antonio Torralba and Aude Oliva},
    year      = 2014,
    journal   = {International Journal of Computer Vision},
    volume    = 119,
    pages     = {3--22},
    url       = {https://api.semanticscholar.org/CorpusID:10224573}
}

@article{imagenet15russakovsky,
    Author = {Olga Russakovsky and Jia Deng and Hao Su and Jonathan Krause and Sanjeev Satheesh and Sean Ma and Zhiheng Huang and Andrej Karpathy and Aditya Khosla and Michael Bernstein and Alexander C. Berg and Li Fei-Fei},
    Title = { {ImageNet Large Scale Visual Recognition Challenge} },
    Year = {2015},
    journal   = {International Journal of Computer Vision (IJCV)},
    doi = {10.1007/s11263-015-0816-y},
    volume={115},
    number={3},
    pages={211-252}
}

@article{krause2013collecting,
  title={Collecting a large-scale dataset of fine-grained cars},
  author={Krause, Jonathan and Deng, Jia and Stark, Michael and Fei-Fei, Li},
  year={2013}
}

@misc{modernbert,
      title={Smarter, Better, Faster, Longer: A Modern Bidirectional Encoder for Fast, Memory Efficient, and Long Context Finetuning and Inference}, 
      author={Benjamin Warner and Antoine Chaffin and Benjamin Clavié and Orion Weller and Oskar Hallström and Said Taghadouini and Alexis Gallagher and Raja Biswas and Faisal Ladhak and Tom Aarsen and Nathan Cooper and Griffin Adams and Jeremy Howard and Iacopo Poli},
      year={2024},
      eprint={2412.13663},
      archivePrefix={arXiv},
      primaryClass={cs.CL},
      url={https://arxiv.org/abs/2412.13663}, 
}

@article{loshchilov2017decoupled,
  title={Decoupled weight decay regularization},
  author={Loshchilov, Ilya and Hutter, Frank},
  journal={arXiv preprint arXiv:1711.05101},
  year={2017}
}

@article{oquab2023dinov2,
  title={Dinov2: Learning robust visual features without supervision},
  author={Oquab, Maxime and Darcet, Timoth{\'e}e and Moutakanni, Th{\'e}o and Vo, Huy and Szafraniec, Marc and Khalidov, Vasil and Fernandez, Pierre and Haziza, Daniel and Massa, Francisco and El-Nouby, Alaaeldin and others},
  journal={arXiv preprint arXiv:2304.07193},
  year={2023}
}

@article{simeoni2025dinov3,
  title={Dinov3},
  author={Sim{\'e}oni, Oriane and Vo, Huy V and Seitzer, Maximilian and Baldassarre, Federico and Oquab, Maxime and Jose, Cijo and Khalidov, Vasil and Szafraniec, Marc and Yi, Seungeun and Ramamonjisoa, Micha{\"e}l and others},
  journal={arXiv preprint arXiv:2508.10104},
  year={2025}
}

@inproceedings{touvron2021training,
  title={Training data-efficient image transformers \& distillation through attention},
  author={Touvron, Hugo and Cord, Matthieu and Douze, Matthijs and Massa, Francisco and Sablayrolles, Alexandre and J{\'e}gou, Herv{\'e}},
  booktitle={International conference on machine learning},
  pages={10347--10357},
  year={2021},
  organization={PMLR}
}

@inproceedings{radford2021learning,
  title={Learning transferable visual models from natural language supervision},
  author={Radford, Alec and Kim, Jong Wook and Hallacy, Chris and Ramesh, Aditya and Goh, Gabriel and Agarwal, Sandhini and Sastry, Girish and Askell, Amanda and Mishkin, Pamela and Clark, Jack and others},
  booktitle={International conference on machine learning},
  pages={8748--8763},
  year={2021},
  organization={PmLR}
}

@article{baevski2020wav2vec,
  title={wav2vec 2.0: A framework for self-supervised learning of speech representations},
  author={Baevski, Alexei and Zhou, Yuhao and Mohamed, Abdelrahman and Auli, Michael},
  journal={Advances in neural information processing systems},
  volume={33},
  pages={12449--12460},
  year={2020}
}

@article{wah2011caltech,
  title={The caltech-ucsd birds-200-2011 dataset},
  author={Wah, Catherine and Branson, Steve and Welinder, Peter and Perona, Pietro and Belongie, Serge},
  year={2011},
  publisher={california institute of technology}
}

@article{krizhevsky2009learning,
  title={Learning multiple layers of features from tiny images},
  author={Krizhevsky, Alex and Hinton, Geoffrey and others},
  year={2009},
  publisher={Toronto, ON, Canada}
}

@inproceedings{cimpoi2014describing,
  title={Describing textures in the wild},
  author={Cimpoi, Mircea and Maji, Subhransu and Kokkinos, Iasonas and Mohamed, Sammy and Vedaldi, Andrea},
  booktitle={Proceedings of the IEEE conference on computer vision and pattern recognition},
  pages={3606--3613},
  year={2014}
}

@inproceedings{muennighoff2023mteb,
  title={Mteb: Massive text embedding benchmark},
  author={Muennighoff, Niklas and Tazi, Nouamane and Magne, Lo{\"\i}c and Reimers, Nils},
  booktitle={Proceedings of the 17th Conference of the European Chapter of the Association for Computational Linguistics},
  pages={2014--2037},
  year={2023}
}

@article{enevoldsen2025mmteb,
  title={Mmteb: Massive multilingual text embedding benchmark},
  author={Enevoldsen, Kenneth and Chung, Isaac and Kerboua, Imene and Kardos, M{\'a}rton and Mathur, Ashwin and Stap, David and Gala, Jay and Siblini, Wissam and Krzemi{\'n}ski, Dominik and Winata, Genta Indra and others},
  journal={arXiv preprint arXiv:2502.13595},
  year={2025}
}

@inproceedings{emotion,
  address = {Brussels, Belgium},
  author = {Saravia, Elvis  and
Liu, Hsien-Chi Toby  and
Huang, Yen-Hao  and
Wu, Junlin  and
Chen, Yi-Shin},
  booktitle = {Proceedings of the 2018 Conference on Empirical Methods in Natural Language Processing},
  doi = {10.18653/v1/D18-1404},
  editor = {Riloff, Ellen  and
Chiang, David  and
Hockenmaier, Julia  and
Tsujii, Jun{'}ichi},
  month = oct # {-} # nov,
  pages = {3687--3697},
  publisher = {Association for Computational Linguistics},
  title = {{CARER}: Contextualized Affect Representations for Emotion Recognition},
  url = {https://aclanthology.org/D18-1404},
  year = {2018},
}

@misc{fitzgerald2022massive,
  archiveprefix = {arXiv},
  author = {Jack FitzGerald and Christopher Hench and Charith Peris and Scott Mackie and Kay Rottmann and Ana Sanchez and Aaron Nash and Liam Urbach and Vishesh Kakarala and Richa Singh and Swetha Ranganath and Laurie Crist and Misha Britan and Wouter Leeuwis and Gokhan Tur and Prem Natarajan},
  eprint = {2204.08582},
  primaryclass = {cs.CL},
  title = {MASSIVE: A 1M-Example Multilingual Natural Language Understanding Dataset with 51 Typologically-Diverse Languages},
  year = {2022},
}

@inproceedings{Counterfactual,
  address = {Online and Punta Cana, Dominican Republic},
  author = {O{'}Neill, James  and
Rozenshtein, Polina  and
Kiryo, Ryuichi  and
Kubota, Motoko  and
Bollegala, Danushka},
  booktitle = {Proceedings of the 2021 Conference on Empirical Methods in Natural Language Processing},
  doi = {10.18653/v1/2021.emnlp-main.568},
  editor = {Moens, Marie-Francine  and
Huang, Xuanjing  and
Specia, Lucia  and
Yih, Scott Wen-tau},
  month = nov,
  pages = {7092--7108},
  publisher = {Association for Computational Linguistics},
  title = {{I} Wish {I} Would Have Loved This One, But {I} Didn{'}t {--} A Multilingual Dataset for Counterfactual Detection in Product Review},
  url = {https://aclanthology.org/2021.emnlp-main.568},
  year = {2021},
}

@inproceedings{li-etal-2021-mtop,
  address = {Online},
  author = {Li, Haoran  and
Arora, Abhinav  and
Chen, Shuohui  and
Gupta, Anchit  and
Gupta, Sonal  and
Mehdad, Yashar},
  booktitle = {Proceedings of the 16th Conference of the European Chapter of the Association for Computational Linguistics: Main Volume},
  doi = {10.18653/v1/2021.eacl-main.257},
  editor = {Merlo, Paola  and
Tiedemann, Jorg  and
Tsarfaty, Reut},
  month = apr,
  pages = {2950--2962},
  publisher = {Association for Computational Linguistics},
  title = {{MTOP}: A Comprehensive Multilingual Task-Oriented Semantic Parsing Benchmark},
  url = {https://aclanthology.org/2021.eacl-main.257},
  year = {2021},
}

@inproceedings{banking77,
  address = {Online},
  author = {Casanueva, I{\~n}igo  and
Tem{\v{c}}inas, Tadas  and
Gerz, Daniela  and
Henderson, Matthew  and
Vuli{\'c}, Ivan},
  booktitle = {Proceedings of the 2nd Workshop on Natural Language Processing for Conversational AI},
  doi = {10.18653/v1/2020.nlp4convai-1.5},
  editor = {Wen, Tsung-Hsien  and
Celikyilmaz, Asli  and
Yu, Zhou  and
Papangelis, Alexandros  and
Eric, Mihail  and
Kumar, Anuj  and
Casanueva, I{\~n}igo  and
Shah, Rushin},
  month = jul,
  pages = {38--45},
  publisher = {Association for Computational Linguistics},
  title = {Efficient Intent Detection with Dual Sentence Encoders},
  url = {https://aclanthology.org/2020.nlp4convai-1.5},
  year = {2020},
}

@misc{tweet-sentiment-extraction,
  author = {Maggie, Phil Culliton, Wei Chen},
  publisher = {Kaggle},
  title = {Tweet Sentiment Extraction},
  url = {https://kaggle.com/competitions/tweet-sentiment-extraction},
  year = {2020},
}

@misc{jigsaw-unintended-bias-in-toxicity-classification,
  author = {cjadams and Daniel Borkan and inversion and Jeffrey Sorensen and Lucas Dixon and Lucy Vasserman and nithum},
  publisher = {Kaggle},
  title = {Jigsaw Unintended Bias in Toxicity Classification},
  url = {https://kaggle.com/competitions/jigsaw-unintended-bias-in-toxicity-classification},
  year = {2019},
}

@article{wang2020asvspoof,
  title={ASVspoof 2019: A large-scale public database of synthesized, converted and replayed speech},
  author={Wang, Xin and Yamagishi, Junichi and Todisco, Massimiliano and Delgado, H{\'e}ctor and Nautsch, Andreas and Evans, Nicholas and Sahidullah, Md and Vestman, Ville and Kinnunen, Tomi and Lee, Kong Aik and others},
  journal={Computer Speech \& Language},
  volume={64},
  pages={101114},
  year={2020},
  publisher={Elsevier}
}

@article{cao2014crema,
  title={Crema-d: Crowd-sourced emotional multimodal actors dataset},
  author={Cao, Houwei and Cooper, David G and Keutmann, Michael K and Gur, Ruben C and Nenkova, Ani and Verma, Ragini},
  journal={IEEE transactions on affective computing},
  volume={5},
  number={4},
  pages={377--390},
  year={2014},
  publisher={IEEE}
}

@article{speechcommandsv2,
   author = { {Warden}, P.},
    title = "{Speech Commands: A Dataset for Limited-Vocabulary Speech Recognition}",
  journal = {ArXiv e-prints},
  archivePrefix = "arXiv",
  eprint = {1804.03209},
  primaryClass = "cs.CL",
    year = 2018,
    month = apr,
    url = {https://arxiv.org/abs/1804.03209},
}

@misc{asaniczka_2023,
	title={Amazon Products Dataset 2023 (1.4M Products)},
	url={https://www.kaggle.com/ds/3798081},
	DOI={10.34740/KAGGLE/DS/3798081},
	publisher={Kaggle},
	author={Asaniczka},
	year={2023}
}

@article{rostamzadeh2018fashion,
  title={Fashion-gen: The generative fashion dataset and challenge},
  author={Rostamzadeh, Negar and Hosseini, Seyedarian and Boquet, Thomas and Stokowiec, Wojciech and Zhang, Ying and Jauvin, Christian and Pal, Chris},
  journal={arXiv preprint arXiv:1806.08317},
  year={2018}
}

@inproceedings{nakamura2020fakeddit,
  title={Fakeddit: A new multimodal benchmark dataset for fine-grained fake news detection},
  author={Nakamura, Kai and Levy, Sharon and Wang, William Yang},
  booktitle={Proceedings of the twelfth language resources and evaluation conference},
  pages={6149--6157},
  year={2020}
}

@inproceedings{cordts2016cityscapes,
  title={The cityscapes dataset for semantic urban scene understanding},
  author={Cordts, Marius and Omran, Mohamed and Ramos, Sebastian and Rehfeld, Timo and Enzweiler, Markus and Benenson, Rodrigo and Franke, Uwe and Roth, Stefan and Schiele, Bernt},
  booktitle={Proceedings of the IEEE conference on computer vision and pattern recognition},
  pages={3213--3223},
  year={2016}
}

@inproceedings{caesar2018coco,
  title={Coco-stuff: Thing and stuff classes in context},
  author={Caesar, Holger and Uijlings, Jasper and Ferrari, Vittorio},
  booktitle={Proceedings of the IEEE conference on computer vision and pattern recognition},
  pages={1209--1218},
  year={2018}
}

@article{marone2025mmbert,
  title={mmbert: A modern multilingual encoder with annealed language learning},
  author={Marone, Marc and Weller, Orion and Fleshman, William and Yang, Eugene and Lawrie, Dawn and Van Durme, Benjamin},
  journal={arXiv preprint arXiv:2509.06888},
  year={2025}
}

@inproceedings{barbieri2020tweeteval,
title={{TweetEval:Unified Benchmark and Comparative Evaluation for Tweet Classification}},
author={Barbieri, Francesco and Camacho-Collados, Jose and Espinosa-Anke, Luis and Neves, Leonardo},
booktitle={Proceedings of Findings of EMNLP},
year={2020}
}

@inproceedings{barbieri2018semeval,
  title={Semeval 2018 task 2: Multilingual emoji prediction},
  author={Barbieri, Francesco and Camacho-Collados, Jose and Ronzano, Francesco and Espinosa-Anke, Luis and
    Ballesteros, Miguel and Basile, Valerio and Patti, Viviana and Saggion, Horacio},
  booktitle={Proceedings of The 12th International Workshop on Semantic Evaluation},
  pages={24--33},
  year={2018}
}

@inproceedings{lezama2018ole,
  title={Ole: Orthogonal low-rank embedding-a plug and play geometric loss for deep learning},
  author={Lezama, Jos{\'e} and Qiu, Qiang and Mus{\'e}, Pablo and Sapiro, Guillermo},
  booktitle={Proceedings of the IEEE Conference on Computer Vision and Pattern Recognition},
  pages={8109--8118},
  year={2018}
}
\bibliographystyle{icml2026}

\newpage
\appendix
\onecolumn
\renewcommand{\thetable}{A\arabic{table}}
\setcounter{table}{0}
\section{Appendix}

\subsection{Discussion and Future Directions}
\label{sec:future_linked_to_geometry}

The results in this work consistently show that only a small subset of intermediate representations exhibit favorable geometric and statistical properties, particularly with respect to isotropy, reduced redundancy, and stable ridge-based linear alignment. These observations suggest several concrete and feasible directions for future research that directly build on the principles validated in our experiments.

\paragraph{Parameter-Efficient Adaptation.}
Our empirical findings indicate that layers selected based on residual alignment and isotropy tend to contribute complementary information for downstream tasks. This naturally aligns with parameter-efficient fine-tuning methods such as low-rank adaptation and lightweight adapters, where adaptation capacity must be carefully allocated. A promising direction is to use the same layer-wise diagnostics to determine where adaptation modules should be inserted, focusing updates on layers that are both linearly accessible and well-conditioned. Such integration may be particularly effective in low-data regimes, where our results already suggest improved robustness through regularization and spectral balance.

\paragraph{Unsupervised and Noisy Regimes.}
Several components emphasized in this work, particularly isotropy and redundancy, do not depend on label information and therefore remain meaningful in unsupervised settings. This suggests a feasible extension in which intermediate representations are selected based on intrinsic geometric properties alone, providing a principled way to identify stable and complementary features prior to downstream use. In settings with limited or noisy supervision, our results indicate that ridge regularization and spectral balance jointly reduce sensitivity to spurious correlations. Future work could explore adaptive strategies that emphasize geometry-driven criteria when labels are unreliable, while gradually incorporating residual alignment as supervision improves. Such approaches may be especially relevant for real-world data, where clean labels are often scarce or imperfect.

\paragraph{Scaling, Multimodality, and Generation.}
As models grow deeper and are applied to more diverse tasks, exhaustive layer-wise analysis becomes increasingly costly. However, the core operations used here, ridge regression, orthogonal projection, and covariance-based isotropy estimation, admit efficient approximations. Hierarchical or block-level selection strategies could preserve the selection behavior observed in our experiments while remaining practical for large-scale architectures. The same principles extend naturally to multimodal and generative settings, where redundancy across modalities or conditioning features is common. Explicitly favoring isotropic and orthogonal representations may help identify features that contribute stable and complementary information prior to fusion or conditioning.

\paragraph{Interpretability and Robustness.}
The layer-wise scores introduced in this work provide a structured view of how representation geometry evolves across depth and tasks. Analyzing trends in isotropy, redundancy, and residual alignment can offer insight into why certain layers consistently support downstream adaptation better than others. This geometry-driven perspective complements existing interpretability approaches by grounding explanations in measurable spectral properties. Moreover, because isotropy and redundancy do not rely on labels, these ideas are well suited to unsupervised, low-data, and noisy settings. Emphasizing geometry-driven criteria in such regimes may reduce sensitivity to unreliable supervision, while ridge-based residual fitting provides a controlled mechanism for incorporating task information when available.

Overall, these directions highlight how the empirical patterns and theoretical insights identified in this work can be extended to larger models, broader tasks, and more challenging data regimes, while remaining grounded in the same underlying principles.

\subsection{Limitations}
Theoretical and modeling assumptions limit guarantees. Our analysis and the isotropy objective are derived under simplified priors (e.g., a rotationally invariant task prior) that are useful as conservative, worst-case safeguards but do not exactly match every real downstream problem. Tasks with strong, structured priors or domain-specific signal can favor particular spectral directions, so forcing isotropy is not guaranteed to be strictly optimal in every case. Consequently, the diagnostics and theorems should be read as guidance about structural risk and probe stability rather than as absolute causal claims about which features are “true” for a given task.

Hyperparameters and calibration sensitivity remain a practical limitation. LOES depends on a few interpretable knobs (ridge strength, isotropy weight, redundancy weight, class-geometry weight, and the number of selected layers) and on a small calibration set. Selection and final performance can vary when these settings are poorly chosen or when the calibration data is non-representative of the deployment distribution. Crucially, however, our empirical results show that the method is robust: even under suboptimal hyperparameter settings or modestly misspecified training conditions, LOES typically improves over the single-layer baseline. Ridge regularization and the isotropy penalty in particular reduce variance and make the probe less brittle to noisy labels or small calibration budgets, so practical gains often persist without extensive tuning. A second practical caveat is that models pretrained narrowly on a single distribution (e.g.\ supervised ImageNet distillation in DeiT-B/16) already concentrate task-discriminative signal in the final layer, so the headroom for LOES is small and gains can be marginal or slightly negative (see Appendix~\ref{tab:loes_test_statistics}).

Computation and greedy search introduce additional practical constraints. The greedy selection procedure is efficient and effective in our experiments but is not guaranteed to find the global optimum; early choices can affect later residuals. Likewise, exact closed-form solves and eigendecompositions scale poorly as embedding dimension or spatial resolution grows. In practice these costs are mitigated by small calibration budgets, low layer budgets (we use $k!\in!{3,4}$), and approximate linear-algebra techniques when necessary, and the net effect is still a reproducible accuracy gain across modalities. Nevertheless, users should be aware that extreme model sizes, strong distribution shift in calibration data, or highly noisy labels may reduce the margin of improvement and may require lightweight approximations or modest validation to retain the benefits described in the paper.

\subsection{Layer Fusion Mechanisms}
\label{sec:adaptors_and_probes}
\subsubsection{Adaptors}

Let

\[
\mathcal{S} = \{l_1, l_2, \dots, l_k\}
\]

denote the set of layer indices selected by LOES, where \(k = |\mathcal{S}|\). For an input sample \(x\), the hidden representation extracted from layer \(l_i\) is denoted as

\[
\mathbf{h}_{l_i} \in \mathbb{R}^{D}
\]

where \(D\) is the hidden dimensionality of the foundation model. To reduce dimensionality and learn task-specific transformations, each selected layer representation is passed through an independent lightweight adapter module. Each adapter consists of a Layer Normalization layer, followed by a linear projection and a GELU activation:

\[
\mathbf{z}_{l_i} =
\mathrm{GELU}\left(
W_i \, \mathrm{LN}(\mathbf{h}_{l_i}) + b_i
\right)
\]

where

\[
W_i \in \mathbb{R}^{d_p \times D}, \qquad
b_i \in \mathbb{R}^{d_p},
d_p \in \mathbb{R}^{256},
\]

The transformed representations from all selected layers are then concatenated to form a fused representation:

\[
\mathbf{z}_{\text{fused}}
=
[\mathbf{z}_{l_1}; \mathbf{z}_{l_2}; \dots ; \mathbf{z}_{l_k}]
\in \mathbb{R}^{k d_p}
\]

\subsubsection{Probes}
The fused representation is subsequently passed through a task-specific prediction head consisting of Layer Normalization, Dropout, and a linear transformation:

\[
\hat{\mathbf{y}}
=
W_c \,
\mathrm{Dropout}
\left(
\mathrm{LN}(\mathbf{z}_{\text{fused}})
\right)
+ b_c
\]

where

\[
W_c \in \mathbb{R}^{C \times d_{\text{fused}}}
\]

and \(C\) is the number of output classes for classification tasks. For regression settings, \(C=1\).

The overall architecture enables LOES-selected intermediate representations to be efficiently fused while maintaining a lightweight parameter footprint through low-dimensional adapters.

\subsection{Additional Results}
\subsubsection{Effect of Hyperparameters}
\label{sec:loes_hyperparameters}

LOES selects a subset of $K$ layers by minimizing a composite objective that balances residual task fitting and geometric regularization. At each selection step, the score for a candidate layer is
\begin{equation}
\mathcal{L}
=
\mathcal{L}_{\text{res}}
+
\alpha \,(1 - \mathrm{Iso})
+
\gamma \,\mathrm{Red}
-
\eta \,\mathrm{Tri},
\end{equation}
where $\mathcal{L}_{\text{res}}$ denotes the residual regression loss, $\mathrm{Iso}$ measures representation isotropy, $\mathrm{Red}$ captures similarity with previously selected layers, and $\mathrm{Tri}$ encourages class-separating geometry via centroid-based triangular area.

\begin{table*}[htbp]
\centering
\small
\setlength{\tabcolsep}{7pt}
\begin{tabular}{lcccccc}
\toprule
\textbf{Method} &
$\boldsymbol{\alpha}$ &
$\boldsymbol{\gamma}$ &
$\boldsymbol{\eta}$ &
$\boldsymbol{K}$ &
$\boldsymbol{n_{\text{cal}}}$ &
\textbf{Test Acc $\uparrow$} \\
\midrule

Last Layer & -- & -- & -- & 1 & -- & 81.37 \\

Residual Only & -- & -- & -- & 4 & 64  & 90.61 \\
Residual Only & -- & -- & -- & 4 & 512 & 90.93 \\

\cmidrule(lr){1-7}

LOES & 0.0 & 0.0 & 0.0 & 4 & 64  & 90.61 \\
LOES & 0.0 & 0.0 & 0.0 & 4 & 512 & 90.93 \\
LOES & 0.0 & 0.0 & 0.01 & 4 & 512 & 91.06 \\
LOES & 0.0 & 0.1 & 0.0 & 4 & 512 & 90.97 \\
LOES & 0.0 & 0.5 & 0.0 & 4 & 512 & 93.00 \\

LOES &
0.5 &
0.0
&
0.0 &
4 &
64 &
95.90\\

\midrule

LOES & 1.0 & 0.5 & 0.1 & 4 & 64  & 95.90 \\
LOES & 1.0 & 0.5 & 0.1 & 4 & 512 & \textbf{95.96} \\

\bottomrule
\end{tabular}
\caption{
\textbf{Ablation of LOES hyperparameters on MTOP Domain Classification.}
All results are reported using \textbf{ModernBERT-base}.
$\alpha$ controls isotropy regularization, $\gamma$ penalizes redundancy with previously selected layers,
and $\eta$ promotes class-separating geometry.
Higher accuracy is better.
}
\label{tab:loes_ablation_mtop}
\end{table*}

\begin{table*}[htbp]
\centering
\small
\setlength{\tabcolsep}{7pt}
\begin{tabular}{lcccccll}
\toprule
\textbf{Method} &
$\boldsymbol{\alpha}$ &
$\boldsymbol{\gamma}$ &
$\boldsymbol{\eta}$ &
$\boldsymbol{K}$ &
$\boldsymbol{n_{\text{cal}}}$ &
\textbf{Selected Layers} &
\textbf{Test Acc $\uparrow$} \\
\midrule

Last Layer & -- & -- & -- & 1 & -- & [12] & 90.89 \\
Last-4 Concat & -- & -- & -- & 4 & -- & [9,10,11,12] & 94.83 \\

\cmidrule(lr){1-8}

LOES & 0.0 & 0.0 & 0.0 & 4 & 512 & [8, 11, 12, 10] & 95.53 \\
LOES & 0.0 & 0.1 & 0.0 & 4 & 512 & [8, 0, 1, 2]    & 97.90 \\
LOES & 0.0 & 0.5 & 0.0 & 4 & 512 & [8, 0, 1, 2]    & 97.90 \\

\midrule

LOES & 1.0 & 0.5 & 0.1 & 4 & 64  & [1, 0, 3, 2] & 98.13 \\
LOES & 1.0 & 0.5 & 0.1 & 4 & 512 & [1, 0, 3, 2] & \textbf{98.30} \\

\bottomrule
\end{tabular}
\caption{
\textbf{Cross-modal hyperparameter validation on ASVspoof~2019 LA with Wav2Vec~2.0 (frozen).}
The default configuration ($\alpha{=}1.0,\ \gamma{=}0.5,\ \eta{=}0.1$) selected for text in Table~\ref{tab:loes_ablation_mtop} also yields the best performance in the audio modality, with under 0.3 percentage-point spread across all non-zero settings. Even the weakest LOES configuration improves over Last-4 Concat and the last-layer baseline.
}
\label{tab:loes_ablation_asvspoof}
\end{table*}

\begin{figure}[htbp]
    \centering
    \includegraphics[width=\linewidth]{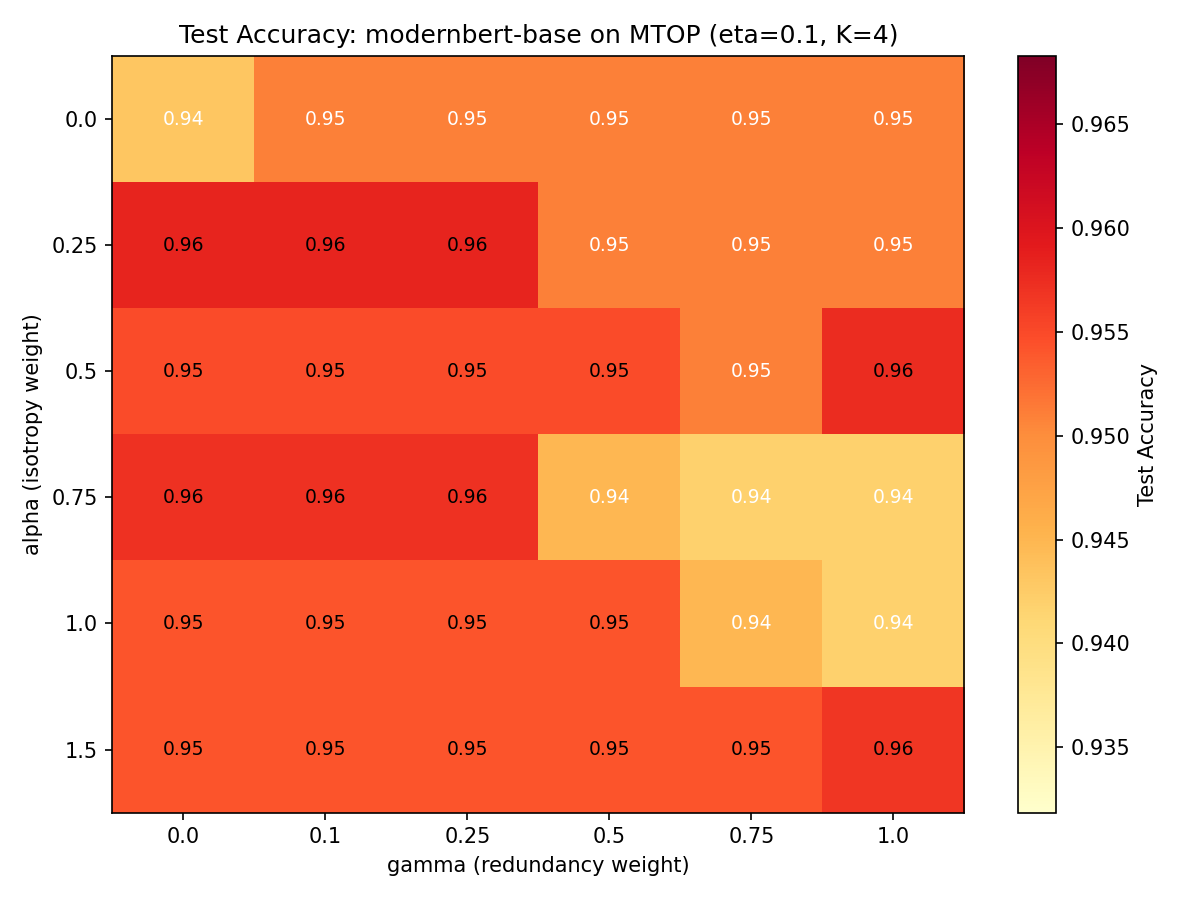}
    \caption{\textbf{2D sensitivity sweep over $\alpha$ and $\gamma$ on MTOP (ModernBERT-base, $K{=}4$).} The accuracy surface is a broad plateau: the default $(\alpha{=}1.0,\gamma{=}0.5)$ reaches 95.90\%, within 0.25 percentage points of the grid optimum (96.15\%), and even the weakest configuration in the grid (94.80\%) outperforms the last-layer baseline (81.37\%) by more than 13 points. LOES is therefore insensitive to fine hyperparameter tuning within a wide regime.}
    \label{fig:alpha_gamma_heatmap}
\end{figure}

\begin{table*}[t]
\centering
\small
\setlength{\tabcolsep}{5pt}
\begin{tabular}{c|l|cccccccc}
\toprule
\textbf{Cal (\%)} & \textbf{Method} &
\textbf{Emotion} &
\textbf{AM-Intent} &
\textbf{AM-Scenario} &
\textbf{AM-CF} &
\textbf{MTOP} &
\textbf{Banking77} &
\textbf{Tweet-Sent} &
\textbf{Avg.} \\
\midrule

-- & Last 
& 48.39 & 11.20 & 62.84 & 83.58 & 78.07 & 35.47 & 60.23 & 54.25 \\

-- & Last-4
& 50.30 & 11.94 & 63.35 & 83.88 & 79.23 & 44.67 & 60.52 & 56.27 \\

\midrule

0.01 & LOES 
& 54.43 & 14.83 & 78.45 & 85.67 & 94.03 & 58.88 & 66.20 & 64.07 \\
& \emph{Selected}
& \emph{[5,15,14,20]} & \emph{[3,15,1,4]} & \emph{[3,15,1,4]} &
\emph{[14,6,15,21]} & \emph{[1,15,3,21]} &
\emph{[1,15,4,5]} & \emph{[1,15,4,20]} & -- \\

\midrule

0.05 & LOES
& 53.63 & 15.03 & 75.89 & 83.58 & 92.16 & 57.77 & 66.32 & 63.48 \\
& \emph{Selected}
& \emph{[4,15,20,21]} & \emph{[3,15,1,4]} & \emph{[3,15,4,5]} &
\emph{[6,4,3,10]} & \emph{[3,15,4,21]} &
\emph{[4,15,5,1]} & \emph{[1,15,4,21]} & -- \\

\midrule

0.10 & LOES 
& 55.94 & 15.00 & 78.65 & 84.48 & 94.73 & 58.55 & 65.97 & 64.76 \\
& \emph{Selected}
& \emph{[1,15,14,20]} & \emph{[3,15,1,2]} & \emph{[3,15,1,4]} &
\emph{[1,6,19,21]} & \emph{[1,15,0,4]} &
\emph{[1,16,4,5]} & \emph{[1,15,4,6]} & -- \\

\midrule

0.20 & LOES 
& 55.09 & 14.59 & 79.12 & 84.78 & 93.98 & 57.67 & 65.97 & 64.46 \\
& \emph{Selected}
& \emph{[1,15,14,5]} & \emph{[3,15,1,11]} & \emph{[3,15,1,2]} &
\emph{[6,4,19,21]} & \emph{[1,15,3,4]} &
\emph{[1,15,4,5]} & \emph{[1,15,4,5]} & -- \\

\midrule

0.30 & LOES
& 56.45 & 14.79 & 78.78 & 86.27 & 94.14 & 58.49 & 66.11 & 65.00 \\
& \emph{Selected}
& \emph{[1,15,14,5]} & \emph{[3,15,1,2]} & \emph{[3,15,1,2]} &
\emph{[6,4,3,21]} & \emph{[1,15,3,4]} &
\emph{[1,15,4,5]} & \emph{[1,15,4,5]} & -- \\

\midrule

0.40 & LOES
& 57.10 & 14.86 & 79.35 & 85.07 & 94.05 & 57.77 & 66.17 & 65.48 \\
& \emph{Selected}
& \emph{[1,15,14,5]} & \emph{[3,15,1,2]} & \emph{[3,16,1,2]} &
\emph{[6,4,3,19]} & \emph{[1,15,3,4]} &
\emph{[1,15,4,5]} & \emph{[1,15,4,5]} & -- \\

\midrule

0.50 & LOES 
& 56.60 & 14.39 & 79.02 & 87.46 & 94.41 & 56.86 & 66.64 & 65.62 \\
& \emph{Selected}
& \emph{[1,15,14,5]} & \emph{[3,15,1,2]} & \emph{[3,15,1,2]} &
\emph{[1,6,3,10]} & \emph{[1,15,3,4]} &
\emph{[1,15,4,5]} & \emph{[1,15,4,5]} & -- \\

\bottomrule
\end{tabular}

\caption{
\textbf{Comparison of Last, Last-4, and LOES ($k=4$) on MTEB classification tasks.}
We report test accuracy (\%) using MBERT-B.
\emph{Last} uses the final encoder layer, while \emph{Last-4} concatenates the final four layers.
\emph{LOES ($k=4$)} selects four layers using a calibration set comprising 1--50\% of the training data and concatenates their representations.
For each calibration fraction, the first row reports accuracy and the italicized row lists the selected encoder layers per dataset.
Average accuracy is computed across all seven datasets.
}
\label{tab:loes_with_baselines}
\end{table*}

\begin{table*}[t]
\centering

\begin{tabular}{llccl}
\toprule
\textbf{Model} & \textbf{Mode} & $\boldsymbol{k}$ & \textbf{Accuracy (\%)} & \textbf{Selected Layers} \\
\midrule

\multirow{12}{*}{\textbf{ModernBERT-base}}
& Last & 1 & 81.37 & [21] \\
\cline{2-5}

& Last-3 & 3 & 82.44 & [19,20,21] \\
\cline{2-5}

& LOES & 1 & 92.50 & [3] \\
& LOES & 2 & 95.51 & [1,21] \\
& LOES & 3 & 95.99 & [3,21,1] \\
& LOES & 4 & 95.76 & [3,21,1,4] \\
& LOES & 5 & \textbf{96.01} & [1,21,3,4,10] \\
& LOES & 6 & 95.92 & [1,20,3,4,10,21] \\
& LOES & 7 & 95.78 & [1,21,3,4,10,17,19] \\
& LOES & 8 & 95.67 & [3,21,1,4,10,19,20,18] \\
& LOES & 9 & 95.12 & [1,21,3,4,15,20,18,19,16] \\
& LOES & 10 & 95.42 & [3,21,1,4,11,20,19,18,17,15] \\

\midrule

\multirow{12}{*}{\textbf{BERT-base}}
& Last & 1 & 96.67 & [11] \\
\cline{2-5}

& Last-3 & 3 & 97.40 & [9,10,11] \\
\cline{2-5}

& LOES & 1 & 96.08 & [7] \\
& LOES & 2 & 97.15 & [11,10] \\
& LOES & 3 & 97.97 & [11,10,6] \\
& LOES & 4 & 97.74 & [11,10,6,7] \\
& LOES & 5 & 97.83 & [6,10,7,11,9] \\
& LOES & 6 & 97.74 & [11,10,6,9,7,8] \\
& LOES & 7 & 98.18 & [7,10,6,9,8,11,4] \\
& LOES & 8 & 98.34 & [11,10,9,6,7,8,4,5] \\
& LOES & 9 & 98.38 & [11,10,6,9,7,8,4,5,3] \\
& LOES & 10 & \textbf{98.43} & [7,10,9,6,11,8,5,4,3,1] \\

\bottomrule
\end{tabular}

\caption{\textbf{Effect of LOES layer count ($k$) on MTOP domain classification.}
Test accuracy (\%) reported for varying $k$.
Horizontal rules separate \emph{Last}, \emph{Last-3}, and LOES sweeps.
Best result per model is in bold.}
\label{tab:loes_layer_sweep_mtop}
\end{table*}

\subsubsection{Comparison Against Globally Optimal Subsets}
\label{sec:global_optimal}

LOES uses a greedy selection rule, which is not guaranteed to recover the globally optimal subset of $K$ layers. To quantify the gap, we exhaustively trained and evaluated all $\binom{12}{3}{=}220$ three-layer subsets of frozen BERT-base on two MTEB benchmarks and compared each against the greedy LOES choice.

\begin{table}[htbp]
\centering
\small
\setlength{\tabcolsep}{6pt}
\begin{tabular}{lllc}
\toprule
\textbf{Dataset} & \textbf{Method} & \textbf{Layers} & \textbf{Test Acc} \\
\midrule
\multirow{3}{*}{MTEB MTOP}
& Global Optimal & [3, 6, 11]  & \textbf{97.83} \\
& LOES           & [11, 10, 6] & 96.67 \\
& Last Layer     & [11]        & 95.51 \\
\midrule
\multirow{3}{*}{MTEB Emotion}
& Global Optimal & [3, 10, 11] & \textbf{71.08} \\
& LOES           & [11, 10, 9] & 69.39 \\
& Last Layer     & [11]        & 68.40 \\
\bottomrule
\end{tabular}
\caption{\textbf{LOES against exhaustive search over $\binom{12}{3}{=}220$ subsets} on frozen BERT-base. LOES reaches within 1.16 percentage points of the optimum on MTOP and 1.69 on Emotion, sharing 2 of 3 selected layers with the optimal subset in both cases.}
\label{tab:global_optimal}
\end{table}

LOES recovers a subset within 1.16 percentage points (MTOP) and 1.69 percentage points (Emotion) of the globally optimal one, sharing two of three layers with it on both datasets. Crucially, the exhaustive search required training all 220 subsets, costing several GPU-hours even on BERT-base. For deeper encoders the search space grows rapidly: ModernBERT-base (22 layers) gives $\binom{22}{3}{=}1{,}540$ subsets, and BERT-large (24 layers) gives $\binom{24}{3}{=}2{,}024$. LOES, in contrast, requires a small calibration set and a few closed-form computations followed by a single training run, and yields the largest gains precisely on these deeper models (Table~\ref{tab:mteb_text}).

\paragraph{Isotropy weight $\alpha$.}
Non-zero isotropy regularization is essential for strong performance. Setting $\alpha=0$ leads to noticeably lower accuracy, while moderate values yield substantial gains. On MTOP Domain Classification, the best-performing configurations consistently occur at $\alpha=0.5$ to  $\alpha=1.0$, indicating that mildly encouraging isotropic representations improves linear separability without suppressing task-relevant signal.

\paragraph{Redundancy weight $\gamma$.}
Contrary to acting purely as a penalty, the redundancy term can positively contribute to performance. Increasing $\gamma$ from zero improves accuracy in several settings, with the best results achieved at $\gamma=0.5$. This suggests that controlled redundancy encourages the selection of semantically aligned layers that reinforce task-discriminative features, rather than enforcing strict decorrelation.

\paragraph{Geometric weight $\eta$.}
The triangular class-geometry term has a stabilizing effect but does not require precise tuning. Performance remains stable across a broad range of $\eta \in [0, 0.2]$, once suitable layers have been selected. This indicates that class-geometry regularization serves as a secondary refinement rather than a dominant driver of layer selection.

\paragraph{Number of selected layers $K$ and calibration size $n_{\text{cal}}$.}
All experiments use $K{=}4$. As shown in Appendix Table~\ref{tab:loes_layer_sweep_mtop}, accuracy continues to improve for some encoders beyond this point but at a roughly linear cost in head parameters and fused-representation dimension, so $K{\in}\{3,4\}$ offers the best accuracy-efficiency tradeoff. Calibration is sample-efficient: $n_{\text{cal}}{=}64$ consistently matches or closely approaches the best performance, demonstrating that LOES is robust to calibration set size.

\paragraph{Summary.}
The optimal regime
\[
\alpha = [0.5,1.0],\quad
\gamma = 0.5,\quad
\eta \in [0, 0.2],\quad
K = 4
\]
balances residual task fitting with representation geometry. This regime is stable across modalities: the same default configuration is near-optimal for text (Table~\ref{tab:loes_ablation_mtop}) and for audio (Table~\ref{tab:loes_ablation_asvspoof}), and the 2D $\alpha$--$\gamma$ surface in Figure~\ref{fig:alpha_gamma_heatmap} forms a broad plateau rather than a sharp optimum. These findings validate the design of LOES and support its use without per-task hyperparameter tuning.

\subsubsection{Non-Linear Probing}

We did some additional experiments using a ReLU activation in the probe (\ref{sec:adaptors_and_probes}:
\[
\hat{\mathbf{y}}
=
\mathrm{ReLU}
\left(
W_c \,
\mathrm{Dropout}
\left(
\mathrm{LN}(\mathbf{z}_{\text{fused}})
\right)
+ b_c
\right)
\]
Table (\ref{tab:non_lin_probe}) shows the results on MBERT-B for AM-Scenario and MTOP datasets.

\begin{table}[h]
\centering
\begin{tabular}{llcc}
\hline
Model & Mode (Non-Linear Probe) & AM-Scenario & MTOP \\
\hline
MBERT-B & Last   & 62.40 & 80.62 \\
MBERT-B & Last-4 & 64.40 & 81.00 \\
MBERT-B & LOES-4 & \textbf{77.41} & \textbf{94.04} \\
\hline
\end{tabular}
\caption{Performance of LOES under non-linear probing (ReLU) on
MBERT-B. LOES generalizes effectively beyond linear probes, significantly outperforming standard baselines.}
\label{tab:non_lin_probe}
\end{table}

\begin{table*}[htbp]
\centering
\small
\setlength{\tabcolsep}{6pt}
\begin{tabular}{llcccc}
\toprule
\textbf{Model} & \textbf{Dataset} &
\textbf{$n$} &
\textbf{$t$-stat} &
\textbf{$p$-value} &
\textbf{Significant} \\
\midrule

CLIP-B32 & CUB-200 & 5 & 1.69 & 1.67e-01 & No \\
CLIP-B32 & CIFAR-100 & 5 & 26.05 & 1.29e-05 & Yes \\
CLIP-B32 & DTD & 5 & 19.68 & 3.93e-05 & Yes \\
CLIP-B32 & Mini-IN & 5 & 25.54 & 1.40e-05 & Yes \\
CLIP-B32 & S. Dogs & 5 & 3.60 & 2.32e-02 & Yes \\
CLIP-B32 & S. Cars & 5 & 46.72 & 1.26e-06 & Yes \\
CLIP-B32 & SUN397 & 5 & 80.83 & 1.40e-07 & Yes \\

\midrule

DINOv2-S & CUB-200 & 5 & 26.64 & 1.18e-05 & Yes \\
DINOv2-S & CIFAR-100 & 5 & 18.84 & 4.67e-05 & Yes \\
DINOv2-S & DTD & 5 & 13.46 & 1.76e-04 & Yes \\
DINOv2-S & Mini-IN & 5 & 13.67 & 1.66e-04 & Yes \\
DINOv2-S & S. Dogs & 5 & 12.03 & 2.74e-04 & Yes \\
DINOv2-S & S. Cars & 5 & 53.10 & 7.53e-07 & Yes \\
DINOv2-S & SUN397 & 5 & 38.81 & 2.63e-06 & Yes \\

\midrule

DINOv3-S/16 & CUB-200 & 5 & 17.66 & 6.03e-05 & Yes \\
DINOv3-S/16 & CIFAR-100 & 5 & 4.54 & 1.05e-02 & Yes \\
DINOv3-S/16 & DTD & 5 & 6.40 & 3.05e-03 & Yes \\
DINOv3-S/16 & Mini-IN & 5 & 3.65 & 2.17e-02 & Yes \\
DINOv3-S/16 & S. Dogs & 5 & 7.20 & 1.97e-03 & Yes \\
DINOv3-S/16 & S. Cars & 5 & 30.98 & 6.47e-06 & Yes \\
DINOv3-S/16 & SUN397 & 5 & 33.72 & 4.61e-06 & Yes \\

\midrule

DeiT-B/16 & CUB-200 & 5 & 23.79 & 1.85e-05 & Yes \\
DeiT-B/16 & CIFAR-100 & 5 & 7.73 & 1.50e-03 & Yes \\
DeiT-B/16 & DTD & 5 & 15.81 & 9.35e-05 & Yes \\
DeiT-B/16 & Mini-IN & 5 & -7.96 & 1.35e-03 & Yes \\
DeiT-B/16 & S. Dogs & 5 & -9.38 & 7.19e-04 & Yes \\
DeiT-B/16 & S. Cars & 5 & 95.94 & 7.08e-08 & Yes \\
DeiT-B/16 & SUN397 & 5 & 31.71 & 5.90e-06 & Yes \\

\midrule

MAE-B/16 & CUB-200 & 5 & 31.90 & 5.76e-06 & Yes \\
MAE-B/16 & CIFAR-100 & 5 & 38.74 & 2.65e-06 & Yes \\
MAE-B/16 & DTD & 5 & 18.53 & 4.99e-05 & Yes \\
MAE-B/16 & Mini-IN & 5 & 18.52 & 5.01e-05 & Yes \\
MAE-B/16 & S. Dogs & 5 & 36.68 & 3.30e-06 & Yes \\
MAE-B/16 & S. Cars & 5 & 32.26 & 5.51e-06 & Yes \\
MAE-B/16 & SUN397 & 5 & 168.03 & 7.53e-09 & Yes \\

\midrule

ViT-IN21k-B/16 & CUB-200 & 5 & 30.62 & 6.78e-06 & Yes \\
ViT-IN21k-B/16 & CIFAR-100 & 5 & 23.07 & 2.09e-05 & Yes \\
ViT-IN21k-B/16 & DTD & 5 & 19.10 & 4.43e-05 & Yes \\
ViT-IN21k-B/16 & Mini-IN & 5 & -0.52 & 6.31e-01 & No \\
ViT-IN21k-B/16 & S. Dogs & 5 & 1.41 & 2.33e-01 & No \\
ViT-IN21k-B/16 & S. Cars & 5 & 54.10 & 6.99e-07 & Yes \\
ViT-IN21k-B/16 & SUN397 & 5 & 15.09 & 1.12e-04 & Yes \\

\bottomrule
\end{tabular}

\caption{
\textbf{Statistical significance of LOES vs last-layer baseline on the test set.}
Paired $t$-tests are conducted over $n{=}5$ seeds.
}
\label{tab:loes_test_statistics}
\end{table*}

\begin{table}[t]
\centering
\small
\setlength{\tabcolsep}{6pt}
\begin{tabular}{lcccc}
\toprule
\textbf{Dataset} &
\textbf{\# Models} &
\textbf{$F$-statistic} &
\textbf{$p$-value} &
\textbf{Significant} \\
\midrule

CUB-200 & 6 & 24983.42 & 9.15e-44 & Yes \\
CIFAR-100 & 6 & 5813.34 & 3.60e-36 & Yes \\
DTD & 6 & 107.39 & 1.29e-15 & Yes \\
Mini-IN & 6 & 7710.42 & 1.22e-37 & Yes \\
S. Dogs & 6 & 14983.21 & 4.22e-41 & Yes \\
S. Cars & 6 & 20108.81 & 1.24e-42 & Yes \\
SUN397 & 6 & 8300.11 & 5.04e-38 & Yes \\

\bottomrule
\end{tabular}

\caption{
\textbf{One-way ANOVA across models using LOES test accuracy.}
The analysis tests whether different pretrained encoders achieve significantly different performance on each dataset.
All datasets show statistically significant differences across models ($p<0.05$).
}
\label{tab:anova_models_test}
\end{table}

\begin{table}[htbp]
\centering
\small
\setlength{\tabcolsep}{8pt}
\begin{tabular}{lcc}
\toprule
\textbf{Method} & \textbf{MTOP} & \textbf{AM-Scenario} \\
\midrule
Last Layer                          & 80.96 & 63.45 \\
Learnable Concat (all 22 layers)    & 93.91 & 77.51 \\
LOES-4                              & \textbf{94.19} & \textbf{78.51} \\
\bottomrule
\end{tabular}
\caption{\textbf{LOES against a stronger learned fusion baseline on MBERT-base.} Learnable Concat applies a per-layer linear projection to a low-dimensional space and concatenates across all 22 layers, giving it 1.7x more head parameters and FLOPs than LOES-4. LOES-4 still matches or exceeds this baseline on both tasks, indicating that the gains come from the selection criterion rather than from richer downstream fusion.}
\label{tab:learnable_concat}
\end{table}

\subsubsection{Sensitivity to the Ridge Regularizer $\lambda$}
\label{sec:lambda_sensitivity}

Throughout all experiments, the Tikhonov regularizer $\lambda$ in the ridge probe (Eq.~\ref{eq:ridge}) is fixed at $10^{-3}$ for numerical stability and is not tuned per dataset. To verify that LOES is not sensitive to this choice, we sweep $\lambda$ across four orders of magnitude on two MTEB classification tasks using ModernBERT-base. As shown in Appendix Table~\ref{tab:lambda_sensitivity}, LOES outperforms the Last-4 concatenation baseline by 14--16 percentage points across all values of $\lambda$, with less than $1.4\%$ variation across the sweep range. This confirms that the ridge regularizer functions purely as a conditioning safeguard for the closed-form solve, not as a performance-critical hyperparameter, and that the default value $\lambda = 10^{-3}$ can be used without per-dataset tuning.

\begin{table}[h]
\centering
\small
\setlength{\tabcolsep}{8pt}
\begin{tabular}{lcc}
\toprule
\textbf{Method / $\lambda$} & \textbf{MTOP} & \textbf{AM-Scenario} \\
\midrule
Last-4 Concat & 79.23 & 63.35 \\
\midrule
LOES, $\lambda = 10^{-5}$ & 94.80 & 79.02 \\
LOES, $\lambda = 10^{-4}$ & 94.27 & 79.62 \\
LOES, $\lambda = 10^{-3}$ & 94.45 & 78.45 \\
LOES, $\lambda = 10^{-2}$ & 94.73 & 79.76 \\
\bottomrule
\end{tabular}
\caption{\textbf{Sensitivity of LOES to the ridge regularizer $\lambda$.} Test accuracy (\%) on MTOP and AM-Scenario using ModernBERT-base with $k = 4$. LOES is robust across four orders of magnitude in $\lambda$, with all configurations outperforming the Last-4 baseline by a wide margin.}
\label{tab:lambda_sensitivity}
\end{table}

\subsubsection{Comparison with Variance-Covariance Regularizers}
\label{sec:georeg_comparison}

GeoReg is designed to preserve spectral isotropy and class-centroid separation during fine-tuning, where competing task gradients can otherwise erode both. This concern is shared with self-supervised regularizers such as VICReg~\cite{bardes2021vicreg} and SIGReg, which similarly target representation collapse via variance-covariance penalties. The key distinction is timing: VICReg and related methods regularize the encoder during \emph{pretraining}, whereas GeoReg operates on the \emph{fused} multi-layer representation during downstream adaptation, where the LOES-selected geometry must be defended against task-specific gradient updates.

To validate this design choice, we compare GeoReg against a VICReg-style adaptation applied at the same fusion point on TweetEval-Emoji using BERT-base with $K = 3$. Without geometric regularization, validation accuracy degrades after approximately $5$k steps due to representation collapse, consistent with the trajectory shown in Figure~\ref{fig:georeg}. The VICReg-adapted baseline achieves $30.69\%$, marginally above the last-layer baseline at $30.15\%$. GeoReg reaches $31.04\%$ and exhibits stronger resistance to late-training collapse, indicating that the combination of spectral isotropy and centroid-volume terms is more effective than variance-covariance penalties alone in the transfer setting. A more detailed comparison covering VICReg, SIGReg, and additional regularizers is in preparation for the main paper but omitted here due to page constraints.

\begin{figure}[t]
\centering
\includegraphics[width=\linewidth]{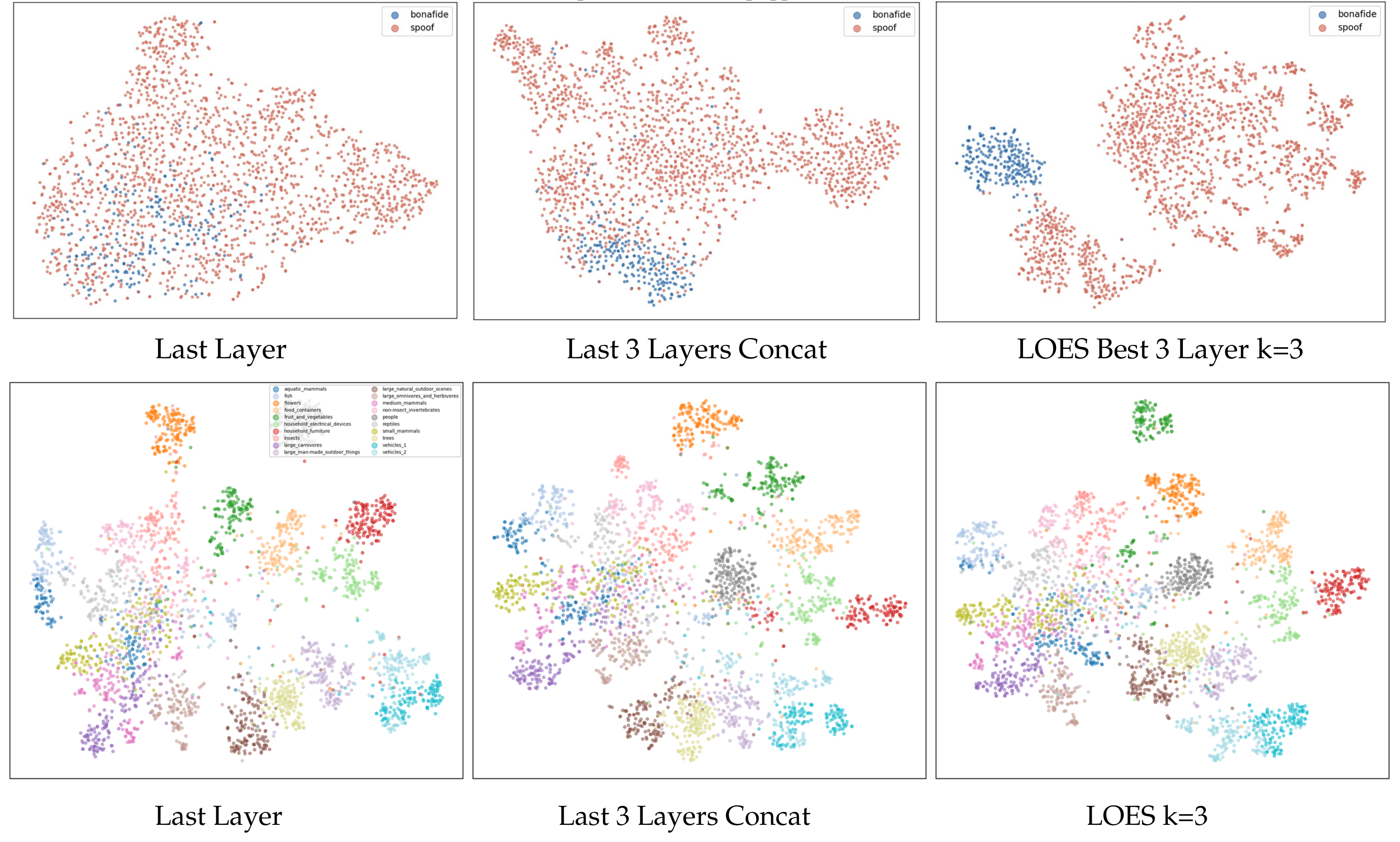}
\caption{
t-SNE visualizations comparing standard last-layer representations with LOES-selected layer fusion on CIFAR-100 (top, using DINOv2) and ASVspoof 2019 (bottom, using Wav2Vec 2.0). On CIFAR-100, simple concatenation of the last three layers exhibits moderate class mixing, whereas LOES ($k=3$; layers 6, 7, and last) produces tighter and better-separated clusters, demonstrating the advantage of selective layer fusion. On ASVspoof 2019, last-layer embeddings show strong overlap between bonafide and spoof samples, while concatenation of the last three layers provides only marginal improvement. In contrast, LOES-selected representations achieve substantially clearer separation, with LOES Best Layer ($k=1$) identifying a more informative representation and LOES Best 3 Layers ($k=3$) yielding the most compact and well-separated clusters.
}
\label{fig:TSNE}
\end{figure}

\subsubsection{t-SNE for LOES Selected Layers}
LOES-selected layers produce tighter, more separated clusters compared to last-layer and last-K baselines. Figure \ref{fig:TSNE}
shows that the adapted representations (after training) of LOES have better cluster compactness and separability than the last layer(s) (also adapted), consistent with the gains reported earlier.

\subsection{Statistical Evaluation Protocol}

We conduct a comprehensive statistical analysis to assess whether LOES provides consistent and significant improvements over a last-layer baseline. All experiments are performed using five independent random seeds per model and dataset to account for variability arising from initialization and data ordering.

\paragraph{Mean and Variance Estimation.}
For each model and dataset pair, we compute the mean and standard deviation of validation and test accuracies across seeds. These statistics are reported as mean $\pm$ standard deviation in all result tables and serve as stable estimates of expected performance and variability.

\paragraph{Paired Significance Testing.}
To evaluate whether LOES significantly outperforms the baseline, we perform paired two-sided $t$-tests for each model and dataset combination. The paired design matches LOES and baseline runs using identical random seeds, thereby controlling for seed-specific noise.

Let $d_i$ denote the difference in accuracy between LOES and the baseline for seed $i$. The test statistic is computed as
\[
t = \frac{\bar{d}}{s_d / \sqrt{n}},
\]
where $\bar{d}$ is the mean of the per-seed differences, $s_d$ is their standard deviation, and $n=5$ is the number of seeds.

We apply this test independently to validation and test accuracies. A result is considered statistically significant if the corresponding $p$-value is below $0.05$. While validation results are analyzed for completeness, we primarily report test-set significance, as it reflects generalization performance.

\paragraph{Interpretation of Results.}
The paired $t$-tests indicate that LOES yields statistically significant improvements over the last-layer baseline in the majority of model and dataset combinations. Gains are particularly pronounced on fine-grained benchmarks such as CUB-200, Stanford Cars, and DTD. In a small number of cases, differences are not statistically significant, indicating that LOES does not degrade performance when improvements are absent.

The sign of the $t$-statistic further reflects the direction of change. Positive values correspond to LOES outperforming the baseline, while negative values indicate marginal baseline advantages, which are rare and often not statistically significant.
\paragraph{Characterizing Failure Cases.}
Two model and dataset pairs show a statistically significant negative effect: DeiT-B/16 on Mini-ImageNet and DeiT-B/16 on Stanford Dogs. The absolute differences are small (0.19\% and 0.25\% respectively), so the regression is mild in practice but consistent across seeds. DeiT-B/16 is pretrained via supervised distillation exclusively on ImageNet variants, which concentrates task-discriminative information in the final layers. In this setting the last layer is already close to optimal, intermediate layers contribute noise rather than complementary signal, and LOES has little room to improve over the baseline. This pattern is consistent with the broader trend in Table~\ref{tab:pretraining_summary}: narrowly pretrained models (DeiT, MAE, ViT-IN21k) concentrate discriminative features in late layers, whereas models pretrained on diverse corpora (CLIP, DINOv2) distribute them across depth and benefit most from LOES.
\paragraph{Cross-Model Analysis.}
To examine whether different pretrained encoders exhibit significantly different performance under LOES, we additionally perform one-way analysis of variance tests across models for each dataset. The results show that model choice remains a significant factor in performance, confirming that LOES complements rather than replaces the inductive biases introduced by pretraining.

Overall, the statistical analysis demonstrates that LOES provides robust and reproducible improvements across architectures and datasets. The observed gains are consistent across random seeds and are supported by rigorous paired significance testing, providing strong evidence for the effectiveness of the proposed method.

\subsection{Additional Discussion}
\label{sec:additional_discussion}

This subsection provides additional discussion and supporting empirical results that complement the main analysis and further validate the proposed framework across different settings, metrics, and evaluation protocols.

\paragraph{Effect of Layer Count on Performance.}
Appendix Table~\ref{tab:loes_layer_sweep_mtop} reports performance as a function of the number of selected layers $k$ on MTOP. The trend depends on the encoder: ModernBERT-base peaks at $k{=}5$ (96.01\%) with marginal differences across $k{\in}\{3,4,5\}$, while BERT-base improves nearly monotonically from $k{=}1$ (96.08\%) to $k{=}10$ (98.43\%). However, larger $k$ comes at a roughly linear cost in fused-representation dimensionality and head parameters: for ModernBERT, moving from $k{=}3$ to $k{=}5$ adds 67\% more parameters in the head for a 0.02 point gain; for BERT-base, moving from $k{=}4$ to $k{=}10$ adds 2.5x parameters for a 0.69 point gain. We therefore use $k{\in}\{3,4\}$ throughout the paper as the best accuracy-efficiency tradeoff rather than as the saturation point of the underlying curve.

\paragraph{Calibration-Free and Minimal-Calibration Baselines.}
Appendix Table~\ref{tab:loes_with_baselines} includes Last and Last-4 baselines that require no calibration data. These results show that naive concatenation of final layers provides limited improvements, whereas LOES consistently yields stronger performance even with small calibration budgets.

\paragraph{Statistical Robustness Across Random Seeds.}
Appendix Table~\ref{tab:loes_test_statistics} reports paired $t$-test results comparing LOES with the last-layer baseline across multiple vision benchmarks. The results indicate that the observed improvements are statistically significant across random seeds and not driven by favorable initialization.

\paragraph{Cross-Model Variability Under LOES.}
Appendix Table~\ref{tab:anova_models_test} presents one-way ANOVA results assessing performance differences across pretrained encoders under LOES. While LOES improves accuracy across models, encoder choice remains statistically significant, indicating that LOES preserves and leverages model-specific inductive biases rather than homogenizing representations.

\paragraph{Additional Multimodal and Regression Results.}
Appendix Table~\ref{tab:clip_vitb_tasks} reports layer-selection results for multimodal least squares regression and binary classification tasks using CLIP ViT-B. LOES improves performance across heterogeneous output spaces, including settings evaluated with RMSE, demonstrating that the method extends beyond classification accuracy.

\paragraph{Extended Segmentation and Speech Results.}
Appendix Tables~\ref{tab:segmentation_loes} and~\ref{tab:wav2vec2_loes_speech_layers} provide additional evidence that LOES generalizes to dense prediction and speech classification tasks.

\begin{table*}[t]
\centering
\small
\setlength{\tabcolsep}{7pt}
\begin{tabular}{lllccl}
\toprule
\textbf{Dataset} & \textbf{Task} & \textbf{Method} & \textbf{Test Metric} & \textbf{Layers Selected} \\
\midrule

\multirow{6}{*}{Amazon Products 23}
& \multirow{6}{*}{Regression}
& Last
& RMSE $\downarrow$ 161.80
& 12 \\
&  & Last 3
& RMSE $\downarrow$ 156.05
& 10,11,12 \\
&  & Learnable
& RMSE $\downarrow$ 159.23
& -- \\
\cmidrule(lr){3-5}
&  & LOES (2)
& RMSE $\downarrow$ 158.14
& 9,10 \\
&  & LOES (3)
& RMSE $\downarrow$ 156.37
& 8,9,10 \\
&  & LOES (4)
& \textbf{RMSE $\downarrow$ 154.44}
& 5,8,10,11 \\

\midrule

\multirow{6}{*}{FashionGen}
& \multirow{6}{*}{Regression}
& Last
& RMSE $\downarrow$ 527.19
& 12 \\
&  & Last 3
& RMSE $\downarrow$ 646.30
& 10,11,12 \\
&  & Learnable
& RMSE $\downarrow$ 637.31
& -- \\
\cmidrule(lr){3-5}
&  & LOES (2)
& RMSE $\downarrow$ 486.87
& 9,10 \\
&  & LOES (3)
& RMSE $\downarrow$ 481.65
& 9,10,11 \\
&  & LOES (4)
& \textbf{RMSE $\downarrow$ 460.46}
& 5,9,10,11 \\

\midrule

\multirow{6}{*}{Fakeddit}
& \multirow{6}{*}{Classification}
& Last
& Acc $\uparrow$ 0.8790
& 12 \\
&  & Last 3
& Acc $\uparrow$ 0.8859
& 10,11,12 \\
&  & Learnable
& \textbf{Acc $\uparrow$ 0.8971}
& -- \\
\cmidrule(lr){3-5}
&  & LOES (2)
& Acc $\uparrow$ 0.8822
& 0,12 \\
&  & LOES (3)
& Acc $\uparrow$ 0.8852
& 0,11,12 \\
&  & LOES (4)
& Acc $\uparrow$ 0.8865
& 0,10,11,12 \\

\bottomrule
\end{tabular}
\caption{
\textbf{Layer selection results using CLIP ViT-B (base) with joint image--text embeddings.}
Results are reported on Amazon Products 23 and FashionGen (least squares regression; lower RMSE is better)
and Fakeddit (binary classification; higher accuracy is better).
Baseline methods and LOES variants are separated by horizontal rules.
}
\label{tab:clip_vitb_tasks}
\end{table*}


\begin{table}[t]
\centering
\small
\setlength{\tabcolsep}{4pt}
\begin{tabular}{lccccl}
\toprule
\textbf{Language} & \textbf{Last} & \textbf{Last-3} & \textbf{LOES ($k$=4)} & \textbf{$\Delta$ vs Last-3} & \textbf{Selected Layers} \\
\midrule
English (EN) & 85.2 & 84.6 & 86.6 & +2.0 & [6, 21, 7, 9] \\
French (FR) & 80.9 & 77.5 & 81.0 & +3.5 & [6, 21, 7, 19] \\
German (DE) & 81.9 & 81.1 & 84.0 & +2.9 & [6, 21, 7, 9] \\
Italian (IT) & 82.4 & 81.8 & 84.6 & +2.8 & [6, 21, 7, 4] \\
Japanese (JA) & 83.7 & 83.0 & 83.7 & +0.7 & [6, 21, 7, 19] \\
Spanish (ES) & 82.5 & 80.8 & 84.1 & +3.3 & [4, 21, 6, 7] \\
Russian (RU) & 81.1 & 80.5 & 82.6 & +2.1 & [6, 21, 7, 19] \\
\midrule
Hindi (HI) & 76.4 & 74.9 & 81.4 & +6.5 & [6, 21, 9, 7] \\
Arabic (AR) & 75.4 & 73.4 & 80.6 & +7.2 & [6, 21, 7, 9] \\
Urdu (UR) & 71.7 & 68.7 & 79.6 & +10.9 & [6, 20, 9, 7] \\
\bottomrule
\end{tabular}
\caption{
\textbf{Cross-lingual LOES results on Amazon Massive Scenario using mBERT-base (21 layers).}
Accuracy (\%) reported for last-layer, last-3 concatenation, and LOES with $k$=4.
$\Delta$ vs Last-3 denotes the improvement of LOES over last-3 concatenation in percentage points.
Languages below the mid-rule (Hindi, Arabic, Urdu) are underrepresented in typical pretraining corpora and show substantially larger LOES gains.
Layer indices are zero-indexed; layer 21 is the final layer.
}
\label{tab:multilingual_loes}
\end{table}

\begin{table}[t]
\centering
\small
\setlength{\tabcolsep}{5pt}
\begin{tabular}{llcccc}
\toprule
\textbf{Dataset} & \textbf{Model} & \textbf{Last Layer} & \textbf{LOES ($k$=3)} & \textbf{$\Delta$} & \textbf{Selected Layers} \\
\midrule
\multirow{3}{*}{Cityscapes}
& DINOv2-S & 52.80 & 53.63 & +0.83 & [0, 9, 11] \\
& DINOv3-S/16 & 52.04 & 53.01 & +0.97 & [5, 9, 11] \\
& BEiT-B/16 & 36.07 & 39.21 & +3.14 & [0, 3, 11] \\
\midrule
\multirow{3}{*}{COCOStuff}
& DINOv2-S & 30.77 & 31.19 & +0.42 & [0, 8, 11] \\
& DINOv3-S/16 & 31.15 & 31.16 & +0.01 & [5, 9, 11] \\
& BEiT-B/16 & 16.71 & 17.83 & +1.12 & [0, 3, 11] \\
\bottomrule
\end{tabular}
\caption{
\textbf{LOES extends to semantic segmentation.}
Mean IoU (\%) reported for last-layer and LOES with $k$=3 on Cityscapes and COCOStuff-164k.
$\Delta$ denotes improvement in percentage points.
All models use frozen encoders with a linear segmentation head.
Cityscapes models were trained for 20 epochs; COCOStuff models were trained for 8 epochs due to larger dataset size.
Layer indices are zero-indexed; layer 11 is the final layer.
}
\label{tab:segmentation_loes}
\end{table}

\subsubsection{Adapting LOES for Semantic Segmentation}
\label{sec:loes_segmentation}

The standard LOES framework operates on image-level (pooled) embeddings for classification tasks. To extend LOES to dense prediction tasks such as semantic segmentation, we adapt the calibration and selection procedure to operate at the pixel level while preserving the core algorithmic structure.

\paragraph{Pixel-Level Calibration.}
For a calibration set of $N_{\text{cal}}$ images, we extract intermediate representations from all encoder layers and reshape them to spatial feature maps. Let $\mathbf{F}_\ell \in \mathbb{R}^{B \times H_f \times W_f \times d}$ denote the spatial features at layer $\ell$, where $H_f = W_f = H_{\text{img}} / p$ is the feature resolution and $p$ is the patch size. We downsample the ground-truth segmentation masks to match this resolution using nearest-neighbor interpolation.

To construct a tractable calibration set, we randomly sample $M$ pixels per image from valid (non-ignored) spatial locations. This yields a pixel-level feature matrix $\mathbf{X}_\ell \in \mathbb{R}^{N \times d}$, where $N = N_{\text{cal}} \times M$, and corresponding one-hot encoded class labels $\mathbf{Y} \in \mathbb{R}^{N \times C}$.

\paragraph{Layer Scoring and Selection.}
Given pixel-level embeddings, the LOES scoring and selection procedure remains unchanged. For each candidate layer $\ell$, we compute the closed-form ridge regression solution as in Eq.~\eqref{eq:ridge-solution} and evaluate the composite score:
\begin{equation}
\mathrm{Score}(\ell) = \mathcal{L}_{\text{ridge}}(\mathbf{X}_\ell, \mathbf{Y}) + \alpha \bigl(1 - \mathrm{Iso}(\mathbf{X}_\ell)\bigr) + \gamma \, \mathrm{Red}_\ell,
\end{equation}
where $\mathrm{Iso}(\cdot)$ is the isotropy score (Eq.~\eqref{eq:isotropy}) and $\mathrm{Red}_\ell$ penalizes redundancy with previously selected layers (Eq.~\eqref{eq:redundancy}). We omit the triangular class-geometry term ($\eta = 0$) for segmentation, as computing class centroids over hundreds of pixel classes is computationally prohibitive and the term provides marginal benefit in this setting.

Layer selection proceeds greedily as in Algorithm~\ref{alg:loes_selection}: we first select the layer minimizing the initial score, then iteratively add layers that best reduce the residual prediction error while avoiding redundancy with the current selection.

\paragraph{Decoder Architecture.}
Selected layer features are fused via per-layer linear adapters followed by concatenation, consistent with the classification pipeline. The fused representation is passed through a lightweight convolutional segmentation head and bilinearly upsampled to the original image resolution. The encoder remains frozen throughout training; only the adapters and segmentation head are optimized.

\paragraph{Summary.}
The key adaptation for segmentation is the shift from image-level to pixel-level calibration: instead of pooling spatial tokens, we treat each valid pixel as an independent sample for ridge regression and isotropy computation. This preserves the geometric and residual-based selection principles of LOES while accommodating dense prediction tasks. As shown in Appendix Table~\ref{tab:segmentation_loes}, this adaptation consistently improves mIoU over last-layer baselines across multiple encoders and datasets.

\begin{table}[htbp]
\centering
\small
\setlength{\tabcolsep}{8pt}
\begin{tabular}{llccc}
\toprule
\textbf{Model} & \textbf{Method} &
\textbf{ASVspoof 2019} &
\textbf{CREMA-D} &
\textbf{Google Speech Commands} \\
\midrule
\multirow{4}{*}{Wav2Vec~2.0 (Frozen)}
& Last Layer
& 90.89 & 37.84 & 85.15 \\
\cmidrule(lr){2-5}
& LOES (2)
& \textbf{98.80} & 68.97 & 92.41 \\
& LOES (3)
& 98.70 & 69.01 & \textbf{92.60} \\
& LOES (4)
& 98.74 & \textbf{69.06} & 92.51 \\
\bottomrule
\end{tabular}
\caption{
\textbf{Layer selection results using Wav2Vec~2.0 with a frozen encoder.}
Classification accuracy is reported on ASVspoof~2019, CREMA-D, and Google Speech Commands.
For LOES with $k=4$, selected layer indices are shown in the order:
ASVspoof~2019 / CREMA-D / Google Speech Commands.
}
\label{tab:wav2vec2_loes_speech_layers}
\end{table}

\begin{figure*}[t]
    \centering
    \includegraphics[width=\textwidth]{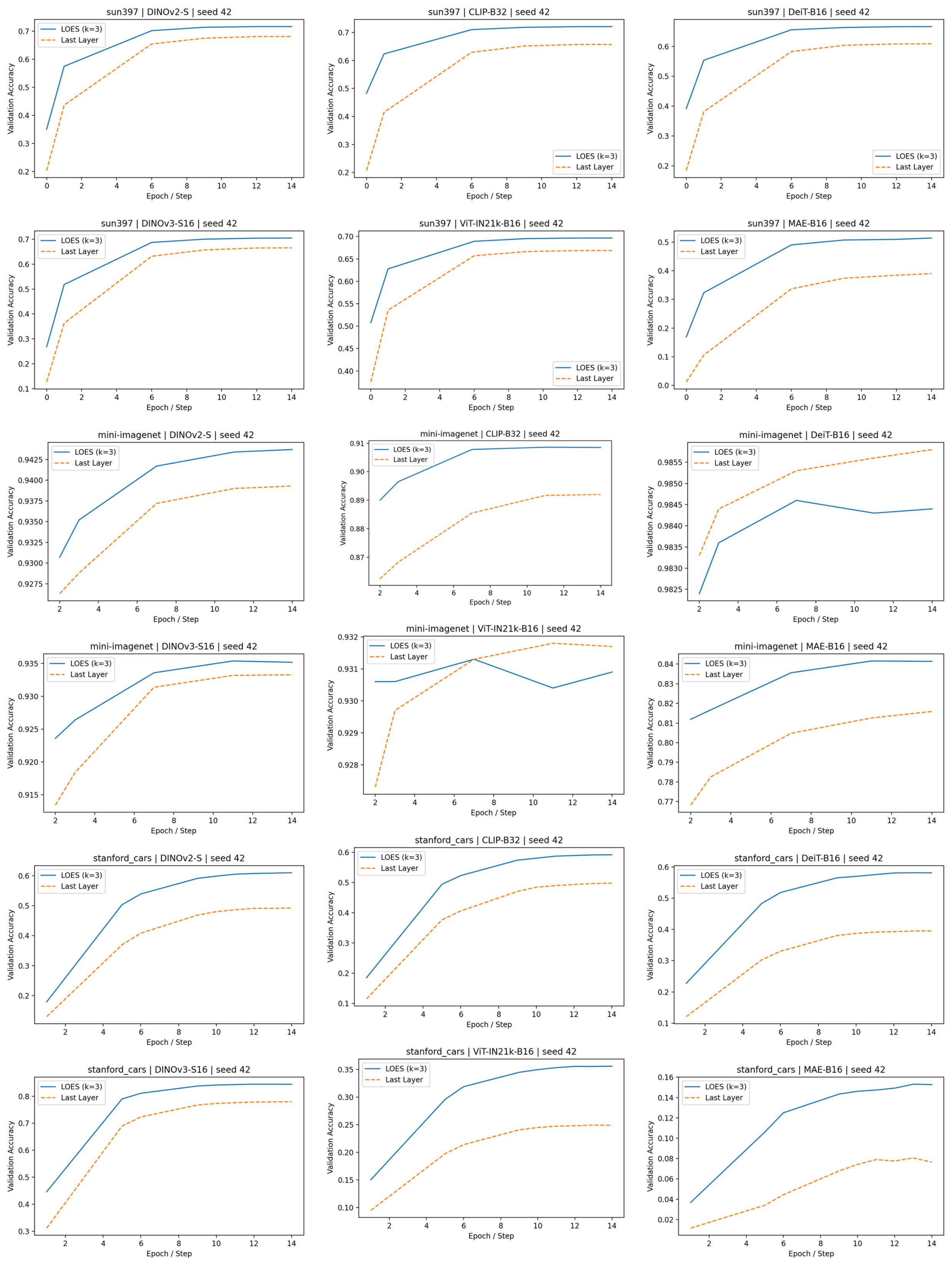}
    \caption{Epoch-wise validation accuracy comparing LOES (k=3) and last-layer baselines across datasets and models.}
    \label{fig:Plots_app}
\end{figure*}

\clearpage
\subsubsection{Eigenspectrum Analysis Across Layers}
\label{sec:eigenspectrum_analysis}

Figure~\ref{fig:eigenspectrum_heatmap} visualizes the normalized eigenspectrum of layer-wise covariance matrices, revealing how representation geometry evolves with depth and varies by pretraining paradigm.

\begin{figure}[t]
\centering
\includegraphics[width=\linewidth]{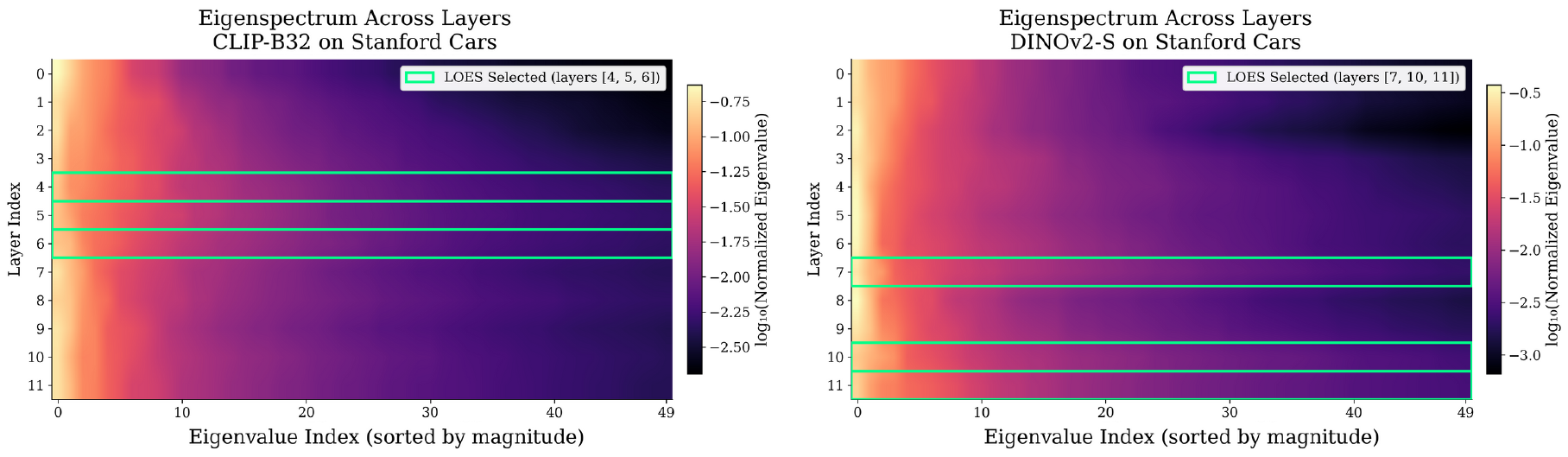}
\caption{Normalized eigenspectrum across encoder layers on Stanford Cars. Each row shows the top-50 eigenvalues (log-scale) of the layer-wise covariance matrix; brighter colors indicate higher eigenvalues. Green borders mark LOES-selected layers. CLIP selects mid-depth layers (4--6) with flatter spectra, while DINOv2 selects later layers (7, 10, 11), reflecting their distinct pretraining paradigms.}
\label{fig:eigenspectrum_heatmap}
\end{figure}

CLIP, trained on diverse image-text pairs, exhibits relatively uniform spectra in mid-depth layers, consistent with its early-to-mid layer selection in Table~\ref{tab:pretraining_summary}. DINOv2, trained via self-distillation on curated images, concentrates isotropic structure in later layers. These spectral patterns provide geometric justification for the layer selection differences: LOES identifies layers where the covariance eigenspectrum is flatter (higher isotropy), yielding better-conditioned ridge probes as established in Theorem~\ref{thm:isotropy}.

\subsection{Theoretical Analysis: Geometric Regularization in LOES}
\label{sec:theory}

The LOES algorithm selects layer residuals by maximizing a hybrid score:
\begin{equation}
    \mathcal{S}(l) = -\mathcal{L}_{ridge}(\tilde{\mathbf{x}}_l) - \alpha(1 - \text{Iso}(\tilde{\mathbf{x}}_l)).
\end{equation}
The first term, $\mathcal{L}_{ridge}$, is intuitive: it ensures the selected feature correlates with the task labels on the available source data. However, the second term, which enforces geometric isotropy, requires rigorous justification. Why should we prefer isotropic residuals over anisotropic ones, even if they yield similar empirical errors?

In this section, we provide this justification from first principles. We posit that for general-purpose representation learning, the goal is not merely to minimize prediction error on a specific dataset (which can lead to overfitting spurious correlations), but to maximize the fidelity of the representation its ability to allow a linear probe to recover the true underlying causal mechanism of the task. We prove that maximizing isotropy is mathematically equivalent to minimizing the worst-case structural misalignment (Bias) of the probe.

\subsubsection{Problem Formulation}

Let $\tilde{\mathbf{x}} \in \mathbb{R}^d$ be the candidate residual feature vector (the component of layer $l$ orthogonal to the current subspace $\mathcal{S}$). We model the downstream task target $y$ as a linear response to these features.

\begin{assumption}[Linear Mechanism]
The target $y \in \mathbb{R}$ is generated by an unknown task weight vector $\mathbf{w}^* \in \mathbb{R}^d$:
\begin{equation}
    y = (\mathbf{w}^*)^\top \tilde{\mathbf{x}} + \epsilon, \quad \epsilon \sim \mathcal{N}(0, \sigma^2),
\end{equation}
where $\tilde{\mathbf{x}}$ is centered with covariance $\mathbf{\Sigma} = \mathbb{E}[\tilde{\mathbf{x}}\tilde{\mathbf{x}}^\top]$.
\end{assumption}

\begin{assumption}[Spherical Task Prior]
\label{assum:spherical}
LOES constructs a representation intended to transfer to many unknown downstream tasks. We lack a priori knowledge of which feature directions will be important for future tasks. Therefore, we invoke the \textit{Principle of Maximum Entropy} and assume the task weights $\mathbf{w}^*$ are drawn from a rotationally invariant distribution:
\begin{equation}
    \mathbb{E}[\mathbf{w}^*] = \mathbf{0}, \quad \mathbb{E}[\mathbf{w}^*(\mathbf{w}^*)^\top] = \frac{R^2}{d} \mathbf{I}_d.
\end{equation}
This implies that no direction in the feature space is privileged; the "true" task vector is equally likely to align with any eigenvector of the feature covariance.
\end{assumption}

\subsubsection{Spectral Decomposition of Fidelity}

We define the \textbf{Fidelity} of the representation as the expected squared Euclidean distance between the estimated weights $\hat{\mathbf{w}}_\lambda$ (learned by a Ridge probe) and the true causal weights $\mathbf{w}^*$. This metric, $\mathcal{E}_{param}$, quantifies the structural correctness of the learned solution.

\begin{lemma}[Spectral Decomposition of Parameter Error]
The expected squared parameter error decomposes spectrally into two distinct terms: \textbf{Alignment Bias} (structural error) and \textbf{Estimation Variance} (stochastic error), both dependent on the eigenvalues $\{\mu_i\}_{i=1}^d$ of $\mathbf{\Sigma}$:

\begin{equation}
    \label{eq:fidelity_decomp}
    \mathcal{E}_{param}(\mathbf{\Sigma}) \coloneqq \mathbb{E}[ \|\hat{\mathbf{w}}_\lambda - \mathbf{w}^*\|^2 ] = \sum_{i=1}^d \left( \underbrace{\frac{R^2}{d} \frac{\lambda^2}{(\mu_i + \lambda)^2}}_{\text{Alignment Bias } \mathcal{B}(\mu_i)} + \underbrace{\sigma^2 \frac{\mu_i}{(\mu_i + \lambda)^2}}_{\text{Estimation Variance } \mathcal{V}(\mu_i)} \right)
\end{equation}
\end{lemma}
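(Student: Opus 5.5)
The plan is to work in the fixed-design (conditional) setting in which the ridge probe of Eq.~\eqref{eq:ridge-solution} is applied to the residual features, with the design normalized so that the Gram matrix equals the covariance. Write $\tilde{\mathbf{X}}\in\mathbb{R}^{N\times d}$ for the stacked residual features and assume $\tilde{\mathbf{X}}^\top\tilde{\mathbf{X}}=\mathbf{\Sigma}$. This amounts to absorbing the factor $N$ into $\lambda$ and the noise scale, and it is the convention under which the stated formula holds exactly. I would state it explicitly at the outset, since it is the only modelling choice not fixed by the two assumptions. With $\mathbf{y}=\tilde{\mathbf{X}}\mathbf{w}^*+\boldsymbol{\epsilon}$, the ridge estimator is
\[
\hat{\mathbf{w}}_\lambda=(\mathbf{\Sigma}+\lambda\mathbf{I})^{-1}\tilde{\mathbf{X}}^\top\mathbf{y}.
\]

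The first step is the algebraic identity
\[
\hat{\mathbf{w}}_\lambda-\mathbf{w}^*=-\lambda(\mathbf{\Sigma}+\lambda\mathbf{I})^{-1}\mathbf{w}^*+(\mathbf{\Sigma}+\lambda\mathbf{I})^{-1}\tilde{\mathbf{X}}^\top\boldsymbol{\epsilon},
\]
which follows from $(\mathbf{\Sigma}+\lambda\mathbf{I})^{-1}\mathbf{\Sigma}-\mathbf{I}=-\lambda(\mathbf{\Sigma}+\lambda\mathbf{I})^{-1}$. I would then expand the squared norm and take expectations over $\boldsymbol{\epsilon}$ and over $\mathbf{w}^*$. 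Since $\boldsymbol{\epsilon}$ has mean zero and is independent of $\mathbf{w}^*$ (and $\mathbb{E}[\mathbf{w}^*]=\mathbf{0}$ as well), the cross term vanishes.

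For the bias term, I would use the Spherical Task Prior (Assumption~\ref{assum:spherical}) together with the trace trick:
\[
\lambda^2\,\mathbb{E}\bigl[(\mathbf{w}^*)^\top(\mathbf{\Sigma}+\lambda\mathbf{I})^{-2}\mathbf{w}^*\bigr]=\lambda^2\frac{R^2}{d}\operatorname{tr}\bigl((\mathbf{\Sigma}+\lambda\mathbf{I})^{-2}\bigr).
\]
For the variance term, with $\mathbb{E}[\boldsymbol{\epsilon}\boldsymbol{\epsilon}^\top]=\sigma^2\mathbf{I}$, one gets
\[
\sigma^2\operatorname{tr}\bigl((\mathbf{\Sigma}+\lambda\mathbf{I})^{-1}\tilde{\mathbf{X}}^\top\tilde{\mathbf{X}}(\mathbf{\Sigma}+\lambda\mathbf{I})^{-1}\bigr)=\sigma^2\operatorname{tr}\bigl(\mathbf{\Sigma}(\mathbf{\Sigma}+\lambda\mathbf{I})^{-2}\bigr).
\]
Finally, I would diagonalize $\mathbf{\Sigma}=\mathbf{U}\,\mathrm{diag}(\mu_i)\,\mathbf{U}^\top$. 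Both traces are unitarily invariant functions of $\mathbf{\Sigma}$, so they reduce to $\sum_i(\mu_i+\lambda)^{-2}$ and $\sum_i\mu_i(\mu_i+\lambda)^{-2}$ respectively. This gives exactly the per-eigenvalue terms $\mathcal{B}(\mu_i)$ and $\mathcal{V}(\mu_i)$ in Eq.~\eqref{eq:fidelity_decomp}.

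The computation itself is routine. The main obstacle is justifying the normalization $\tilde{\mathbf{X}}^\top\tilde{\mathbf{X}}=\mathbf{\Sigma}$ instead of $N\hat{\mathbf{\Sigma}}$. My first approach is to present the lemma as conditional on the design, with the empirical second moment identified with $\mathbf{\Sigma}$ and with $\lambda$ and $\sigma^2$ rescaled accordingly, and to remark that in the random-design case the identity holds up to replacing $\mu_i$ by the eigenvalues of the empirical covariance.
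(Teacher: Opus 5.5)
Your proposal is correct and follows the paper's proof essentially step for step. You use the same split of $\hat{\mathbf{w}}_\lambda-\mathbf{w}^*$ into $-\lambda(\mathbf{\Sigma}+\lambda\mathbf{I})^{-1}\mathbf{w}^*$ plus a noise term, the vanishing cross term, the trace trick under the spherical prior, and diagonalization of the two traces. The only difference is in the setup: the paper directly posits $\mathbf{\Sigma}_{xy}=\mathbf{\Sigma}\mathbf{w}^*+\mathbf{\Sigma}^{1/2}\mathbf{z}$ with $\mathbb{E}[\mathbf{z}\mathbf{z}^\top]=\sigma^2\mathbf{I}$, whereas your fixed-design normalization $\tilde{\mathbf{X}}^\top\tilde{\mathbf{X}}=\mathbf{\Sigma}$ derives exactly this noise covariance, $\sigma^2\mathbf{\Sigma}$, from the linear model. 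That makes explicit the idealization the paper leaves implicit.
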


\begin{proof}
    \textbf{Step 1: Explicit Form of the Estimator Error.}
    The Ridge Regression estimator is given by $\hat{\mathbf{w}}_\lambda = (\mathbf{\Sigma} + \lambda\mathbf{I})^{-1} \mathbf{\Sigma}_{xy}$. We substitute the generative model for the cross-covariance $\mathbf{\Sigma}_{xy} = \mathbf{\Sigma}\mathbf{w}^* + \mathbf{\Sigma}^{1/2}\mathbf{z}$, where $\mathbf{z}$ is standard Gaussian noise.
    The error vector $\mathbf{e} = \hat{\mathbf{w}}_\lambda - \mathbf{w}^*$ is:
    \begin{align}
        \mathbf{e} &= (\mathbf{\Sigma} + \lambda\mathbf{I})^{-1}(\mathbf{\Sigma}\mathbf{w}^* + \mathbf{\Sigma}^{1/2}\mathbf{z}) - \mathbf{w}^* \\
        &= \left[ (\mathbf{\Sigma} + \lambda\mathbf{I})^{-1}\mathbf{\Sigma} - \mathbf{I} \right] \mathbf{w}^* + (\mathbf{\Sigma} + \lambda\mathbf{I})^{-1}\mathbf{\Sigma}^{1/2}\mathbf{z}.
    \end{align}
    We simplify the bracketed bias term using the matrix identity $\mathbf{A}^{-1}\mathbf{B} - \mathbf{I} = \mathbf{A}^{-1}(\mathbf{B} - \mathbf{A})$. Setting $\mathbf{A} = \mathbf{\Sigma} + \lambda\mathbf{I}$ and $\mathbf{B} = \mathbf{\Sigma}$, we get $\mathbf{B}-\mathbf{A} = -\lambda\mathbf{I}$. Thus:
    \begin{equation}
        \mathbf{e} = \underbrace{-\lambda(\mathbf{\Sigma} + \lambda\mathbf{I})^{-1}\mathbf{w}^*}_{\mathbf{e}_{bias}} + \underbrace{(\mathbf{\Sigma} + \lambda\mathbf{I})^{-1}\mathbf{\Sigma}^{1/2}\mathbf{z}}_{\mathbf{e}_{var}}.
    \end{equation}

    \textbf{Step 2: Computing the Expected Squared Norm.}
    We seek $\mathbb{E}[\|\mathbf{e}\|^2]$. Since the noise $\mathbf{z}$ has zero mean and is independent of $\mathbf{w}^*$, the cross-term $\mathbb{E}[\mathbf{e}_{bias}^\top \mathbf{e}_{var}]$ vanishes. We analyze the squared norms of the bias and variance terms separately.
    
    \textit{Part A: The Bias Term.}
    \begin{equation}
        \|\mathbf{e}_{bias}\|^2 = \lambda^2 (\mathbf{w}^*)^\top (\mathbf{\Sigma} + \lambda\mathbf{I})^{-2} \mathbf{w}^*.
    \end{equation}
    Taking the expectation over the task prior $\mathbf{w}^*$ using the trace trick $\mathbb{E}[\mathbf{u}^\top \mathbf{M} \mathbf{u}] = \text{Tr}(\mathbf{M} \mathbb{E}[\mathbf{u}\mathbf{u}^\top])$:
    \begin{align}
        \mathbb{E}_{\mathbf{w}^*}[\|\mathbf{e}_{bias}\|^2] &= \lambda^2 \text{Tr}\left( (\mathbf{\Sigma} + \lambda\mathbf{I})^{-2} \mathbb{E}[\mathbf{w}^*(\mathbf{w}^*)^\top] \right) \\
        &= \lambda^2 \text{Tr}\left( (\mathbf{\Sigma} + \lambda\mathbf{I})^{-2} \frac{R^2}{d}\mathbf{I} \right) \quad \text{(by Assum. \ref{assum:spherical})} \\
        &= \frac{R^2 \lambda^2}{d} \sum_{i=1}^d \frac{1}{(\mu_i + \lambda)^2}.
    \end{align}
    
    \textit{Part B: The Variance Term.}
    \begin{equation}
        \|\mathbf{e}_{var}\|^2 = \mathbf{z}^\top \mathbf{\Sigma}^{1/2} (\mathbf{\Sigma} + \lambda\mathbf{I})^{-2} \mathbf{\Sigma}^{1/2} \mathbf{z}.
    \end{equation}
    Taking the expectation over noise $\mathbf{z}$ with $\mathbb{E}[\mathbf{z}\mathbf{z}^\top] = \sigma^2 \mathbf{I}$:
    \begin{align}
        \mathbb{E}_{\mathbf{z}}[\|\mathbf{e}_{var}\|^2] &= \text{Tr}\left( \mathbf{\Sigma}^{1/2} (\mathbf{\Sigma} + \lambda\mathbf{I})^{-2} \mathbf{\Sigma}^{1/2} \cdot \sigma^2 \mathbf{I} \right) \\
        &= \sigma^2 \text{Tr}\left( \mathbf{\Sigma} (\mathbf{\Sigma} + \lambda\mathbf{I})^{-2} \right) \quad \text{(cyclic property)} \\
        &= \sigma^2 \sum_{i=1}^d \frac{\mu_i}{(\mu_i + \lambda)^2}.
    \end{align}
    Summing Part A and Part B yields Eq. \ref{eq:fidelity_decomp}.
\end{proof}

\subsubsection{Optimality of Isotropy in LOES}

In the context of LOES, we are particularly concerned with the Alignment Bias ($\mathcal{B}$). The variance term $\mathcal{V}$ depends heavily on label noise, which varies by dataset. The bias term, however, represents a fundamental geometric mismatch between our chosen features and the task mechanism. To ensure our representation is robust to \textit{any} task orientation, we must minimize this bias.

We now prove that minimizing the Alignment Bias strictly requires the feature covariance to be isotropic.

\begin{theorem}[Isotropy Minimizes Alignment Bias]
\label{thm:isotropy}
Let $\mathcal{S}_E = \{ \mathbf{\Sigma} \succeq 0 \mid \text{Tr}(\mathbf{\Sigma}) = E \}$ be the set of valid covariance matrices with fixed total signal energy $E$. The Alignment Bias component of the parameter error is strictly minimized if and only if $\mathbf{\Sigma}$ is isotropic.

\begin{equation}
    \arg\min_{\mathbf{\Sigma} \in \mathcal{S}_E} \sum_{i=1}^d \mathcal{B}(\mu_i) = \frac{E}{d} \mathbf{I}_d
\end{equation}
\end{theorem}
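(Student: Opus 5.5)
Via the preceding Lemma, the Alignment Bias depends on $\mathbf{\Sigma}$ only through its eigenvalues. Minimizing it over $\mathcal{S}_E$ is therefore the same as minimizing
\[
F(\mu_1,\dots,\mu_d) \;=\; \frac{R^2\lambda^2}{d}\sum_{i=1}^d \frac{1}{(\mu_i+\lambda)^2}
\]
over the simplex $\{\mu_i \ge 0,\ \sum_i \mu_i = E\}$. This reduction uses that $\mathbf{\Sigma}\succeq 0$ has nonnegative eigenvalues summing to $\operatorname{Tr}(\mathbf{\Sigma})=E$, and that every such eigenvalue vector is realized by some $\mathbf{\Sigma}\in\mathcal{S}_E$ (for instance a diagonal matrix). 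The plan is to show that $F$ has a unique minimizer on this simplex, at $\mu_i=E/d$ for all $i$. We then translate back: the only PSD matrix all of whose eigenvalues equal $E/d$ is $\tfrac{E}{d}\mathbf{I}_d$, because it is diagonalizable with a single eigenvalue.

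The key step is strict convexity. Set $g(\mu)=(\mu+\lambda)^{-2}$ on $[0,\infty)$. Since $\lambda>0$, we have $g''(\mu)=6(\mu+\lambda)^{-4}>0$, so $g$ is strictly convex. We then apply Jensen's inequality with equal weights:
\[
\frac{1}{d}\sum_i g(\mu_i) \;\ge\; g\!\left(\frac{1}{d}\sum_i\mu_i\right) \;=\; g(E/d),
\]
with equality if and only if all $\mu_i$ are equal. Multiplying by the positive constant $R^2\lambda^2$ gives $F(\mu)\ge R^2\lambda^2/(E/d+\lambda)^2$, and equality forces $\mu_i=E/d$ for every $i$.

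There is no serious obstacle. The points that need care are the ones that make the claim \emph{strict}. First, $\lambda>0$ and $R>0$ must hold; otherwise the bias vanishes identically and every $\mathbf{\Sigma}$ is a minimizer. Second, the equality case of Jensen for a strictly convex function must be stated explicitly rather than only invoking convexity. As an alternative check, a Lagrange-multiplier argument gives the same result: stationarity requires $-2(\mu_i+\lambda)^{-3}=\nu$ for all $i$, which forces equal $\mu_i$, and strict convexity of $F$ on the convex feasible set makes this stationary point the unique global minimizer. No boundary minimizer can occur, since Jensen already covers the whole closed simplex.
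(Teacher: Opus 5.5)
Your proposal is correct and matches the paper's proof step for step: strict convexity of $g(\mu)=(\mu+\lambda)^{-2}$ from $g''(\mu)=6(\mu+\lambda)^{-4}>0$, Jensen's inequality under the trace constraint $\sum_i \mu_i = E$, and the equality case of Jensen for a strictly convex function forcing all $\mu_i = E/d$, i.e.\ $\mathbf{\Sigma}=\frac{E}{d}\mathbf{I}_d$. The paper leaves implicit three points you state explicitly: strictness needs $R>0$ and $\lambda>0$, every eigenvalue vector on the simplex is realized in $\mathcal{S}_E$, and a PSD matrix with a single repeated eigenvalue is a multiple of the identity.
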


\begin{proof}
    \textbf{Step 1: Convexity Analysis of the Bias Kernel.}
    Let $g(\mu) = \frac{1}{(\mu + \lambda)^2}$ be the bias contribution of a single eigenvalue $\mu$ (omitting the positive constant factor $R^2 \lambda^2 / d$). We compute the first and second derivatives with respect to $\mu$:
    \begin{align}
        g'(\mu) &= \frac{d}{d\mu} (\mu + \lambda)^{-2} = -2(\mu + \lambda)^{-3} \\
        g''(\mu) &= \frac{d}{d\mu} [-2(\mu + \lambda)^{-3}] = 6(\mu + \lambda)^{-4}.
    \end{align}
    Since the regularization parameter $\lambda > 0$ and eigenvalues $\mu \ge 0$, the term $(\mu + \lambda)^4$ is strictly positive. Consequently, $g''(\mu) > 0$ for all valid $\mu$. This implies that $g(\mu)$ is a \textbf{strictly convex function}.

    \textbf{Step 2: Constrained Optimization via Jensen's Inequality.}
    We seek to minimize the total bias $J(\boldsymbol{\mu}) = \sum_{i=1}^d g(\mu_i)$ subject to the trace constraint $\sum_{i=1}^d \mu_i = E$.
    By \textbf{Jensen's Inequality} for convex functions, the average of the function values is bounded below by the function of the average:
    \begin{equation}
        \frac{1}{d} \sum_{i=1}^d g(\mu_i) \ge g\left( \frac{1}{d} \sum_{i=1}^d \mu_i \right).
    \end{equation}
    Substituting the constraint $\sum \mu_i = E$:
    \begin{equation}
        \sum_{i=1}^d \frac{1}{(\mu_i + \lambda)^2} \ge d \cdot \frac{1}{(E/d + \lambda)^2}.
    \end{equation}
    
    \textbf{Step 3: Condition for Strict Optimality.}
    For a \textit{strictly} convex function like $g(\mu)$, equality in Jensen's Inequality holds if and only if all inputs are identical:
    \begin{equation}
        \mu_1 = \mu_2 = \dots = \mu_d = \frac{E}{d}.
    \end{equation}
    This spectral condition corresponds uniquely to the isotropic covariance matrix $\mathbf{\Sigma} = \frac{E}{d}\mathbf{I}$.
    
    \textbf{Conclusion:} Any deviation from isotropy (i.e., spectral skewness) strictly increases the lower bound of the Alignment Bias. Therefore, an isotropic distribution of residual variance is the theoretical optimum for minimizing structural error in the probe.
\end{proof}

\subsubsection{Theoretical Justification of the LOES Objective}

This theoretical analysis directly informs the design of the LOES selection criterion $\mathcal{S}(l)$.
While the $\mathcal{L}_{ridge}$ term reduces the empirical error on the source task, it cannot guarantee that the features capture the correct causal structure—it is prone to latching onto high-variance spurious correlations. 
Theorem \ref{thm:isotropy} establishes that to safeguard against such structural misalignment, the representation must be isotropic. 
The term $\alpha(1 - \text{Iso}(\tilde{\mathbf{x}}_l))$ in our objective serves as a geometric regularizer that enforces the optimality condition derived in Theorem \ref{thm:isotropy}. By penalizing anisotropy, LOES forces the selected residual features to adopt the optimal geometry for mechanism recovery, ensuring that every selected dimension contributes equally to the representation's transfer capability.

\subsection{Detailed Justification of LOES Objective Components}
\label{sec:loes_detailed_justification}

This subsection provides a detailed justification of the individual components used in the LOES objective. The discussion builds directly on the theoretical results established in the preceding section, with particular emphasis on the role of spectral properties of representations in linear probing. Throughout, isotropy plays a central role as it directly governs the conditioning, stability, and predictability of ridge-based linear evaluation.

\subsubsection{Setting and Notation}

Let $\mathbf{X}_\ell \in \mathbb{R}^{N \times d_\ell}$ denote the centered embedding matrix extracted from layer $\ell$ on a calibration set of $N$ samples, and let $\mathbf{Y} \in \mathbb{R}^{N \times C}$ denote one-hot encoded targets. The objective is to select a subset of layers $S \subset \{1,\dots,L\}$ with $|S| = K$ such that the resulting representation subspace supports stable and complementary linear prediction.

At each iteration, candidate layers are evaluated using
\begin{equation}
\label{eq:loes_objective_final}
\mathrm{Score}(\ell)
=
\mathcal{L}_{\mathrm{res}}(\ell)
+
\alpha\bigl(1-\mathrm{Iso}(\mathbf{X}_\ell)\bigr)
+
\gamma\,\mathrm{Red}(\ell)
-
\eta\,\mathrm{Tri}(\ell),
\end{equation}
where each term corresponds to a distinct failure mode identified in the theoretical analysis.

\subsubsection{Residual Ridge Loss and Spectral Sensitivity}

For a feature matrix $\mathbf{X}$ and targets $\mathbf{Y}$, LOES relies on ridge regression,
\begin{equation}
\label{eq:ridge_core}
\hat{\mathbf{W}}
=
\arg\min_{\mathbf{W}}
\|\mathbf{X}\mathbf{W}-\mathbf{Y}\|_F^2
+
\lambda\|\mathbf{W}\|_F^2,
\qquad
\hat{\mathbf{W}}=(\mathbf{X}^\top\mathbf{X}+\lambda\mathbf{I})^{-1}\mathbf{X}^\top\mathbf{Y}.
\end{equation}

As shown in Theorem~\ref{thm:isotropy}, the expected error of the ridge estimator depends not only on the alignment between $\mathbf{Y}$ and the feature subspace, but also on the distribution of eigenvalues of $\mathbf{X}^\top\mathbf{X}$. In particular, highly anisotropic spectra lead to estimators whose behavior is dominated by a small number of principal directions, while directions associated with smaller eigenvalues contribute disproportionately to variance.

Let $\widehat{\mathbf{Y}}$ denote the cumulative prediction from already selected layers, and define the residual
\[
\mathbf{R} = \mathbf{Y} - \widehat{\mathbf{Y}}.
\]
The residual ridge loss $\mathcal{L}_{\mathrm{res}}(\ell)$ evaluates how effectively a candidate layer explains this residual signal. Importantly, residuals often emphasize directions with lower variance in the feature space, making their estimation particularly sensitive to anisotropy. This sensitivity motivates the inclusion of an explicit isotropy term alongside residual fitting.

\subsubsection{Orthogonalization, Redundancy, and Spectral Overlap}

Let $\mathbf{X}_S$ denote the concatenation of features from the selected layers. LOES evaluates each candidate layer after removing its projection onto the current subspace:
\begin{equation}
\label{eq:orth_final}
\widetilde{\mathbf{X}}_\ell
=
\mathbf{X}_\ell
-
\mathbf{X}_S
(\mathbf{X}_S^\top\mathbf{X}_S+\varepsilon\mathbf{I})^{-1}
\mathbf{X}_S^\top\mathbf{X}_\ell.
\end{equation}

This operation isolates directions that are not already represented. From a spectral perspective, it prevents repeated selection of layers whose dominant eigen-directions coincide with those already chosen. This is particularly important when individual layers are anisotropic, as their strongest directions may otherwise dominate the selection process despite offering little new information.

Residual redundancy is further quantified by
\begin{equation}
\label{eq:red_final}
\mathrm{Red}(\ell)
=
\max_{j\in S}
\frac{\|\mathbf{X}_\ell^\top\mathbf{X}_j\|_F}
{\|\mathbf{X}_\ell\|_F\|\mathbf{X}_j\|_F},
\end{equation}
which explicitly penalizes candidates whose feature directions remain strongly aligned with those already selected, even after orthogonalization.

\subsubsection{Isotropy and Conditioning of Linear Probes}

Let $\mathbf{\Sigma}_\ell=\frac{1}{N}\mathbf{X}_\ell^\top\mathbf{X}_\ell$ with eigenvalues $\{\mu_i\}$. LOES measures isotropy via
\begin{equation}
\label{eq:isotropy_final}
\mathrm{Iso}(\mathbf{X}_\ell)
=
\frac{\bar{\mu}}{\sqrt{\mathrm{Var}(\{\mu_i\})+\delta}},
\qquad
\bar{\mu}=\tfrac{1}{d_\ell}\sum_i\mu_i.
\end{equation}

This quantity decreases as the covariance spectrum becomes more uneven. As established in Theorem~\ref{thm:isotropy}, representations with larger spectral variance admit ridge estimators whose risk is more sensitive to the orientation of the target relative to dominant eigen-directions. Conversely, representations with more uniform spectra exhibit more consistent behavior across tasks and sample realizations.

In the context of LOES, isotropy does not serve as a proxy for task relevance. Instead, it acts as a regularizer on the selection process by favoring layers whose linear probes are better conditioned. This is particularly important in the greedy setting, where early selection of highly anisotropic layers can distort subsequent residual estimates and lead to unstable layer rankings.

\subsubsection{Class Geometry}

For classification tasks, LOES additionally evaluates the geometry of class centroids computed from $\widetilde{\mathbf{X}}_\ell$. Let $\boldsymbol{\mu}_c$ denote the centroid of class $c$. The class geometry term is
\begin{equation}
\label{eq:tri_final}
\mathrm{Tri}(\ell)
=
\mathbb{E}_{(a,b,c)}
\Big[
\tfrac{1}{2}
\sqrt{
\|\boldsymbol{\mu}_b-\boldsymbol{\mu}_a\|^2
\|\boldsymbol{\mu}_c-\boldsymbol{\mu}_a\|^2
-
\langle \boldsymbol{\mu}_b-\boldsymbol{\mu}_a,
\boldsymbol{\mu}_c-\boldsymbol{\mu}_a\rangle^2
}
\Big].
\end{equation}

This term penalizes representations in which class means collapse into low-dimensional affine configurations. Such collapse is often correlated with strong anisotropy, where only a small number of directions dominate both variance and class separation.

Overall, isotropy interacts with all other components of the LOES objective. It stabilizes ridge estimation, moderates the effects of residual fitting, reduces the influence of dominant but redundant directions, and supports more balanced class geometry. For these reasons, isotropy serves as a structural regularizer within LOES rather than a standalone selection criterion.

\subsection{Time Complexity Analysis of LOES}
\label{sec:loes_complexity}

We analyze the computational complexity of LOES, focusing on the calibration and layer-selection stages, which constitute the only additional overhead introduced beyond standard frozen-encoder transfer learning.

\paragraph{Notation.}
Let $L$ denote the total number of encoder layers, $d$ the embedding dimension per layer, $C$ the number of classes, $N_{\text{cal}}$ the number of calibration samples, and $K \ll L$ the number of layers selected by LOES. All encoder parameters remain frozen during calibration.

\paragraph{Calibration Feature Extraction.}
LOES first extracts intermediate representations from all encoder layers for $N_{\text{cal}}$ samples. This requires a single forward pass through the encoder per batch and incurs
\[
\mathcal{O}(N_{\text{cal}} \cdot L \cdot d)
\]
time and
\[
\mathcal{O}(L \cdot N_{\text{cal}} \cdot d)
\]
memory to store pooled layer representations. This cost is incurred once per dataset.

\paragraph{Per-Layer Scoring.}
Each layer is scored independently using a closed-form linear probe and a geometric regularization term. Solving the linear system dominates this step and requires
\[
\mathcal{O}(d^3 + N_{\text{cal}} \cdot d^2)
\]
per layer. Aggregated across all layers, the total cost is
\[
\mathcal{O}\big(L \cdot (d^3 + N_{\text{cal}} \cdot d^2)\big).
\]

\paragraph{Greedy Complementary Selection.}
LOES selects $K$ layers using a greedy procedure that evaluates candidate layers against the current selection. At each iteration, all remaining layers are considered, and scoring involves residual fitting, orthogonalization with respect to previously selected layers, and redundancy-aware geometric penalties. Each iteration incurs
\[
\mathcal{O}\big(L \cdot (d^3 + N_{\text{cal}} \cdot d^2)\big),
\]
leading to a total selection cost of
\[
\mathcal{O}\big(K \cdot L \cdot (d^3 + N_{\text{cal}} \cdot d^2)\big).
\]

\paragraph{Overall Complexity.}
Combining all stages, the total time complexity of LOES is
\[
\mathcal{O}\big(
N_{\text{cal}} \cdot L \cdot d
\;+\;
K \cdot L \cdot (d^3 + N_{\text{cal}} \cdot d^2)
\big),
\]
while the memory complexity is dominated by stored calibration features and scales as
\[
\mathcal{O}(L \cdot N_{\text{cal}} \cdot d).
\]

\paragraph{Practical Implications.}
In practice, LOES is computationally efficient because calibration sets are small, the number of selected layers is limited ($K \leq 4$ in all experiments), and all operations are performed on frozen representations. Consequently, the overhead of LOES is negligible compared to encoder pretraining or downstream fine-tuning and typically completes within seconds on a single GPU for standard vision and language encoders.

\subsection{GeoReg}
In addition to the layer selection procedure, some experiments incorporate a geometric regularization term during probe training to mildly constrain the geometry of the fused representation. This term, referred to as GeoReg in the implementation, is controlled by a single scalar hyperparameter that multiplies the entire regularization contribution. In all reported experiments, this weight is fixed to a small value of $0.1$ when enabled, and set to zero otherwise. The design choice reflects the intended role of the regularizer as a secondary bias rather than a competing objective. Concretely, GeoReg combines two effects computed on the current batch features: a spectral dispersion term given by the variance of the eigenvalues of the feature covariance matrix, and a class-level separation term based on the logarithm of the area of a simplex formed by randomly sampled class centroids. The spectral component penalizes highly anisotropic representations by increasing loss when variance concentrates along a few directions, while the centroid-based term encourages non-degenerate class configurations by favoring larger geometric separation.

The relative scaling of these components is fixed in the code, leaving the outer weight as the only tunable hyperparameter. Empirically, larger values were found to interfere with optimization of the primary cross-entropy objective, especially in low-data or high-class-count regimes, while smaller values had negligible effect. The chosen value therefore reflects a compromise that consistently biases training toward better-conditioned feature spaces without destabilizing learning. Importantly, the regularizer is applied only when the number of classes is sufficient to define meaningful centroid geometry, and it is automatically disabled otherwise. This conditional use ensures that GeoReg does not introduce spurious gradients in degenerate or low-class settings. Overall, the hyperparameterization mirrors the broader philosophy of the method: geometric criteria are used to gently shape representation structure, while predictive performance remains primarily driven by the ridge-regularized linear probe.

\subsection{GeoReg Computation Overhead}
GeoReg adds ~5.6\% per-batch overhead on DINOv2-S/14, Stanford Cars (trainable, K=3), dominated by eigenvalue decomposition of the D-dimensional covariance matrix (11ms for D=768, refer Table \ref{tab:georeg_overhead})

\begin{table}[h]
\centering
\begin{tabular}{lcc}
\hline
Component & Without GeoReg & With GeoReg \\
\hline
Forward  & 102.9 ms & 103.4 ms \\
Loss     & 0.3 ms   & 11.4 ms \\
Backward & 230.9 ms & 238.6 ms \\
Step     & 8.5 ms   & 8.5 ms \\
\hline
Total    & 342.7 ms & 361.8 ms (+5.6\%) \\
\hline
\end{tabular}
\caption{Per-batch computational overhead of GeoReg on DINOv2-S/14 (Stanford Cars, trainable, $k=3$). The overhead is modest (+5.6\%) and arises primarily from covariance eigendecomposition.}
\label{tab:georeg_overhead}
\end{table}

\subsection{Additional Implementation Details of Geometric Regularization}
\label{app:georeg_impl}

This subsection provides additional details on the geometric regularization term used during downstream adaptation, focusing on design and implementation choices that are not covered in the main text. The goal of this regularizer is to gently bias learned representations toward favorable linear geometry, consistent with the ridge-based analysis and layer selection criteria, without introducing task-specific constraints or significant optimization overhead.

The regularizer is applied to the fused representation produced after layer aggregation and optional adapter projection. Let $Z \in \mathbb{R}^{B \times d}$ denote the batch of fused features, where $B$ is the batch size. The geometric penalty is added to the task loss with a fixed scalar weight and is evaluated independently for each mini-batch. Importantly, the regularizer operates on batch-level statistics and does not require storing running estimates or dataset-level covariance statistics.

The isotropy component is computed using the empirical batch covariance
\[
\Sigma_Z = \frac{1}{B} (Z - \bar{Z})^\top (Z - \bar{Z}),
\]
where $\bar{Z}$ denotes the batch mean. Rather than normalizing the spectrum by trace or enforcing a hard constraint on eigenvalues, the implementation penalizes the variance of the eigenvalue spectrum. This choice directly targets spectral concentration while remaining numerically stable and differentiable. In practice, a small diagonal jitter is added to $\Sigma_Z$ to avoid numerical issues when batch size is small or features are nearly collinear. This formulation aligns with the theoretical analysis linking isotropy to reduced worst-case alignment bias in ridge regression, while avoiding additional normalization hyperparameters.

The class-geometry component is implemented using batch-level class centroids. When at least three distinct classes are present in a batch, three centroids are sampled uniformly at random and the area of the triangle they form is computed. This computation is intentionally stochastic and lightweight: only a single triplet is used per batch, and no attempt is made to enumerate or average over all possible triplets. The logarithm of the resulting area is incorporated into the loss with a negative sign, discouraging degenerate or nearly collinear class configurations. If fewer than three classes are present in a batch, this term is skipped entirely. This conditional behavior ensures that the regularizer does not introduce spurious gradients in low-class or imbalanced settings.

Both components are combined under a single weighting coefficient. Across all experiments, this coefficient is fixed to a small value and is not tuned per dataset or model. Empirically, larger values were observed to interfere with optimization of the primary task objective, while smaller values had negligible effect. The chosen value therefore reflects a conservative trade-off, where geometric structure is encouraged without dominating the training dynamics. Notably, even when this hyperparameter is suboptimal, improvements over the last-layer baseline are consistently observed, indicating that the method is not overly sensitive to precise tuning.

When the backbone encoder is frozen, gradients from the geometric regularizer affect only the probe and any adapter layers. When the encoder is trainable, the regularizer additionally stabilizes representation geometry during fine-tuning, mitigating collapse of variance into a small number of directions. For dense prediction and regression tasks, where class centroids are ill-defined or computationally expensive to estimate, the centroid-based term is disabled and only the isotropy component is retained.

\begin{algorithm}[tb]
\caption{LOES Layer Selection Algorithm (Classification)}
\label{alg:loes_selection}
\begin{algorithmic}
\STATE \textbf{Input:} Embeddings $\mathcal{E}=\{X_1,\dots,X_L\}$, labels $y$, budget $K$, parameters $\lambda,\alpha,\gamma,\eta$
\STATE \textbf{Output:} Selected layer indices $\mathcal{S}$
\STATE $Y \leftarrow \mathrm{OneHot}(y)$
\STATE $\mathcal{S} \leftarrow \emptyset$, \quad $X_{\mathcal{S}} \leftarrow \emptyset$, \quad $\widehat{Y} \leftarrow \mathbf{0}$
\STATE \COMMENT{--- Stage (i): Initial Selection (raw features, original targets) ---}
\STATE
$s^\star \leftarrow \arg\min_{l}
\Big[
\mathcal{L}_{\mathrm{ridge}}(X_l, Y)
+ \alpha\big(1-\mathrm{Iso}(X_l)\big)
\Big]$
\hfill \COMMENT{Eq.~\eqref{eq:ridge}, \eqref{eq:isotropy}}
\STATE
$\mathcal{S} \leftarrow \{s^\star\}$,
$X_{\mathcal{S}} \leftarrow X_{s^\star}$
\STATE
$\widehat{Y} \leftarrow \mathrm{RidgePred}(X_{s^\star}, Y)$
\hfill \COMMENT{refit on raw $X_{s^\star}$ vs $Y$}
\STATE
$R \leftarrow Y - \widehat{Y}$
\hfill \COMMENT{residual; see Eq.~\eqref{eq:score} discussion}
\STATE \COMMENT{--- Stage (ii): Greedy Complementary Selection ---}
\WHILE{$|\mathcal{S}| < K$}
    \STATE $v_{\text{best}} \leftarrow \infty$, \quad $s^\star \leftarrow \text{None}$
    \FOR{$l \in \{1,\dots,L\} \setminus \mathcal{S}$}
        \STATE \COMMENT{Orthogonalize $X_l$ w.r.t.\ selected context $X_{\mathcal{S}}$ -- Eq.~\eqref{eq:orthog}}
        \STATE
        $X_l^c \leftarrow X_l - \mu(X_l)$,\quad
        $X_{\mathcal{S}}^c \leftarrow X_{\mathcal{S}} - \mu(X_{\mathcal{S}})$
        \STATE
        $\widetilde{X}_l \leftarrow
        X_l^c -
        X_{\mathcal{S}}^c
        \big((X_{\mathcal{S}}^c)^\top X_{\mathcal{S}}^c + \epsilon I\big)^{-1}
        (X_{\mathcal{S}}^c)^\top X_l^c
        + \mu(X_l)$
        \STATE \COMMENT{Score components -- Eq.~\eqref{eq:score}}
        \STATE
        $\mathcal{L} \leftarrow \mathcal{L}_{\mathrm{ridge}}(\widetilde{X}_l, R)$
        \hfill \COMMENT{$\mathrm{Loss}_\ell$: marginal fit on residual}
        \STATE
        $\rho \leftarrow \max_{j \in \mathcal{S}}
        \dfrac{\|X_l^\top X_j\|_F}{\|X_l\|_F\|X_j\|_F}$
        \hfill \COMMENT{$\mathrm{Red}_\ell$, Eq.~\eqref{eq:redundancy}}
        \STATE
        $\tau \leftarrow \mathrm{TriGeo}(\widetilde{X}_l, y)$
        \hfill \COMMENT{$\mathrm{Tri}$, Eq.~\eqref{eq:tri}; $\eta{=}0$ if regression}
        \STATE
        $v \leftarrow
        \mathcal{L}
        + \alpha\big(1-\mathrm{Iso}(X_l)\big)
        + \gamma\,\rho
        - \eta\,\tau$
        \hfill \COMMENT{Eq.~\eqref{eq:score}}
        \IF{$v < v_{\text{best}}$}
            \STATE $v_{\text{best}} \leftarrow v$, \quad $s^\star \leftarrow l$
        \ENDIF
    \ENDFOR
    \IF{$s^\star = \text{None}$}
        \STATE \textbf{break}
    \ENDIF
    \STATE \COMMENT{--- Refit on raw features against original targets and update ensemble ---}
    \STATE
    $\widehat{Y} \leftarrow \widehat{Y} + \mathrm{RidgePred}(X_{s^\star}, Y)$
    \hfill \COMMENT{refit uses $(X_{s^\star}, Y)$, not $(\widetilde{X}_{s^\star}, R)$}
    \STATE
    $R \leftarrow Y - \widehat{Y}$
    \STATE
    $\mathcal{S} \leftarrow \mathcal{S} \cup \{s^\star\}$,
    \quad
    $X_{\mathcal{S}} \leftarrow [X_{\mathcal{S}}, X_{s^\star}]$
\ENDWHILE
\STATE \textbf{return} $\mathcal{S}$
\end{algorithmic}
\end{algorithm}

\clearpage
\subsection*{Datasets}

\begin{table*}[t]
\centering
\small
\setlength{\tabcolsep}{8pt}
\begin{tabular}{lllrl}
\toprule
\textbf{Task} & \textbf{Dataset} & \textbf{Modality} & \textbf{Samples} & \textbf{Split} \\
\midrule
\multicolumn{5}{c}{\textbf{List of Datasets Used}} \\
\midrule

\multirow{3}{*}{Audio Classification} 
& ASVspoof 2019 (LA) & Audio & 121,461 & Train/Dev/Test \\
& CREMA-D & Audio-Visual & 7,442 & Train/Test \\
& Google Speech Commands & Audio & 105,829 & Train/Val/Test \\

\midrule

\multirow{2}{*}{Segmentation} 
& COCO-Stuff 164K & Image & 164,000 & Train/Test \\
& Cityscapes & Image & 5,000 & Train/Val/Test \\
\midrule
\multirow{6}{*}{Image Classification}
& Stanford Cars (196) & Image & 16,185 & Train/Test \\
& Mini-ImageNet & Image & 60,000 & Train/Val/Test \\
& Stanford Dogs & Image & 20,580 & Train/Test \\
& CUB-200-2011 & Image & 11,788 & Train/Test \\
& CIFAR-100 & Image & 60,000 & Train/Test \\
& SUN397 & Image & 108,754 & Train/Test \\
& ImageNet & Image & 1,281,167 & Train/Val/Test \\
\midrule

\multirow{4}{*}{Text Classification} 
& MTOP Domain & Text & 15,667 & Train/Val/Test \\
& Twitter Emotion & Text & 20,000 & Train/Val/Test \\
& Amazon Counterfactual & Text & 4,000 & Train/Test \\
& MASSIVE Scenario & Text & 16,521 & Train/Val/Test \\
& MASSIVE Intent & Text & 16,521 & Train/Val/Test \\
& Tweet Sentiment Extraction & Text & 31,015 & Train/Test \\
\midrule

\multirow{3}{*}{Multimodal} 
& Amazon Products-2023 & Image/Text & 140,000 & Train/Val/Test \\
& Fashion-Gen & Image/Text & 67,306 & Train/Val \\
& Fakeddit & Multimodal & 140,000 & Train/Val/Test \\

\bottomrule
\end{tabular}
\caption{\textbf{Comprehensive overview of benchmark datasets across modalities.} This table details the tasks, modalities, and sample distributions for the datasets utilized in this study, including benchmarks from the MTEB suite and fine-grained vision tasks.}
\label{tab:dataset_specs}
\end{table*}

\noindent Below we provide detailed descriptions of the datasets (Appendix Table \ref{tab:dataset_specs}) utilized in our experiments, categorized by their primary modality.

\paragraph{Audio Classification Datasets}
\begin{itemize}
    \item \textbf{ASVspoof 2019 Logical Access (LA):} A standard benchmark derived from the VCTK corpus for detecting logical access attacks (Text-to-Speech and Voice Conversion) against speaker verification systems. It includes genuine speech alongside spoofing attacks generated by 19 different algorithms.
    \item \textbf{CREMA-D:} The Crowd-sourced Emotional Multimodal Actors Dataset features facial and vocal expressions from 91 actors of varying ages and ethnicities. It contains audio-visual clips of 12 fixed sentences spoken with six different emotions (Anger, Disgust, Fear, Happy, Neutral, Sad).
    \item \textbf{Google Speech Commands (V2):} A large-scale benchmark for Keyword Spotting (KWS) consisting of one-second audio clips of 35 specific spoken words. It is designed to train small-footprint models for on-device applications.
\end{itemize}

\paragraph{Segmentation Datasets}
\begin{itemize}
    \item \textbf{COCO-Stuff 164K:} A large-scale scene understanding dataset that extends COCO 2017. Unlike the standard COCO which focuses on "things" (countable objects), COCO-Stuff provides dense pixel-wise annotations for 172 classes, including 91 "stuff" classes (amorphous regions like sky, grass, and water).
    \item \textbf{Cityscapes:} A premier dataset for semantic urban scene understanding. It features high-quality pixel-level annotations of complex street scenes from 50 different cities, capturing diverse traffic situations and scene layouts.
\end{itemize}

\paragraph{Image Classification Datasets}
\begin{itemize}
    \item \textbf{Stanford Cars (Cars196):} A fine-grained classification benchmark containing 196 classes of vehicles defined by Make, Model, and Year (e.g., distinguishing a "2011 BMW M3" from a "2012 BMW M3").
    \item \textbf{Mini-ImageNet:} A subset of ILSVRC-12 designed for few-shot learning and rapid prototyping. It consists of 100 classes selected from the larger ImageNet dataset, typically resized to lower resolutions for meta-learning tasks.
    \item \textbf{Stanford Dogs:} A fine-grained dataset subset from ImageNet, challenging models to distinguish between 120 closely related dog breeds.
    \item \textbf{CUB-200-2011:} The Caltech-UCSD Birds dataset is a benchmark for fine-grained visual categorization. It contains 200 bird species and includes rich annotations such as bounding boxes, part locations, and binary attributes.
    \item \textbf{CIFAR-100:} A widely used benchmark containing low-resolution ($32\times32$) images across 100 classes. The classes are hierarchically organized into 20 superclasses, making it suitable for testing hierarchical learning.
    \item \textbf{SUN397:} A definitive benchmark for scene recognition rather than object classification. It covers 397 distinct scene categories (e.g., "Abbey," "Diner," "Forest") from the larger SUN Database.
    \item \textbf{ImageNet-1k:} A large-scale dataset for object recognition containing 1,000 object categories. It is a subset of the larger ImageNet database and follows the WordNet hierarchy, containing a variety of animal, plant, and object classes
\end{itemize}

\paragraph{Text Classification Datasets (MTEB)}
\begin{itemize}
    \item \textbf{MTOP Domain:} A task-oriented semantic parsing dataset used to classify user utterances into 11 specific action domains (e.g., Alarm, Messaging, Weather).
    \item \textbf{Twitter Emotion:} Also known as the MTEB Emotion dataset, this consists of English tweets labeled with six basic emotions to evaluate how well embeddings capture emotional nuance.
    \item \textbf{Amazon Counterfactual:} A dataset designed to challenge models in identifying counterfactual statements within product reviews, distinguishing between actual events and hypothetical scenarios.
    \item \textbf{Amazon MASSIVE Scenario:} A Natural Language Understanding task where utterances are classified into 18 broad functional categories .
    \item \textbf{Amazon MASSIVE Intent:} A more granular NLU task requiring the classification of user utterances into 60 specific action-based goals .
    \item \textbf{Tweet Sentiment extraction:}A Natural Language Processing task where, given a social media post and its overall sentiment (positive, negative, or neutral), the objective is to extract the specific word or phrase from the text that best reflects and supports that sentiment
\end{itemize}

\paragraph{Multimodal Datasets}
\begin{itemize}
    \item \textbf{Amazon Products-2023 (Amazon-M2):} A stratified sampling strategy was applied to obtain 140K products, covering all 248 categories with balanced
category sizes when possible and full inclusion of low-frequency categories.
    \item \textbf{Fashion-Gen:} A dataset for high-resolution text-to-image synthesis in the fashion domain, pairing professional images with stylist-written descriptions. We filtered for unique product IDs, resulting in 67,306 samples.
    \item \textbf{Fakeddit:} A massive multimodal fake news detection benchmark from Reddit. It pairs text posts with images and metadata to detect various types of misinformation. We randomly sampled 140K image-text pairs for our experiments.
\end{itemize}



\end{document}